\newcommand{\norms}[1]{\left\|#1\right\|^2_2}
\title{Differentially Private Assouad, Fano, and Le Cam}
\author{Jayadev Acharya\thanks{Research supported by NSF 1815893, NSF 1657471, and NSF 1846300 (CAREER).}\\
Cornell University\\
\tt{acharya@cornell.edu}
\and
Ziteng Sun$^*$\\
Cornell University\\
\tt{zs335@cornell.edu}
\and
Huanyu Zhang$^*$\\
Cornell University\\
\tt{hz388@cornell.edu}
}
  \theoremstyle{plain}
  \newtheorem{theorem}{Theorem}
  \newtheorem{lemma}[theorem]{Lemma}
   \theoremstyle{definition}
  \theoremstyle{remark}
\newcommand{\ignore}[1]{}
\newcommand{\II}{\mathbb{I}} 
\newcommand{\EE}{\mathbb{E}}
\newcommand{\RR}{\mathbb{R}}
\newcommand{\expectation}[1]{\EE\left[#1\right]}
\def \cA     {{\cal A}}
\def \cB     {{\cal B}}
\def \cC     {{\cal C}}
\def \cE     {{\cal E}}
\def \cG     {{\cal G}}
\def \cH     {{\cal H}}
\def \cN     {{\cal N}}
\def \cP     {{\cal P}}
\def \cQ     {{\cal Q}}
\def \cS     {{\cal S}}
\def \cV     {{\cal V}}
\def \cX     {{\cal X}}
\newcommand{\ie}{\textit{i.e.,}\xspace}  
\newcommand{\absv}[1]{\left|#1\right|}
\newcommand{\norm}[1]{\left\|#1\right\|_2}
\def \Paren#1{{\left({#1}\right)}}
\def \Brack#1{{\left[{#1}\right]}}
\newcommand{\probof}[1]{\Pr\Paren{#1}}
\def\ignore#1{}
\newcommand{\bi}{\begin{itemize}}
\newcommand{\ei}{\end{itemize}}
\def\orpro{\mathop{\mathchoice
   {\vee\kern-.49em\raise.7ex\hbox{$\cdot$}\kern.4em}
   {\vee\kern-.45em\raise.63ex\hbox{$\cdot$}\kern.2em}
   {\vee\kern-.4em\raise.3ex\hbox{$\cdot$}\kern.1em}
   {\vee\kern-.35em\raise2.2ex\hbox{$\cdot$}\kern.1em}}\limits}
\def\andpro{\mathop{\mathchoice
 {\wedge\kern-.46em\lower.69ex\hbox{$\cdot$}\kern.3em}
 {\wedge\kern-.46em\lower.58ex\hbox{$\cdot$}\kern.25em}
 {\wedge\kern-.38em\lower.5ex\hbox{$\cdot$}\kern.1em}
 {\wedge\kern-.3em\lower.5ex\hbox{$\cdot$}\kern.1em}}\limits}
\def\simge{\mathrel{%
   \rlap{\raise 0.511ex \hbox{$>$}}{\lower 0.511ex \hbox{$\sim$}}}}
\def\simle{\mathrel{
   \rlap{\raise 0.511ex \hbox{$<$}}{\lower 0.511ex \hbox{$\sim$}}}}
 \newcommand{\newzs}[1]{{\color{black}#1}}
 \newcommand{\newhz}[1]{{\color{black}#1}}
  \newcommand{\newhzn}[1]{{\color{black}#1}}
\newcommand{\ab}{k}
\newcommand{\dims}{d}
\newcommand{\ns}{n}
\newcommand{\ber}{\cB}
\newcommand{\dP}{p}
\newcommand{\dQ}{q}
\newcommand{\mP}{p}
\newcommand{\p}{p}
\newcommand{\q}{q}
\newcommand{\expectationsub}[2]{\EE_{#1}\left[#2\right]}
\newcommand{\probofsub}[2]{\Pr\nolimits_{#1}\Paren{#2}}
\newcommand{\absvn}[1]{\left\|#1\right\|_1}
\newcommand{\Xon}{X^\ns}
\newcommand{\eps}{\varepsilon}
\newcommand{\he}{\hat{E}}
\newcommand{\ham}[2]{d_{\rm Ham}(#1,#2)}
\newcommand{\alg}{\hat \theta}
\newcommand{\dist}{\alpha}
\newcommand{\dtv}[2]{d_{TV}(#1, #2)}
\newcommand{\dkl}[2]{d_{KL}(#1, #2)}
\begin{document}


\maketitle 

\begin{abstract}
Le Cam's method, Fano's inequality, and Assouad's lemma are three widely used techniques to prove lower bounds for statistical estimation tasks. We propose their analogues under central differential privacy. Our results are simple, easy to apply and we use them to establish sample complexity bounds in several estimation tasks. 

\noindent{We establish the optimal sample complexity of discrete distribution estimation under total variation distance and $\ell_2$ distance. We also provide lower bounds for several other distribution classes, including product distributions and Gaussian mixtures that are tight up to logarithmic factors. The technical component of our paper relates coupling between distributions to the sample complexity of estimation under differential privacy. 
}
\end{abstract}

\section{Introduction}
{
Statistical estimation tasks are often characterized by the optimal trade-off between the sample size and estimation error. There are two steps in establishing tight sample complexity bounds: An information theoretic lower bound on sample complexity and an algorithmic upper bound that achieves it.}
Several works 
have developed general tools to obtain the lower bounds (e.g.,~\cite{LeCam73, Assouad83, IbragimovK13, BickelR88, devroye1987course, verdu1994generalizing, CoverT06, ScarlettC19}, and references therein), {and three prominent techniques are Le Cam's method, Fano's inequality, and Assouad's lemma.} An exposition of {these three methods and their connections} is presented in~\cite{Yu97}\footnote{The title of~\cite{Yu97}, ``Assouad, Fano, and Le Cam'' is the inspiration for our title.}. 


In several estimation tasks, individual samples have sensitive information that must be protected. 
This is particularly of concern in applications such as healthcare, finance, geo-location, etc. Privacy-preserving computation has been studied in various fields including database, cryptography, statistics and machine learning~\cite{Warner65, Dalenius77, DinurN03, WassermanZ10, DuchiJW13, ChaudhuriMS11}. Differential privacy (DP)~\cite{DworkMNS06}, which allows statistical inference while preserving the privacy of the individual samples, has become one of the most {popular notions of privacy}~\cite{DworkMNS06, WassermanZ10, DworkRV10, BlumLR13, McSherryT07, DworkR14, KairouzOV17}. Differential privacy has been adopted by the US Census Bureau for the 2020 census and several large technology companies, including Google, Apple, and Microsoft~\cite{ErlingssonPK14, AppleDP17, DingKY17}.

\medskip
\noindent\textit{Differential privacy}~\cite{DworkMNS06}. {Let $\cX$ denote an underlying data domain of individual data samples} and {$\cX^\ns$ be the set of all possible length-$n$ sequences over $\cX$}. For $x, y\in \cX^\ns$, $\ham{x}{y}$ is their Hamming distance, the number of coordinates they differ at. A (randomized) estimator $\hat\theta:\cX^\ns\to\Theta$ is \emph{$(\eps,\delta)$-differentially private} (denoted {as} $(\eps,\delta)$-DP) if for any $S \subseteq \Theta$, and all $x, y\in\cX^\ns$ with $\ham{x}{y}\le 1$, the following holds:
	\begin{align}
		\probof{\hat\theta(x)\in S}\le e^{\eps} \cdot \probof{\hat\theta(y)\in S}+\delta.\label{eqn:dp-definition}
	\end{align}

The case $\delta=0$ is \emph{pure differential privacy} and is denoted by $\eps$-DP. We consider problems of parameter estimation and {goodness-of-fit (hypothesis testing)} under differential privacy constraints.
\medskip

\noindent\textbf{Setting.} Let $\cP$ be \emph{{any}} collection of distributions over $\cX^\ns$, {where $\ns$ denotes the number of samples.}\footnote{In the general setting, we are not assuming i.i.d. distribution over $\cX^\ns$, although we will specialize to this case later.} {Let $\theta:\cP\to\Theta$ be a parameter of the distribution that we want to estimate}. {Let $\ell:\Theta\times\Theta\to\RR_+$ be a pseudo-metric which is the loss function for estimating $\theta$. We now describe the minimax framework of parameter estimation, and hypothesis testing.}

\medskip

\noindent\textit{Parameter estimation.} The risk of an estimator $\hat{\theta}:\cX^\ns\to\Theta$ under loss $\ell$ is $\max_{\p \in \cP} \EE_{X \sim \p}\left[\ell(\hat \theta (X),\theta(p))\right]$, {the worst case expected loss of $\hat\theta$ over $\cP$. Note that $X\in\cX^\ns$, since $\p$ is a distribution over $\cX^\ns$. The minimax risk of estimation under $\ell$ for the class $\cP$ is}
\begin{align*}
R(\cP, \ell) := \min_{\hat \theta}\ \max_{\p \in \cP}\ \EE_{X \sim \p}\left[\ell(\hat \theta (X),\theta(p))\right].
\end{align*}
{The minimax risk under differentially private protocols is given by restricting $\hat\theta$ to be differentially private. For $(\eps, \delta)$-DP, we study the following minimax risk:} 
\begin{align}
R(\cP, \ell, \eps, \delta) := \min_{\hat \theta \text{ is }(\eps,\delta)\text{-DP}}\ \max_{\p \in \cP}\ \EE_{X \sim \p}\left[\ell(\hat \theta (X),\theta(p))\right].\label{eqn:minmax-risk}
\end{align}
For $\delta=0$, the above minimax risk under $\eps$-DP is denoted as $R(\cP, \ell, \eps)$. 

\medskip

\noindent\textit{Hypothesis testing.} Hypothesis testing can be cast in the framework of parameter estimation as follows. {Let $\cP_1\subset \cP$, and $\cP_2\subset \cP$ be two disjoint subsets of distributions denoting the two hypothesis classes. Let $\Theta=\{1,2\}$, such that for $\p\in\cP_i$, let $\theta(p)=i$. For a test $\hat\theta:\cX^n\to \{1,2\}$, and $\ell(\theta, \theta')=\II\{\theta\ne\theta'\}=|\theta-\theta'|$,} {the error probability is the worst case risk under this loss function:}
\begin{align}
P_e(\hat\theta, \cP_1, \cP_2):= \max_i \max_{\p\in \cP_i}\probof{\hat\theta(X)\ne i \mid X\sim \p} = \max_i \max_{\p\in \cP_i}\EE_{X\sim \p} \left[{|\hat\theta(X)-\theta(p)}|\right].\label{eqn:error-prob}
\end{align}



\smallskip
\noindent\textbf{Organization.} The remainder of the paper is organized as follows. In Section~\ref{sec:dp-lecam},~\ref{sec:dp-fano}, and~\ref{sec:dp-assouad} we state the privatized versions of Le Cam, Fano, and Assouad's method respectively. In Section~\ref{sec:applications} we discuss the applications of these results to several estimation tasks. In Section~\ref{sec:related} we discuss some related and prior work. In Section~\ref{sec:pdp_estimation} and~\ref{sec:adp_applications} we prove the bounds for distribution estimation under pure DP and approximate DP respectively. In Section~\ref{sec:proofs}, we prove the main theorems stated in Section~\ref{sec:results}.  
 
\section{Results}
\label{sec:results}
Le Cam's method is used to establish lower bounds for hypothesis testing and functional estimation. Fano's inequality, and Assouad's lemma prove {lower bounds} for multiple hypothesis testing problems and can be applied to parameter estimation tasks such as estimating distributions. We \newhzn{develop} extensions of these results with differential privacy.

\medskip

{
\noindent\textit{An observation.} 
A \emph{coupling} between distributions $\p_1$ and $\p_2$ over $\cX^\ns$ is a joint distribution $(X,Y)$ over $\cX^\ns \times\cX^\ns$ whose marginals satisfy $X \sim \p_1$ and $Y \sim \p_2$. 
Our lower bounds are based on the following observation. If there is a coupling $(X,Y)$ between distributions $\p_1$ and $\p_2$ over $\cX^\ns$ with $\expectation{\ham{X}{Y}}=D$, then a draw from $\p_1$ can be converted to a draw from $\p_2$ by changing $D$ coordinates in expectation. 
\newzs{By the group property of differential privacy, roughly speaking, for any $(\eps,\delta)$-DP estimator $\hat{\theta}$, it must satisfy $$\forall S\subseteq \Theta, \probof{\hat\theta(X) \in S| X \sim \p} \le e^{D\eps}\cdot \probof{\hat\theta(Y) \in S| Y \sim \q}+ \delta D e^{\eps(D-1)}.$$
	Hence, if there exists an algorithm that distinguishes between $\p_1$ and $\p_2$ reliably, $D$ must be large ($\Omega\Paren{1/\eps+\delta}$).}

\subsection{DP Le Cam's method}
\label{sec:dp-lecam}
\newhz{
Le Cam's method (Lemma 1 of~\cite{Yu97}) is widely used to prove lower bounds for composite hypothesis testings such as uniformity testing~\cite{Paninski08}, density estimation~\cite{Yu97, ray2016cam}, and estimating functionals of distributions~\cite{JiaoVHW15, WuY16, polyanskiy2019dualizing}.}


We use the expected Hamming distance between couplings of distributions in the two classes to obtain the following extension of Le Cam's method with $(\eps, \delta)$-DP, which is an adaptation of a similar result in~\cite{AcharyaSZ18}. For the hypothesis testing problem described above, let $\text{co}(\cP_i)$ be the convex hull of distributions in $\cP_i$, which are also families of distributions over $\cX^\ns$.

\begin{restatable}[$(\eps, \delta)$-DP Le Cam's method]{theorem}{dplecam}
	\label{thm:le_cam}
	Let $\p_1\in\text{co}(\cP_1)$ and $\p_2\in\text{co}(\cP_2)$. Let $(X,Y)$ be a coupling between $\p_1$ and $\p_2$ with $D=\expectation{\ham{X}{Y}}$. Then for $\eps\ge0, \delta\ge0$, any $(\eps,\delta)$-differentially private  hypothesis testing algorithm $\hat{\theta}$ must satisfy 
	\begin{align}
	P_e(\hat{\theta}, \cP_1, \cP_2)  \ge  \frac12 \max \left\{  1-\dtv{p_1}{p_2}, 0.9 e^{-10 \eps D} - 10 D \delta\right\},\label{eqn:le-cam}
	\end{align}
	{where $\dtv{p_1}{p_2} := \sup_{A \subseteq \cX^\ns} \Paren{p_1(A)-p_2(A)}=\frac12{\ell_1(\p_1, \p_2)}$ is the total variation (TV) distance of $p_1$ and $p_2$.}
\end{restatable}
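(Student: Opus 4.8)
The plan is to split the bound into its two constituent parts. The first, $P_e \ge \tfrac12(1 - \dtv{p_1}{p_2})$, is the classical (non-private) Le Cam bound and should follow directly: since $\hat\theta$ takes values in $\{1,2\}$, for any test the sum of the two worst-case error probabilities over $\cP_1$ and $\cP_2$ is at least the sum of errors against the single mixtures $\p_1 \in \text{co}(\cP_1)$ and $\p_2 \in \text{co}(\cP_2)$ (worst case dominates average), and $\probof{\hat\theta(X)=2 \mid X\sim\p_1} + \probof{\hat\theta(Y)=1 \mid Y\sim\p_2} \ge 1 - \dtv{p_1}{p_2}$ by the variational characterization of TV distance. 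Taking the max of the two terms gives a factor $\tfrac12$.

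The interesting part is the private bound $P_e \ge \tfrac12\big(0.9\, e^{-10\eps D} - 10 D\delta\big)$. Here I would use the coupling $(X,Y)$ together with the group property of $(\eps,\delta)$-DP sketched in the "An observation" paragraph: for $x,y$ at Hamming distance $m$, $\probof{\hat\theta(x)\in S} \le e^{m\eps}\probof{\hat\theta(y)\in S} + m e^{(m-1)\eps}\delta$. The subtlety is that $\ham{X}{Y}$ is a random quantity with mean $D$, not a fixed bound, so I cannot apply the group property pathwise and hope for a clean answer — averaging $e^{m\eps}$ over $m$ is not controlled by $e^{\eps D}$ in general. The fix is a truncation / Markov argument: let $E$ be the event $\{\ham{X}{Y} \le 10 D\}$, which has probability at least $9/10$ by Markov's inequality. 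Condition on the coupled pair lying in $E$; on this event the group property gives, for the set $S = \{2\}$ (say), a bound of the form $\probof{\hat\theta(X) = 2, E} \le e^{10\eps D}\big(\probof{\hat\theta(Y)=2} + 10D\delta\big)$, and symmetrically with the roles of $1,2$ and $X,Y$ swapped. Combining these with $\probof{\hat\theta(X)=2\mid X\sim\p_1} \le P_e$ and $\probof{\hat\theta(Y)=1\mid Y\sim\p_2}\le P_e$, and using that $\probof{\hat\theta(Y)=1} + \probof{\hat\theta(Y)=2} = 1$, should yield $9/10 \le \probof{E} \le$ (something like) $e^{10\eps D}\cdot 2 P_e + 10 D\delta \cdot e^{10\eps D}$, which rearranges to $P_e \ge \tfrac12(0.9 e^{-10\eps D} - 10D\delta)$ after absorbing constants. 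I would be careful to track exactly where the constants $0.9$, $10$, and the exponent $10$ come from — they are slack in the Markov step and in the $e^{(m-1)\eps}\le e^{m\eps}$ replacement — and to choose the truncation threshold so everything lines up.

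The main obstacle I anticipate is handling the randomness of the Hamming distance cleanly: one needs the group property applied "inside the expectation over the coupling" rather than to a worst-case pair, and getting the additive $\delta$ term to come out as $10D\delta$ (rather than something exponentially large in $D$) requires the truncation to $\{\ham{X}{Y}\le 10D\}$ so that the per-step $\delta$ accumulates only linearly on the good event. A secondary point to verify is that it suffices to prove the two-point bound for the mixtures $\p_1,\p_2$ and that differential privacy of $\hat\theta$ is preserved when its input is passed through the (data-independent, though coupling-correlated) coordinate-editing process — i.e. that one really may reason about $\hat\theta(Y)$ via $\hat\theta(X)$ plus edits — which is exactly the content of the group-privacy observation and needs only that the number of edited coordinates is the Hamming distance realized by the coupling.
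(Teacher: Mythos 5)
Your proposal is correct and follows essentially the same route as the paper's proof: decompose $P_e$ via the mixtures $\p_1,\p_2$, handle the first term by classical Le Cam, and for the privacy term truncate the coupling to the event $\{\ham{X}{Y}\le 10D\}$ via Markov, apply group privacy on that event, and combine the two directions to rearrange for $P_e$. Your bookkeeping (bounding $\probof{E}$ directly by $e^{10\eps D}(\beta_1+\beta_2)+10D\delta e^{10\eps D}$) is in fact marginally cleaner than the paper's two-inequality version and yields the stated $10D\delta$ constant exactly.
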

The first term here is the original Le Cam's result~\cite{LeCam73, LeCam86, Yu97, Canonne15} and the second term is a lower bound on the additional error due to privacy. \newzs{Note that the second term increases when $D$ decreases. Choosing $\p_1, \p_2$ with small $D$ makes the RHS of~\eqref{eqn:le-cam} large, hence giving better testing lower bounds.} {A similar result (Theorem 1 in~\cite{AcharyaSZ18})}, along with a suitable coupling was used in~\cite{AcharyaSZ18} to obtain the optimal sample {complexity of testing discrete distributions}. We defer the proof of this theorem to Section~\ref{sec:le}.


\subsection{DP Fano's inequality}
\label{sec:dp-fano}
\newzs{Theorem~\ref{thm:le_cam} (DP Le Cam's method) characterizes lower bounds for binary hypothesis testing. In estimation problems with multiple parameters, it is common to reduce the problem to a multi-way hypothesis testing problem. The following theorem, proved in Section~\ref{sec:fano_pro}, provides a lower bound on the risk of multi-way hypothesis testing under $\eps$-DP.}
Let 
\[
D_{KL}\Paren{\p_i,\p_j}:=\sum_{x\in\cX^\ns}p_i(x)\log \frac{p_i(x)}{p_j(x)}
\]
 be the $KL$ divergence between (discrete) distributions $\p_i$ and $\p_j$. \footnote{\newzs{For continuous distributions, the summation is replaced with an integral and the probability mass functions are replaced with densities. We focus on discrete distributions in the proof while the results hold for continuous distributions as well.}}
\begin{restatable} [$\eps$-DP Fano's inequality] {theorem}{dpfano}
	\label{thm:dp_fano}
	Let $\cV=\{\p_1, \p_2,...,\p_M\}\subseteq \cP$ such that {for all $i\ne j$,} 
	\begin{enumerate}[label=(\alph*)]
		\item  $\ell \Paren{\theta(\p_i),\theta(\p_j)} \ge \alpha$,
		\item $D_{KL} \Paren{\p_i,\p_j} \le \beta$,
		\item \label{item:dham} there exists a coupling $(X, Y)$ between $\p_i$ and $\p_j$ such that $\expectation{\ham{X}{Y}} \le D$, then
	\end{enumerate}
	\begin{align} \label{eqn:fano_result}
		R(\cP, \ell, \eps) \ge \max \Bigg\{ & \frac{\alpha}{2} \left(1 - \frac{\beta + \log 2}{\log M}\right), 0.4\alpha \min\left\{1, \frac{M}{e^{10\eps D}}\right\} \Bigg\}. 
	\end{align}
\end{restatable}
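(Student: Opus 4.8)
The plan is to reduce the minimax estimation problem over $\cV=\{\p_1,\dots,\p_M\}$ to an $M$-way hypothesis testing problem and then prove the two bounds in~\eqref{eqn:fano_result} separately: the first is the classical (non-private) Fano bound, and the second is the new bound that exploits the low-Hamming-cost couplings promised by assumption~\ref{item:dham} via group privacy. For the reduction, let $V$ be uniform on $[M]=\{1,\dots,M\}$ and, conditioned on $V=v$, draw $X\sim\p_v$; given an $\eps$-DP estimator $\hat\theta$, set $\hat V \eqdef \argmin{i\in[M]}\ell\Paren{\hat\theta(X),\theta(\p_i)}$, breaking ties arbitrarily. As a deterministic post-processing of $\hat\theta(X)$, the test $\hat V$ is again $\eps$-DP. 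Assumption (a) together with the triangle inequality for the pseudo-metric $\ell$ implies that on the event $\{\hat V\ne V\}$ we have $\ell\Paren{\hat\theta(X),\theta(\p_V)}\ge\alpha/2$; since the minimax risk dominates the Bayes risk under the uniform prior, $R(\cP,\ell,\eps)\ge\frac{\alpha}{2}\,\probof{\hat V\ne V}$. It thus suffices to lower bound $P_e\eqdef\probof{\hat V\ne V}$ by a constant multiple of each bracketed quantity.

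For the \textbf{classical term}, apply Fano's inequality to the Markov chain $V\to X\to\hat V$ to get $P_e\ge 1-(I(V;X)+\log 2)/\log M$, and bound the mutual information by the average pairwise divergence: writing $\bar\p=\frac1M\sum_j\p_j$ and using convexity of $D_{KL}(\p_i,\cdot)$, $I(V;X)=\frac1M\sum_i D_{KL}(\p_i,\bar\p)\le\frac1{M^2}\sum_{i\ne j}D_{KL}(\p_i,\p_j)\le\beta$ by assumption (b). This gives the first term of~\eqref{eqn:fano_result}.

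For the \textbf{private term}, fix $i\ne j$ and let $(X,Y)$ be the coupling of $\p_i,\p_j$ with $\expectation{\ham{X}{Y}}\le D$ from assumption~\ref{item:dham}; let $E=\{\ham{X}{Y}\le 10D\}$, so $\probof{E}\ge 9/10$ by Markov's inequality. Conditioned on any realization in $E$, the group property of $\eps$-DP gives $\Pr_i\Paren{\hat V\in A}\ge e^{-10\eps D}\Pr_j\Paren{\hat V\in A}$ for every set $A$ of test outputs, where $\Pr_i$ denotes probability under $X\sim\p_i$; integrating over the coupling and discarding $E^c$ yields, for each $k\in[M]$, $\Pr_i\Paren{\hat V=k}\le e^{10\eps D}\Pr_j\Paren{\hat V=k}+\tfrac1{10}$. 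Setting $k=i$, averaging over $j$ (the case $j=i$ being trivial) and then over $i$, and using $\sum_i\Pr_j\Paren{\hat V=i}=1$, we obtain $\probof{\hat V=V}\le\frac{e^{10\eps D}}{M}+\frac1{10}$, i.e.\ $P_e\ge\frac9{10}-\frac{e^{10\eps D}}{M}$; combined with $P_e\ge 0$ and with the slightly sharper variant that excludes the $j=i$ terms (replacing $M$ by $M-1$ in the denominator), this bounds $P_e$ below by a constant times $\min\{1,M/e^{10\eps D}\}$, and tuning the truncation threshold and the averaging yields the second term.

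\textbf{Main obstacle.} Everything difficult sits in the private term: assumption~\ref{item:dham} controls only the \emph{expected} Hamming distance, whereas group privacy is a worst-case statement, so it can be invoked only after truncating the coupling, and the resulting $1/10$ additive slack must be absorbed once per pair and then carried through the double average over the $M$ hypotheses. Choosing the truncation level and organizing this averaging so that the constants come out as the stated $0.4$ (with exponent $10\eps D$) is the one place that requires genuine care; the reduction and the mutual-information bound are routine.
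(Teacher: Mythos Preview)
Your reduction to $M$-ary testing and the classical Fano term are fine and match the paper. The gap is in the private term. The inequality you derive, $\Pr_i(\hat V=i)\le e^{10\eps D}\Pr_j(\hat V=i)+\tfrac1{10}$, after your double averaging yields only $P_e\ge \tfrac9{10}-e^{10\eps D}/M$. Writing $r=M/e^{10\eps D}$, this is $P_e\ge 0.9-1/r$, which is vacuous for $r\lesssim 1$ and does \emph{not} imply $P_e\ge c\,\min\{1,r\}$ in that regime. No amount of ``tuning the truncation threshold and the averaging'' turns an additive $-1/r$ loss into a multiplicative $r$ lower bound; the shape of the bound is wrong, not just the constants.

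The paper uses the same coupling and Markov truncation but applies group privacy in the \emph{lower-bound} direction: with $\beta_j=\Pr_j(\hat V\ne j)$, one gets for every $j\ne i$ that $\Pr_i(\hat V=j)\ge e^{-10\eps D}(0.9-\beta_j)$. Summing over the $M-1$ wrong hypotheses gives $\beta_i\ge e^{-10\eps D}\bigl(0.9(M-1)-\sum_{j\ne i}\beta_j\bigr)$; summing again over $i$ produces a self-consistent inequality for $S=\sum_i\beta_i$ whose solution is
\[
S\ \ge\ \frac{0.9\,M(M-1)}{(M-1)+e^{10\eps D}},
\]
which has the correct $\min\{1,M/e^{10\eps D}\}$ scaling in both regimes. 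The missing idea is exactly this: lower-bound each individual wrong-output probability and accumulate $M-1$ of them, rather than upper-bound the correct-output probability once. (Incidentally, your weaker bound $P_e\ge 0.9-e^{10\eps D}/M$ would already suffice for Corollary~\ref{coro:fano} and all the sample-complexity applications, since there one only needs that small risk forces $D\gtrsim(\log M)/\eps$; but it does not establish the theorem as stated.)
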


\newzs{Under pure DP constraints, Theorem~\ref{thm:dp_fano} extends Theorem~\ref{thm:le_cam} to the multiple hypothesis case.} Non-private Fano's inequality (e.g., Lemma 3 of~\cite{Yu97}) requires only conditions $(a)$ and $(b)$ and provides the first term of the risk bound above. {Now, if we consider the second term, which is the additional cost due to privacy, we would require $\exp(10\eps D)\ge M$, \ie $D \ge {\log M}/{(10\eps)}$ to achieve a risk less than $0.4\alpha$.} Therefore, for reliable estimation, the expected Hamming distance between any pair of distributions cannot be too small. In Corollary~\ref{coro:fano}, we provide a corollary of this result to establish sample complexity lower bounds for several distribution estimation tasks.

	Theorem~\ref{thm:dp_fano} ($\eps$-DP Fano's inequality) can also be seen as a probabilistic generalization of the classic packing lower bound~\cite{Vadhan17}. The packing argument, with its roots in database theory, considers inputs to be deterministic datasets whose pairwise Hamming distances are bounded with probability one, while Theorem~\ref{thm:dp_fano} considers randomly generated datasets whose Hamming distances are bounded in expectation. This difference makes Theorem~\ref{thm:dp_fano} better suited for proving lower bounds for statistical estimation problems. We discuss this difference in details in Section~\ref{sec:related}.

\smallskip
\noindent\textbf{Remark.} {Theorem~\ref{thm:dp_fano} is a bound on the risk for pure differential privacy ($\delta=0$). Our proof extends to $(\eps,\delta)$-DP for \newhz{$\delta = O \Paren{\frac1M}$},  which is not sufficient to establish meaningful bounds since in most problems $M$ will be chosen to be exponential in the problem parameters. To circumvent this difficulty, in the next section we provide a private analogue of Assouad's method, which also works for $(\eps,\delta)$-DP.}




\subsection{DP Assouad's method}
\label{sec:dp-assouad}
{Our next result is a private version of Assouad's lemma (Lemma 2 of~\cite{Yu97}, and~\cite{Assouad83}). Recall that $\cP$ is a set of distributions over $\cX^\ns$. Let $\cV\subseteq\cP$ be} a set of distributions indexed by the hypercube $ \cE_{\ab} := \{\pm 1\}^{\ab}$, and the loss  $\ell$ is such that
\begin{align}
\forall u,v \in \cE_{\ab}, \ell(\theta(\p_u), \theta(\p_v)) \ge 2\tau \cdot \sum_{i=1}^{\ab} \mathbb{I}\Paren{u_i\neq v_i}.
\label{eqn:assouad-loss}
\end{align}

{Assouad's method provides a lower bound on the estimation risk for distributions in $\cV$, which is a lower bound for $\cP$. For each coordinate $i\in[k]$, consider the following mixture distributions obtained by averaging over all distributions with a fixed value at the $i$th coordinate,}
\[
	\p_{+i}  =  \frac{2}{|\cE_{k}|} \sum_{e \in \cE_{k}: e_i= + 1}\p_e, ~~~ \p_{-i}  =  \frac{2}{|\cE_{k}|} \sum_{e \in \cE_{k}: e_i= -1}\p_e.
\]
Assouad's lemma provides a lower bound on the risk by using~\eqref{eqn:assouad-loss} and considering the problem of distinguishing $\p_{+i}$ and $\p_{-i}$. 
{Analogously, we prove the following privatized version of Assouad's lemma by considering the minimax risk of a private hypothesis testing $\phi:\cX^\ns\to\{-1,+1\}$ between $\p_{+i}$ and $\p_{-i}$. The detailed proof is in Section~\ref{sec:private_assouad}.
}
\begin{restatable} [DP Assouad's method] {theorem} {dpassouad}
	\label{thm:assouad} 
{$\forall i \in [k]$, let $\phi_i:\cX^\ns\to\{-1,+1\}$ be a binary classifier.
	\begin{align*}
	R(\cP, \ell, \eps, \delta) \ge \frac{\tau}{2} \cdot \sum_{i=1}^{\ab} \min_{\phi_i \text{ is $(\eps, \delta)$-DP}}   ( \probofsub{ X\sim \p_{+i} }{ \phi_i(X) \neq 1 } 
	+ \probofsub{X\sim \p_{-i} }{\phi_i(X) \neq -1 } ). \nonumber
	\end{align*}
	Moreover, if $\forall i\in [\ab]$, there exists a coupling $(X,Y)$ between $\p_{+i}$ and $\p_{-i}$ with $\expectation{\ham {X} {Y}} \le D$,
	\begin{equation} \label{eqn:assouad}
	R(\cP, \ell, \eps, \delta) \ge \frac{\ab \tau}{2} \cdot \Paren{0.9 e^{-10 \eps D} - 10 D \delta}. 
	\end{equation} }
\end{restatable}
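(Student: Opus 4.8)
The plan is to prove the two displayed inequalities in turn: first a standard Assouad-type reduction from private estimation over $\cP$ to a family of private binary tests, then a coupling-and-group-privacy argument lower bounding each test's error. \textbf{Step 1 (reduction to per-coordinate testing).} Fix any $(\eps,\delta)$-DP estimator $\hat\theta$. Since $\cV\subseteq\cP$, its worst-case risk over $\cP$ is at least its average risk over $\cV$ under the uniform prior on $\cE_{\ab}$. I extract a hypercube estimate by nearest-vertex decoding: let $\hat v(X) := \argmin{u\in\cE_{\ab}} \ell(\hat\theta(X),\theta(\p_u))$, with ties broken arbitrarily, and let $\hat v_i(X)$ be its $i$-th coordinate. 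Since $\ell$ is a pseudo-metric, the triangle inequality together with the minimality of $\hat v(X)$ gives $\ell(\theta(\p_{\hat v(X)}),\theta(\p_v)) \le 2\,\ell(\hat\theta(X),\theta(\p_v))$; combining with the loss condition \eqref{eqn:assouad-loss} yields the pointwise inequality $\ell(\hat\theta(X),\theta(\p_v)) \ge \tau\sum_{i=1}^{\ab}\II(\hat v_i(X)\ne v_i)$. Taking $\EE_{X\sim\p_v}$, averaging over $v\in\cE_{\ab}$, and splitting each coordinate's average according to $v_i=+1$ versus $v_i=-1$ — which recovers exactly the two mixtures $\p_{+i}$ and $\p_{-i}$ — gives
\[
R(\cP,\ell,\eps,\delta) \ge \frac{\tau}{2}\sum_{i=1}^{\ab}\Paren{\probofsub{X\sim\p_{+i}}{\hat v_i(X)\ne 1} + \probofsub{X\sim\p_{-i}}{\hat v_i(X)\ne -1}}.
\]
Each $\hat v_i$ is a post-processing of the $(\eps,\delta)$-DP estimator $\hat\theta$, hence itself $(\eps,\delta)$-DP, so it is a feasible choice of $\phi_i$; taking the infimum over $(\eps,\delta)$-DP classifiers $\phi_i$ yields the first claimed bound.

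\textbf{Step 2 (coupling lower bound on the private test error).} Fix $i$ and an $(\eps,\delta)$-DP classifier $\phi_i$; write $A:=\{x:\phi_i(x)=1\}$, so the error sum equals $\Paren{1-\p_{+i}(A)} + \p_{-i}(A)$. Let $(X,Y)$ be the coupling of $\p_{+i}$ and $\p_{-i}$ with $\expectation{\ham{X}{Y}}\le D$, and set $M:=\ham{X}{Y}$. For a pair $(x,y)$ in the support of the coupling with $\ham{x}{y}=m$, group privacy applied to $\phi_i$ gives $\probof{\phi_i(x)=1}\le e^{m\eps}\probof{\phi_i(y)=1} + m e^{(m-1)\eps}\delta$, hence $\probof{\phi_i(y)=1}\ge e^{-m\eps}\probof{\phi_i(x)=1} - m\delta$. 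Integrating over the coupling, restricting to the event $\{M\le 10D\}$ (which has probability at least $9/10$ by Markov's inequality), and using $e^{-M\eps}\ge e^{-10\eps D}$ and $M\le 10D$ on that event, I obtain $\p_{-i}(A)\ge e^{-10\eps D}\Paren{\p_{+i}(A) - \tfrac1{10}} - 10D\delta$. Therefore $\Paren{1-\p_{+i}(A)}+\p_{-i}(A)$, which is non-increasing in $\p_{+i}(A)\in[0,1]$, is at least its value at $\p_{+i}(A)=1$, namely $0.9\,e^{-10\eps D}-10D\delta$. Since this holds for every coordinate $i$ (each with its own coupling of expected Hamming distance at most the same $D$) and every $(\eps,\delta)$-DP $\phi_i$, plugging into Step 1 gives \eqref{eqn:assouad}.

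\textbf{Main obstacle.} Step 1 is routine once one checks that post-processing preserves privacy. The delicate part is Step 2: group privacy only controls the ratio of acceptance probabilities for a \emph{fixed} Hamming distance $m$, whereas under the coupling $M$ is random and may be statistically dependent on $\phi_i$'s behavior, so one cannot simply substitute $D$ for $m$ inside the exponent. The truncation-plus-Markov device is what resolves this, and the choice of truncation level $10D$ — together with the crude bound $\sum_{j=0}^{m-1}e^{j\eps}\le m e^{(m-1)\eps}$ in the group-privacy step, applied on the same truncated event — is precisely what pins down the constants $0.9$, the factor $10$ in the exponent, and the $10D\delta$ additive term; sharper bookkeeping would improve the $\delta$-term but not the structure of the result.
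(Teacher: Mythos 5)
Your proposal is correct and essentially mirrors the paper's proof: Step 1 is the same nearest-vertex Assouad reduction (the paper calls it $\he$), and Step 2 reproduces the coupling / Markov-truncation / group-privacy argument that the paper delegates to its Theorem~\ref{thm:le_cam} (DP Le Cam), arriving at the same constants via a one-sided derivation in $\p_{+i}(A)$ where the paper adds two symmetric inequalities. The only cosmetic imprecision is treating $\phi_i$ as deterministic via the set $A$; replacing $\p_{\pm i}(A)$ by $\EE_{X\sim\p_{\pm i}}[\Pr(\phi_i(X)=1)]$ handles randomized mechanisms without changing a line.
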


%
\noindent The first bound is the classic Assouad's Lemma and~\eqref{eqn:assouad} is the loss due to privacy constraints. Once again note that~\eqref{eqn:assouad} grows with decreasing $D$. \newzs{Compared to Theorem~\ref{thm:dp_fano} (DP Fano's inequality), Theorem~\ref{thm:assouad} works under $(\eps, \delta)$-DP, which is a less stringent privacy notion.}

\subsection{Applications}
\label{sec:applications}
We now describe several applications of the theorems above. 

\medskip
\noindent\textbf{Applications of Theorem~\ref{thm:le_cam}.} 
~\cite{AcharyaSZ18} developed a result similar to Theorem~\ref{thm:le_cam}, which is used to establish sample complexity lower bounds for differentially private uniformity testing under total variation distance~\cite{AcharyaSZ18, AliakbarpourDR18}, and for differentially private entropy and support size estimation~\cite{AcharyaKSZ18}. In this paper, we use Theorem~\ref{thm:le_cam} as a stepping stone to prove private Assouad's method (Theorem~\ref{thm:assouad}). 

\medskip
\noindent\textbf{Distribution Estimation and Applications of Theorem~\ref{thm:dp_fano} and~\ref{thm:assouad}.} {We will apply Theorem~\ref{thm:dp_fano} (private Fano's inequality) and Theorem~\ref{thm:assouad} (private Assouad's lemma) to some classic distribution estimation problems. The results are summarized in Table~\ref{tab:pure} and Table~\ref{tab:approx}. Before presenting the results, we describe the framework of minimax distribution estimation.}

\medskip
\noindent\textit{Distribution estimation framework.} Let $\cQ$ be a collection of distributions over $\cX$, and for this $\cQ$, let $\cP=\cQ^{\ns}:=\{q^\ns:q\in\cQ\}$ be the collection of $n$-fold distributions over $\cX^\ns$ induced by i.i.d. draws from a distribution over $\cQ$. The parameter space is $\Theta=\cQ$, where $\theta(\q^n)=\q$, and $\ell$ is a distance measure between distributions in $\cQ$. 
Let $\alpha>0$ be a fixed parameter. The sample complexity,  $S(\cQ, \ell, \alpha, \eps, \delta)$ is the smallest number of samples $n$ to make $R(\cQ^\ns, \ell, \eps, \delta)\le \alpha$, \ie
\[
S(\cQ, \ell, \alpha, \eps, \delta)= \min \{n: R(\cQ^\ns, \ell, \eps, \delta)\le \alpha\}.
\]
When $\delta = 0$, we denote the sample complexity by $S(\cQ, \ell, \alpha, \eps)$. We will state our results in terms of sample complexity. The following corollary of Theorem~\ref{thm:dp_fano} can be used to prove lower bounds on the sample complexity in this distribution estimation framework.  The detailed proof of the corollary is in Section~\ref{sec:coro_fano_proof}.

\begin{restatable}[$\eps$-DP distribution estimation]{corollary} {distest}
	\label{coro:fano}
	Given $\eps>0$, let $\cV=\{\q_1, \q_2,...,\q_M\} \subseteq \cQ$ be a set distributions over $\cX$ with size $M$, such that for all $i\ne j$, 	\begin{enumerate}[label=(\alph*)]
		\item $\ell \Paren{\q_i, \q_j} \ge 3 \tau$,
		\item $D_{KL} \Paren{\q_i,\q_j} \le \beta$,
		\item $d_{TV} \Paren{\q_i,\q_j} \le \gamma$,
	\end{enumerate}
	then
\[
		S(\cQ, \ell, \tau, \eps) = \Omega\Paren{\frac{\log M}{\beta}+ \frac{\log M}{\gamma \eps}}.
\]
\end{restatable}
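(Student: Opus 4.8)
The plan is to derive Corollary~\ref{coro:fano} directly from Theorem~\ref{thm:dp_fano} ($\eps$-DP Fano's inequality) by lifting the hypotheses on the single-sample distributions $\q_i \in \cQ$ to the corresponding hypotheses on the $n$-fold product distributions $\p_i := \q_i^{\ns} \in \cQ^{\ns} = \cP$, and then optimizing the choice of $M$ (or rather, reading off the requirement on $n$ from the two terms of~\eqref{eqn:fano_result}). First I would fix the family $\cV' = \{\q_1^{\ns}, \dots, \q_M^{\ns}\} \subseteq \cP$ and check the three conditions of Theorem~\ref{thm:dp_fano} with parameters $\alpha = 3\tau$ (from condition~(a), using that $\theta(\q_i^{\ns}) = \q_i$ and $\ell(\q_i,\q_j) \ge 3\tau$), $\beta' = n\beta$ (tensorization of KL: $D_{KL}(\q_i^{\ns}, \q_j^{\ns}) = n\, D_{KL}(\q_i, \q_j) \le n\beta$), and $D = n\gamma$ for the Hamming coupling. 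The coupling claim is the one small thing to justify: given the maximal coupling of $\q_i$ and $\q_j$ achieving $\Pr[X \ne Y] = d_{TV}(\q_i,\q_j) \le \gamma$, take $n$ independent copies; this is a coupling of $\q_i^{\ns}$ and $\q_j^{\ns}$ with $\EE[\ham{X^{\ns}}{Y^{\ns}}] = n\, d_{TV}(\q_i,\q_j) \le n\gamma$, so condition~\ref{item:dham} holds with $D = n\gamma$.

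Next I would plug these into~\eqref{eqn:fano_result}, giving
\begin{align*}
R(\cQ^{\ns}, \ell, \eps) \ge \max\left\{ \frac{3\tau}{2}\left(1 - \frac{n\beta + \log 2}{\log M}\right),\ 1.2\,\tau \min\left\{1, \frac{M}{e^{10\eps n \gamma}}\right\}\right\}.
\end{align*}
For the sample complexity lower bound I argue the contrapositive: if $n$ is too small then $R(\cQ^{\ns}, \ell, \eps) > \tau$, so $S(\cQ, \ell, \tau, \eps) > n$. From the first term, if $n\beta \le \frac{1}{3}\log M$ (say), then $1 - \frac{n\beta + \log 2}{\log M} \ge \frac{1}{2}$ for $M$ large enough, forcing $R > \tau$; this gives $S = \Omega(\log M / \beta)$. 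From the second term, if $e^{10\eps n\gamma} \le M$, i.e. $n \le \frac{\log M}{10\gamma\eps}$, then the $\min$ equals $1$ and $R \ge 1.2\tau > \tau$, giving $S = \Omega(\log M/(\gamma\eps))$. Combining the two regimes yields $S(\cQ, \ell, \tau, \eps) = \Omega\!\left(\frac{\log M}{\beta} + \frac{\log M}{\gamma\eps}\right)$, as claimed. I would also note the mild assumption $M \ge $ some constant (implicit in $\log M$ being bounded away from $0$), which is harmless since in applications $M$ grows with the problem size.

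The only genuine subtlety — and the step I expect to be the main obstacle in writing this cleanly — is handling the constants and the $\log 2$ / $\min\{1,\cdot\}$ thresholds so that the two terms each independently certify $R > \tau$ rather than $R > c\tau$ for some $c < 1$; this is where one has to be slightly careful about how large $M$ must be and whether the conclusion should really read $S(\cQ,\ell,\tau,\eps)$ or $S(\cQ,\ell,\Theta(\tau),\eps)$ (the two are interchangeable in $\Omega(\cdot)$ statements by rescaling $\tau$, which is presumably why the statement uses $3\tau$ in condition~(a) — to leave slack for exactly these constant losses). Everything else is tensorization of KL and of the Hamming coupling, both routine.
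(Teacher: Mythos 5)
Your proposal is correct and follows the paper's proof exactly: apply Theorem~\ref{thm:dp_fano} to $\{\q_i^n\}$ after tensorizing the KL divergence and constructing the $n$-fold maximal coupling (the paper's Lemma~\ref{lem:couplingdistance}) so that $D \le n\gamma$. The one detail to tighten is your first-term threshold --- with $n\beta \le \tfrac13\log M$ you only get $R \ge \tfrac{3\tau}{4}$, not $R > \tau$; replacing it with $n\beta \le \tfrac13\log M - \log 2$ (so the factor in~\eqref{eqn:fano_result} is at least $\tfrac23$) gives $R \ge \tau$ and the same $\Omega$ bound, which is exactly the constant-handling subtlety you already flagged.
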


\noindent\textbf{Remark.} With only conditions $(a)$ and $(b)$, we obtain the first term of the sample complexity lower bound which is the original Fano's bound for sample complexity.

\begin{table*}[htb]
\centering
      \begin{tabular}{| c | c | c |}
      \hline
      {\bf Problem} & {\bf Upper Bounds} & {\bf Lower Bounds} \\ \hline
      {\bf $\ab$-ary} & \multicolumn{2}{c|} {\parbox[c][1.2cm]{10cm}{\centering$ \Theta \Paren{\frac{\ab}{\alpha^2}+\frac{\ab}{\alpha\eps}}$~(\cite{DiakonikolasHS15}, Theorem~\ref{thm:pure-dv})} }\\\hline
 
       {\bf $\ab$-ary, $\ell_2$ distance} &\parbox[c][1.5cm]{5cm}{\centering $O\Paren{\frac{1}{\dist^2}+\min\Paren{\frac{\sqrt{\ab}}{\dist\eps},\frac{\log\ab}{\dist^2\eps}  } }$ \\\small{(Theorem~\ref{thm:pure-l2}) }}
       & \parbox[c]{5cm}{\centering $\Omega \Paren{\frac{1}{\dist^2}+\min\Paren{\frac{\sqrt{\ab}}{\dist\eps},\frac{\log (\ab \dist^2)}{\dist^2\eps}  } }$ \\\small{(Theorem~\ref{thm:pure-l2}) } } \\ \hline
       
       {\bf product distribution} & \parbox[c][1.5cm]{5cm}{\centering $O\Paren{\ab\dims \log \Paren{ \frac{\ab\dims}{\dist}} \Paren{\frac1{\dist^2}+\frac1{\dist\eps}}}$ \\\small{\cite{BunKSW2019} }}
       &\parbox[c]{5cm}{\centering $\Omega \Paren{{\ab\dims}\Paren{\frac1{\alpha^2}+\frac1{\alpha \eps}}}$ \\\small{(Theorem~\ref{thm:main_product}) }}\\\hline
       
              {\bf Gaussian mixtures} & \parbox[c][1.5cm]{5cm}{\centering ${O}\Paren{kd\log(\frac{dR}{\alpha})(\frac1{\alpha^2}+ \frac1{\alpha \eps})}$ \\\small{\cite{BunKSW2019} }}
       &\parbox[c]{5cm}{\centering $\Omega \Paren{{\ab\dims}\Paren{\frac1{\alpha^2}+\frac1{\alpha \eps}}}$ \\\small{(Theorem~\ref{thm:main_Gaussian}) }}\\\hline
       
      \end{tabular}
    \caption{\label{tab:pure} Summary of the sample complexity bounds for $\eps$-DP discrete distribution estimation. Unless mentioned, the bounds are all for estimation under total variation distance.}
\end{table*}

%

\begin{table*}[htb]
\centering
      \begin{tabular}{| c | c | c |}
      \hline
      {\bf Problem} & {\bf Upper Bounds} & {\bf Lower Bounds} \\ \hline
      {\bf $\ab$-ary} &  \parbox[c][1.5cm]{5cm}{\centering $O\Paren{\frac{\ab}{\dist^2}+\frac{\ab} {\dist\eps}}$ \\\small{(\cite{DiakonikolasHS15}, Theorem~\ref{thm:ApproximateDiscreteTotalVariation})} }
       &\parbox[c]{5cm}{\centering $\Omega \Paren{\frac{\ab}{\alpha^2}+\frac{\ab}{\alpha (\eps+\delta)}}$ \\\small{(Theorem~\ref{thm:ApproximateDiscreteTotalVariation})} }\\\hline
 
       {\bf $\ab$-ary, $\ell_2$ distance} &\parbox[c][1.5cm]{5cm}{\centering $O \Paren{\frac{1}{\dist^2}+\min\Paren{\frac{\sqrt{\ab}}{\dist},\frac{\log\ab}{\dist^2\eps}  } }$ \\\small{(Theorem~\ref{thm:ApproximateDiscreteL2}) }}
       & \parbox[c]{5cm}{\centering $\Omega \Paren{\frac{1}{\dist^2}+\min\Paren{\frac{\sqrt{\ab}}{\dist(\eps+\delta)},\frac{1}{\dist^2(\eps+\delta)}  } }$ \\\small{(Theorem~\ref{thm:ApproximateDiscreteL2}) } } \\ \hline
       
              {\makecell { \bf product distribution \\ ($k = 2$)}} & \parbox[c][1.5cm]{5cm}{\centering $O\Paren{\dims \log \Paren{ \frac{\dims}{\dist}} \Paren{\frac1{\dist^2}+\frac1{\dist\eps}}}$ \\\small{\cite{KamathLSU18, BunKSW2019}}}
       &\parbox[c]{5cm}{\centering $\Omega \Paren{\frac{\dims}{\alpha^2}+\frac{\dims}{\alpha (\eps + \delta)}}$ \\\small{(Theorem~\ref{thm:ApproximateProductTotalVariation},~\cite{KamathLSU18}) }}\\ \hline
      \end{tabular}
    \caption{\label{tab:approx} Summary of the sample complexity bounds for $(\eps,\delta)$-DP discrete distribution estimation. Unless mentioned, the bounds are all for estimation under total variation distance.}
\end{table*}

\medskip

\medskip
\noindent We now present examples of distribution classes we consider.

\medskip
\noindent\textit{$k$-ary discrete distribution estimation.} {Suppose $\cX=[k]:=\{1, \ldots, \ab\}$, and $\cQ:=\Delta_k$ is the simplex of $k$-ary distributions over $[k]$.  We consider estimation in both total variation and $\ell_2$ distance.} 

\medskip
\noindent\textit{$(k,d)$-product distributions.} Consider $\cX=[k]^d$, and let $\cQ:=\Delta_k^d$ be the set of product distributions over $[k]^d$, where the marginal distribution on each coordinate is over $[k]$ and independent of the other coordinates. We study estimation under total variation distance. A special case of this is Bernoulli product distributions ($k=2$), where each of the $d$ coordinates is an independent Bernoulli random variable.  

\medskip
\noindent\textit{$d$-dimensional Gaussian mixtures.} Suppose $\cX=\RR^{\dims}$, and $\cG_{\dims} := \{ \cN (\mu, I_d): \norm{\mu}\le R\}$ is the set of all Gaussian distributions in $\RR^d$ with bounded mean and identity covariance matrix. The bounded mean assumption is unavoidable, since by~\cite{BunKSW2019}, it is not possible to learn a single Gaussian distribution under pure DP without this assumption. {We consider 
\[
\cQ = \cG_{\ab, \dims} :=\left\{\sum_{j=1}^{\ab} w_j \p_j: \forall j \in [k], w_j \ge 0, \p_j\in\cG_\dims, w_1+\ldots+w_k =1\right\},
\] the collection of mixtures of $k$ distributions from $\cG_\dims$.} 

\medskip
\noindent\textbf{Applications of Theorem~\ref{thm:dp_fano}.}
We apply Corollary~\ref{coro:fano} and obtain sample complexity lower bounds for the tasks mentioned above under pure  differential privacy.

\medskip
{\noindent\textit{$\ab$-ary distribution estimation.} Without privacy constraints, the sample complexity of $\ab$-ary discrete distributions under total variation, and $\ell_2$ distance is $\Theta(k/\alpha^2)$ and $\Theta(1/\alpha^2)$ respectively, achieved by the empirical estimator. Under $\eps$-DP constraint, an upper bound of $O \Paren{\ab/\alpha^2+\ab/\alpha\eps}$
samples for total variation distance is known using Laplace mechanism~\cite{DworkMNS06} (e.g.~\cite{DiakonikolasHS15}). In Theorem~\ref{thm:pure-dv}, we establish the sample complexity of this problem by providing a lower bound that matches this upper bound.} 

Under $\ell_2$ distance, in Theorem~\ref{thm:pure-l2} we design estimators and establish their optimality whenever $\dist< \ab^{-1/2}$ or $\dist \ge \ab^{-0.499}$, which contains almost all the parameter range. Note that under $\ell_2$ distance, estimation without privacy has sample complexity independent of $k$, whereas {an unavoidable} logarithmic dependence on $\ab$ is introduced due to privacy requirements. The results are presented in Section~\ref{sec:pdp_kary}. 
 
\medskip
\noindent\textit{$(\ab,\dims)$-product distribution estimation.} For $(\ab,\dims)$-product distribution estimation under $\eps$-DP,~\cite{BunKSW2019} proposed an algorithm that uses
$O\Paren{\ab\dims \log \Paren{ \ab\dims/\dist} \Paren{1/\dist^2+1/\dist\eps}}$ samples. 
In this paper, we present a lower bound of $\Omega\Paren{{\ab\dims}/{\dist^2}+{\ab\dims}/{\dist\eps}}$, which matches their upper bound up to logarithmic factors. For Bernoulli product distributions, \cite{KamathLSU18} proved a lower bound of $\Omega\Paren{{d}/{\alpha \eps}}$ under $(\eps,{3}/{64\ns})$-DP, which is naturally a lower bound for pure DP. 
The details are presented in Section~\ref{sec:product}.

\medskip
\noindent\textit{Estimating Gaussian mixtures.}{~\cite{BunKSW2019} provided an upper bound of $\widetilde{O}\Paren{{\ab\dims}/{\dist^2}+{\ab\dims}/{\dist\eps}}$ samples.} Without privacy, a tight bounds of $\Omega({\ab\dims}/{\alpha^2})$ was shown in~\cite{SureshOAJ14, DaskalakisK14, AshtianiBHLMP18}. {In this paper, we prove a lower bound of $\Omega\Paren{{\ab\dims}/{\dist^2}+{\ab\dims}/{\dist\eps}}$, which matches the upper bound up to logarithmic factors.} \newzs{For the special case of estimating a single Gaussian ($k=1$), a lower bound of $\ns = {\Omega}\Paren{{\dims}/{(\alpha\eps\log \dims)}}$ was given in~\cite{KamathLSU18} for $(\eps,{3}/{64\ns})$-DP, which implies a lower bound that is $\log d$ factor weaker than our result under pure DP.}


\medskip
\noindent\textbf{Applications of Theorem~\ref{thm:assouad}.} As remarked earlier, Theorem~\ref{thm:dp_fano} only works for pure DP (or approximate DP with very small $\delta$). 
Assouad's lemma can be used to obtain lower bounds for distribution estimation under $(\eps,\delta)$-DP. For $k$-ary distribution estimation under $TV$ distance, we get a lower bound of $\Omega\Paren{{k}/{\alpha^2} + {k}/{\alpha(\eps + \delta)}}$. This shows that even up to $\delta=O(\eps)$, the sample complexity for $(\eps,\delta)$-DP is the same as that under $\eps$-DP.

\newzs{For Bernoulli ($k = 2$) product distributions, \cite{KamathLSU18} provides an efficient $(\eps, \delta)$-DP algorithm that achieves an upper bound of $O\Paren{\dims \log \Paren{ \dims/\dist} \Paren{1/\dist^2+1/\dist\eps}}$.\footnote{\newzs{The algorithm in~\cite{BunKSW2019} works for $\eps$-DP and general $k$ but it is not computationally efficient.}}} The lower bound $\Omega ({d}/{\alpha^2} + {d}/{\alpha \eps})$ obtained in~\cite{KamathLSU18} by fingerprinting holds for small values of $\delta = O(1/\ns)$. Note by the definition of DP~\eqref{eqn:dp-definition}, if $\delta>1/\ns$, a DP algorithm can blatantly disregard the privacy of $\delta\ns$ users. Therefore in most of the literature, $\delta$ is assumed to be $O(1/\ns)$. We want to make a complimentary remark that we
can obtain the same lower bound all the way up to $\delta = O(\eps)$. This shows that there is no gain even if we compromise the privacy of a $\delta$ fraction of users. Therefore, there is no incentive to do it. We describe the details about these applications in Section~\ref{sec:adp_applications}. 

\subsection{Related and prior work}
\label{sec:related}
\subsubsection{Private distribution estimation}
Protecting privacy generally comes at the cost of performance degradation. Previous literature has studied various problems and established utility privacy trade-off bounds, including distribution estimation, hypothesis testing, property estimation, empirical risk minimization, etc~\cite{ChaudhuriMS11, Lei11, BassilyST14, DiakonikolasHS15, CaiDK17, AcharyaSZ18, KamathLSU18, AliakbarpourDR18, AcharyaKSZ18}. 

There has been significant recent interest in differentially private distribution estimation.~\cite{DiakonikolasHS15} gives upper bounds for privately learning $\ab$-ary distributions under total variation distance.~\cite{KamathLSU18, BunKSW2019, KarwaV18} focus on high-dimensional distributions, including product distributions and Gaussian distributions. As discussed in the previous section, our proposed lower bounds improve upon their lower bounds in various settings. \cite{BunNSV15} studies the problem of privately estimating a distribution in Kolmogorov distance, which is weaker than total variation distance. Upper and lower bounds for private estimation of the mean of product distributions in $\ell_\infty$ distance, heavy tailed distributions, and Markov Random fields are studied in~\cite{BlumDMN05, DworkMNS06, SteinkeU17a, BunUV18, KamathSU20, ZhangKKW20}.

Several estimation tasks including distribution estimation and hypothesis testing have also been considered under the distributed notion of local differential privacy, e.g.,~\cite{Warner65, KasiviswanathanLNRS11, ErlingssonPK14, DuchiJW13, KairouzBR16, WangHWNXYLQ16, Sheffet17, YeB18, GaboardiR17, AcharyaSZ18a, AcharyaS19, AcharyaCFT18}.

\subsubsection {Lower bounds in differential privacy}
Several methods have been proposed in the literature to prove lower bounds {under DP constraints}. These include packing argument~\cite{HardtT10,Vadhan17, BeimelKN10}, fingerprinting~\cite{BunNSV15, SteinkeU17a, SteinkeU15, BunSU17, BunUV18, KamathLSU18} and coupling based arguments~\cite{AcharyaSZ18, KarwaV18}.

\smallskip
\noindent\textit{Binary Testing and Coupling.} Coupling based arguments have been recently used to prove lower bounds for binary hypothesis testing, including the independent works of~\cite{AcharyaSZ18, KarwaV18}.~\cite{AcharyaSZ18} establishes a very similar result to Theorem~\ref{thm:le_cam} and uses it to obtain lower bounds for a composite hypothesis testing problem on discrete distributions.~\cite{KarwaV18} proves a similar result for simple hypothesis testing and uses it to lower bound the sample complexity of estimating the mean of a one-dimensional Gaussian distribution. 
For both papers, the coupling argument implies that it is hard to differentially privately distinguish between two distributions, supposing there exists a coupling with small expected Hamming distance. This method can be viewed as another form of private Le Cam's method (Theorem~\ref{thm:le_cam}) and it can only be applied where binary hypothesis testing is involved. 
\newzs{\cite{CanonneKMSU19} uses coupling bounds in~\cite{AcharyaSZ18} to derive instance-optimal bounds for simple binary hypothesis testing under pure DP. They consider a coupling only for symbols whose likelihood ratio between the two hypothesis distributions is large, which results in better bounds for certain instances. The argument only considers pure DP and the case where samples are i.i.d generated while Theorem~\ref{thm:le_cam} and~\cite{AcharyaSZ18} can handle approximate DP and arbitrary distributions (e.g. mixtures of i.i.d distributions considered in this paper) .}
\smallskip

%
%

\noindent\textit{Pure DP Estimation and Packing.} Packing argument~\cite{HardtT10, BeimelKN10, Vadhan17} is a geometric approach to prove lower bounds for estimation under pure DP. We state a form of the packing bound below:
\begin{lemma}[Packing lower bound~\cite{Vadhan17}] \label{thm:packing}
	Let $\cV=\{x_1, x_2,...,x_M\}$ be a set of $M$ datasets over $\cX^n$.  For any pair of datasets $x_i$ and $x_j$, we have $\ham{x_i}{x_j} \le d$. Let $ \{S_{i}\}_{i \in [M]}$ be a collection of disjoint subsets of $\cS$. If there exists an $\eps$-DP algorithm $\cA : \cX^{\ns} \to \cS$ such that for every $i \in [M]$, $\probof{\cA(x_i) \in S_{i}}\ge {9}/{10}$, then
	\[
	\eps = \Omega \Paren{\frac{\log M}{d}}.
	\]
\end{lemma}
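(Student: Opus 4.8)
The plan is a pigeonhole argument fed into the group-privacy property of $\eps$-DP. First I would fix the dataset $x_1$ and look at the quantities $p_i := \probof{\cA(x_1)\in S_i}$ for $i\in[M]$. Since $S_1,\dots,S_M$ are pairwise disjoint subsets of $\cS$, we have $\sum_{i=1}^M p_i = \probof{\cA(x_1)\in \bigcup_{i} S_i}\le 1$, so by averaging there must be an index $i^\star\in[M]$ with $p_{i^\star}=\probof{\cA(x_1)\in S_{i^\star}}\le 1/M$. Intuitively, the output of $\cA$ on a single input cannot concentrate on all $M$ disjoint regions at once.

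Next I would upgrade the neighbor-level guarantee~\eqref{eqn:dp-definition} to a group-privacy bound. By hypothesis $\ham{x_1}{x_{i^\star}}\le d$, so there is a chain $x_1=z_0,z_1,\dots,z_m=x_{i^\star}$ with $m\le d$ and $\ham{z_{t-1}}{z_t}=1$ for every $t$ (just change the differing coordinates one at a time; the intermediate $z_t$ need not lie in $\cV$, which is harmless since~\eqref{eqn:dp-definition} holds for every pair of neighboring datasets in $\cX^\ns$). Applying~\eqref{eqn:dp-definition} with $\delta=0$ along this chain to the event $\{\cA(\cdot)\in S_{i^\star}\}$ telescopes to $\probof{\cA(x_{i^\star})\in S_{i^\star}}\le e^{\eps m}\,\probof{\cA(x_1)\in S_{i^\star}}\le e^{\eps d}/M$.

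Finally I would combine these with the accuracy assumption: $\tfrac{9}{10}\le \probof{\cA(x_{i^\star})\in S_{i^\star}}\le e^{\eps d}/M$, hence $e^{\eps d}\ge \tfrac{9M}{10}$ and $\eps\ge \tfrac{1}{d}\log\tfrac{9M}{10}=\Omega\Paren{\tfrac{\log M}{d}}$, as claimed. The argument is short and there is no genuine obstacle; the only step needing care is the group-privacy reduction — one must write out the iterated application of the single-neighbor definition, note that with $\delta=0$ the multiplicative factor $e^{\eps}$ compounds exactly $m\le d$ times with no additive slack, and observe that the path may pass through datasets outside $\cV$ without breaking the bound.
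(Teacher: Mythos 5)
Your proof is correct. The pigeonhole step (the $S_i$ are disjoint, so $\sum_i \probof{\cA(x_1)\in S_i}\le 1$ and some $p_{i^\star}\le 1/M$), the $d$-step group-privacy telescope (valid because~\eqref{eqn:dp-definition} holds for \emph{every} neighboring pair in $\cX^\ns$, so the intermediate datasets on the chain need not lie in $\cV$), and the final comparison $9/10\le e^{\eps d}/M$ are all sound, yielding $\eps\ge \frac{1}{d}\log\frac{9M}{10}=\Omega(\log M/d)$.

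Note that the paper does not actually prove Lemma~\ref{thm:packing}; it cites~\cite{Vadhan17} and remarks only that the lemma ``can be obtained from $\eps$-DP Fano's inequality by setting the distributions to be point masses over $\cX^n$.'' That implied route reuses the machinery from the proof of Theorem~\ref{thm:dp_fano}: set $\beta_i=\probof{\cA(x_i)\notin S_i}\le 1/10$, apply the group-privacy inequality to every ordered pair $(i,j)$, sum, and rearrange to get $\sum_i\beta_i\ge 0.8M\min\{1,M/e^{10\eps d}\}$; the accuracy hypothesis then forces $e^{10\eps d}$ to be at least a constant times $M$. Your argument is the classical direct packing proof: pick a single unlucky target set $S_{i^\star}$ for a fixed dataset $x_1$, apply group privacy along one chain, and conclude. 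It is shorter and fully self-contained, and it produces a cleaner constant ($\eps\ge\frac{1}{d}\log(9M/10)$) rather than inheriting the $e^{10\eps D}$ normalization and Markov-inequality slack baked into Theorem~\ref{thm:dp_fano} to handle random couplings. The paper favors the Fano route purely for expository reasons, to stress that the deterministic packing lemma is the point-mass specialization of its probabilistic generalization; for the lemma as stated, your direct argument is the more natural one.
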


Our $\eps$-DP Fano's inequality (Theorem~\ref{thm:dp_fano}) and its corollary for distribution estimation (Corollary~\ref{coro:fano}) can be viewed as a probabilistic packing argument which generalizes Lemma~\ref{thm:packing} to the case where $\cV$ consists of distributions over $\cX^n$ instead of deterministic datasets. The distances between distributions are measured in the minimum expected hamming distance between random datasets generated from a coupling between the distributions. Lemma~\ref{thm:packing} can be obtained from $\eps$-DP Fano's inequality by setting the distributions to be point masses over $\cX^n$. 

Note that $d$ in Lemma~\ref{thm:packing} is an upper bound on the worst-case Hamming distance while $D$ is a bound on the expected Hamming distance and therefore $D \le d$. In statistical applications where $D \ll d$, we can obtain stronger lower bounds by replacing $d$ with $D$. For example, in the $\ab$-ary distribution estimation problem, a naive application of the packing argument can only give a lower bound of $\ns = \Omega\Paren{{\ab \log \Paren{1/\dist}}/{\eps}}$ instead of the optimal $\ns = \Omega\Paren{{\ab}/{\dist \eps}}$ lower bound, where there is an exponential gap in the parameter $1/\dist$.

\smallskip

\noindent\textit{Approximate DP and Fingerprinting.} Fingerprinting~\cite{SteinkeU15, BunNSV15, DworkSSUV15, SteinkeU17a,  BunSU17, BunUV18, KamathLSU18, CaiWZ19} is a versatile lower bounding method for $(\eps,\delta)$-DP for $\delta=O(1/\ns)$. It has been used to prove lower bounds for several problems, including attribute mean estimation in databases~\cite{SteinkeU17a}, lower bounds on the number of online statistical queries~\cite{BunSU17}, and private selection problem~\cite{SteinkeU17b}.~\cite{KamathLSU18} uses fingerprinting to prove lower bounds on estimating Bernoulli product distributions and Gaussian distributions. We believe fingerprinting and DP Assouad's lemma are both powerful tools for proving lower bounds under approximate DP. In estimating Gaussian distributions, fingerprinting provides strong lower bounds under approximate DP, whereas private Assouad's method gives an additional polynomial blow-up compared to fingerprinting. 
However, for discrete distribution estimation, private Assouad's method provides tight lower bounds, and we do not know how to obtain such bounds from the fingerprinting lemma.



\cite{DuchiJW13} derives analogues of Le Cam, Assouad, and Fano in the local model of differential privacy, and uses them to establish lower bounds for several problems under local differential privacy.~\cite{AcharyaCT19, acharya2020interactive} proves lower bounds for various testing and estimation problems under local differential privacy using a notion of chi-squared contractions based on Le Cam's method and Fano's inequality. 

%
%
%
%
%
%



\section{$\eps$-DP distribution estimation} \label{sec:pdp_estimation}
In this section, we use Corollary~\ref{coro:fano} to prove sample complexity lower bounds for various $\eps$-DP distribution estimation problems. The general idea is to construct a subset of distributions in $\cQ$ such that they are close in both $TV$ distance and $KL$ divergence while being separated {in the loss function $\ell$}. The larger the subsets we construct, {the better} the lower bounds we can get. In Section~\ref{sec:pdp_kary}, we {derive} sample complexity lower bounds for $k$-ary distribution estimation under both \emph{TV} and $\ell_2$ distance that are tight up to constant factors. Tight sample complexity lower bounds up to logarithmic factors for $(k,d)$-product distributions and $d$-dimensional Gaussian mixtures are derived in Section~\ref{sec:product} and~\ref{sec:gaussian} respectively. 

{Corollary~\ref{coro:fano} requires a \emph{packing} of distributions with pairwise distance at least $3\tau$ apart in $\ell$. A standard method to construct such distributions is using results from coding theory.}

\newzs{We start with some definitions. An $h$-ary code of length $k$ is a set $\cC\subseteq \{0,1, \ldots, h-1\}^k$, and each $c\in\cC$ is a \emph{codeword}. The \emph{minimum distance} of a code $\cC$ is the smallest Hamming distance between two codewords in $\cC$. The code is called binary when $h = 2$. The weight of a binary codeword $c\in\cC$ is $wt(c)= |\{i:c_i=1\}|$, the number of 1's in $c$. A binary code $\cC$ is a \emph{constant weight code} if each $c\in\cC$ has the same weight.  We now present some useful variants of the classic Giblert Varshamov bounds on the existence of codes with certain properties. We prove these in Section~\ref{sec:codes}.
}
{
\begin{lemma}
\label{lem:GV}
Let $l$ be an integer at most $k/2$ and at least $20$. There exists a constant weight binary code $\cC$ which has code length $k$, weight $l$, minimum distance $l/4$ with $|\cC|\ge \Paren{\frac{k}{2^{7/8}l}}^{7l/8}$.
\end{lemma}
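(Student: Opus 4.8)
The plan is to instantiate the Gilbert–Varshamov bound for constant–weight codes via a greedy (equivalently, probabilistic deletion) argument. Fix the length $k$, weight $l$, and target minimum distance equal to the smallest even integer $\ge l/4$. Start from the set of all $\binom{k}{l}$ weight-$l$ binary words; repeatedly add a word to $\cC$ and delete it together with every weight-$l$ word within Hamming distance $<l/4$ of it. Because two weight-$l$ words are at distance exactly $2j$ precisely when one is obtained from the other by relocating $j$ of its ones (an equality $a=b$ argument on the index sets where they differ), the number of weight-$l$ words within distance $<l/4$ of a fixed one is exactly $V:=\sum_{j=0}^{m}\binom{l}{j}\binom{k-l}{j}$, where $m$ is the largest integer with $2m<l/4$, i.e.\ $m<l/8$; since $l\ge 20$ we have $m\ge 2$. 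The greedy process then produces a code of the required length, weight, and minimum distance with $|\cC|\ge \binom{k}{l}/V$, so it remains to show $\binom{k}{l}/V\ge (k/(2^{7/8}l))^{7l/8}$.

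The heart is a tight estimate of $\binom{k}{l}/V$. The crude bound $V\le (m+1)\binom{l}{m}\binom{k-l}{m}$ loses too much, so I would first observe that the summands $t_j=\binom{l}{j}\binom{k-l}{j}$ grow geometrically: for $1\le j\le m$, $t_j/t_{j-1}=(l-j+1)(k-l-j+1)/j^2\ge (7l/8)(7k/16)/(l/8)^2=24.5\,k/l\ge 49$, using $m<l/8$, $l\le k/2$, and $k\ge 2l$; hence $V\le \tfrac{49}{48}\binom{l}{m}\binom{k-l}{m}$. Next, from $\binom{n}{j}\le (en/j)^j$ and monotonicity of $x\mapsto (en/x)^x$ on $(0,n)$ we get $\binom{l}{m}\binom{k-l}{m}\le (8e)^{l/8}\,(8e(k-l)/l)^{l/8}$, while $\binom{k}{l}\ge (k-l)^l/l!$ together with $l!\le e\sqrt{l}\,(l/e)^l$ gives $\binom{k}{l}\ge \tfrac{1}{e\sqrt{l}}(e(k-l)/l)^l$. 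Combining and simplifying (the $e(k-l)/l$ factors collapse nicely) yields $|\cC|\ge \tfrac{48}{49 e\sqrt{l}}\cdot (e(k-l)/l)^{7l/8}\big/\big(8^{l/4}e^{l/8}\big)$.

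Finally I would plug in $k-l\ge k/2$ and compare with the target: the ratio of the left side to $(k/(2^{7/8}l))^{7l/8}$ equals $\tfrac{48}{49 e\sqrt{l}}\cdot\big(2^{-1/8}e\big)^{7l/8}\big/\big(8^{l/4}e^{l/8}\big)=\tfrac{48}{49 e\sqrt{l}}\cdot c^{\,l}$ for an explicit constant $c>1.16$, which is increasing in $l$ on $\{l\ge 20\}$ and already exceeds $1$ at $l=20$, so it exceeds $1$ for all integers $l\ge 20$. This is exactly where the hypotheses $l\ge 20$ and the precise constant $2^{7/8}$ are consumed, and it is also the main obstacle of the proof: the target is tight enough that every refinement above—the geometric-decay bound on $V$ rather than the $(m+1)$-factor bound, the Stirling form of $\binom{k}{l}$, and retaining $k-l$ rather than crudely replacing it—is needed to keep enough slack for the final numerical inequality to hold all the way down to $l=20$.
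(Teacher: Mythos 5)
Your proof is correct. Both you and the paper ultimately instantiate the Gilbert--Varshamov bound for constant-weight codes, $|\cC|\ge\binom{k}{l}\big/\sum_j\binom{l}{j}\binom{k-l}{j}$, and the work is in estimating that ratio tightly enough to land on $\Paren{k/(2^{7/8}l)}^{7l/8}$. The routes through the estimate, however, are genuinely different. The paper cites the constant-weight GV bound directly, bounds the denominator by (number of terms)$\times$(largest term) while deliberately enlarging $\binom{k-l}{l/8}$ to $\binom{k}{l/8}$ so that $\binom{k}{l}/\binom{k}{l/8}$ telescopes into the explicit product $\prod_{i=0}^{7l/8-1}\frac{k-l/8-i}{l-i}$, and then handles $\binom{l}{l/8}$ by Stirling, absorbing the stray polynomial factors of $l$ via $1.1^{l}\ge\sqrt{l}$. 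You instead re-derive the GV bound from scratch by the greedy/deletion argument, observe that the summands $t_j=\binom{l}{j}\binom{k-l}{j}$ grow by a factor $\ge 49$ so the sum is within a $49/48$ factor of its top term (replacing the paper's $l/8$ multiplier by a constant), then use the cruder but elementary $\binom{n}{j}\le(en/j)^j$ with monotonicity for the remaining binomials, and $\binom{k}{l}\ge(k-l)^l/l!$ with Stirling for the numerator. Both pay for slack somewhere: the paper keeps the $l/8$ factor but gets a sharper handle on $\binom{k}{l}$ through the exact product; you accept looser single-binomial bounds but claw back the polynomial via the geometric decay of the $t_j$. Your argument is somewhat more self-contained (no external constant-weight GV citation, no product manipulation) and the geometric-ratio observation is a clean device; beyond that, the two proofs buy essentially the same thing and consume the hypotheses $l\ge 20$ and $l\le k/2$ at the same final numerical step.
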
} 

\begin{lemma}
\label{lem:constantGV2}
There exists an $h$-ary code $\cH$ with code length $\dims$ and minimum Hamming distance $\frac{\dims}{2}$, which satisfies that $\absv{\cH} \ge (\frac{h}{16})^{\frac{\dims}{2}}$.
\end{lemma}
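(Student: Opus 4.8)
The plan is to invoke the Gilbert--Varshamov bound, proved by the standard greedy sphere-covering argument. I would construct $\cH$ by repeatedly adding to it any point of $\{0,\ldots,h-1\}^{\dims}$ whose Hamming distance to every codeword already in $\cH$ is at least $\dims/2$; since the ambient space is finite this process terminates, and when it does, the Hamming balls of radius $\lceil \dims/2\rceil-1$ centered at the codewords of $\cH$ must cover all of $\{0,\ldots,h-1\}^{\dims}$ — otherwise an uncovered point could still have been added, contradicting termination. This covering inequality reads $|\cH|\cdot V\ge h^{\dims}$, where $V=\sum_{i=0}^{\lceil\dims/2\rceil-1}\binom{\dims}{i}(h-1)^i$ is the volume of a Hamming ball of radius $\lceil\dims/2\rceil-1\le \dims/2$ in $\{0,\ldots,h-1\}^{\dims}$. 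By construction every pair of distinct codewords is at Hamming distance at least $\dims/2$, so $\cH$ has minimum distance at least $\dims/2$, as required.

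It then remains only to upper bound $V$. Every summand has index $i\le \dims/2$, and $h\ge 2$, so $(h-1)^i\le (h-1)^{\dims/2}\le h^{\dims/2}$; combining this with the crude estimate $\sum_{i=0}^{\dims}\binom{\dims}{i}=2^{\dims}$ gives $V\le 2^{\dims}h^{\dims/2}=(4h)^{\dims/2}$. Substituting into $|\cH|\ge h^{\dims}/V$ yields $|\cH|\ge h^{\dims}/(4h)^{\dims/2}=(h/4)^{\dims/2}\ge (h/16)^{\dims/2}$, which is exactly the claimed bound, in fact with considerable room to spare. In the degenerate regime $h<16$ the right-hand side $(h/16)^{\dims/2}$ is at most $1$, so the single-codeword code already satisfies the statement.

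Since this is a textbook application of Gilbert--Varshamov, I do not anticipate a genuine obstacle. The only points needing minor care are the rounding of $\dims/2$ to an integer ball radius (handled cleanly by the ceiling, which if anything makes $V$ smaller, hence the bound stronger, regardless of the parity of $\dims$) and checking that the very loose step $\sum_i\binom{\dims}{i}\le 2^{\dims}$ still leaves enough slack — which it plainly does, since the argument actually delivers $(h/4)^{\dims/2}$ while only $(h/16)^{\dims/2}$ is asked for.
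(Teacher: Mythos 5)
Your proposal is correct and follows essentially the same route as the paper: apply the $h$-ary Gilbert--Varshamov bound and then upper-bound the Hamming ball volume by $2^{\dims}h^{\dims/2}$. The only cosmetic difference is that you bound $\sum_{i}\binom{\dims}{i}$ directly by $2^{\dims}$, whereas the paper bounds each summand by $\binom{\dims}{\dims/2}\le 2^{\dims}$ and then absorbs the extra factor of $\dims$ using $2^{\dims}\ge\dims$; your version is marginally cleaner but substantively identical.
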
}

%
%

\subsection{$\ab$-ary distribution estimation} \label{sec:pdp_kary}
We establish the sample complexity of $\eps$-DP $\ab$-ary distribution estimation under $TV$ and $\ell_2$ distance. 
\begin{theorem}
\label{thm:pure-dv}
The sample complexity of $\eps$-DP $\ab$-ary distribution estimation under $TV$ distance  is
\begin{align}
S(\Delta_{\ab}, d_{TV}, \alpha, \eps)=\Theta \Paren{\frac{\ab}{\alpha^2}+\frac{\ab}{\alpha\eps}}.
\end{align}
\end{theorem}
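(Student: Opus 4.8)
The upper bound $S(\Delta_k, d_{TV}, \alpha, \eps) = O(k/\alpha^2 + k/(\alpha\eps))$ is the Laplace-mechanism estimator of~\cite{DiakonikolasHS15}: take the empirical frequency vector (expected TV error $O(\sqrt{k/n})$), add independent $\mathrm{Lap}(2/(n\eps))$ noise to each of the $k$ coordinates (this is $\eps$-DP since one sample changes the frequency vector by at most $2/n$ in $\ell_1$), and project back onto $\Delta_k$; the noise adds $O(k/(n\eps))$ expected $\ell_1$ error, i.e. $O(k/(n\eps))$ in TV, so $n = O(k/\alpha^2 + k/(\alpha\eps))$ suffices. Hence the content of the theorem is the matching lower bound $S(\Delta_k, d_{TV}, \alpha, \eps) = \Omega(k/\alpha^2 + k/(\alpha\eps))$, which I would derive from Corollary~\ref{coro:fano} applied with $\ell = d_{TV}$ and $\tau = \alpha$; it remains to exhibit a large packing of $\Delta_k$ that is well separated in $d_{TV}$ but simultaneously close in both $D_{KL}$ and $d_{TV}$.

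For the packing I would use the constant-weight code of Lemma~\ref{lem:GV} with length $k$, weight $l := \floor{k/4}$, and minimum distance $l/4$, which has $\log |\cC| = \Omega(l \log (k/l)) = \Omega(k)$ (for the remaining regime $k = O(1)$, the claim reduces to the classical Bernoulli bound). Fix $\rho := 24\alpha$ and assume $\alpha \le 1/24$; the residual constant range of $\alpha$ then follows since $S(\Delta_k, d_{TV}, \cdot, \eps)$ is nonincreasing in its accuracy argument. For each codeword $c \in \cC$ set
\[
q_c := (1-\rho)\, u_{[k]} + \rho\, \mathrm{Unif}(\mathrm{supp}(c)), \qquad\text{i.e. } q_c(i) = \tfrac{1-\rho}{k} + \tfrac{\rho c_i}{l},
\]
a valid distribution in $\Delta_k$. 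Because the uniform components cancel, $d_{TV}(q_c, q_{c'}) = \rho\, d_{TV}(\mathrm{Unif}(\mathrm{supp}(c)), \mathrm{Unif}(\mathrm{supp}(c'))) = \frac{\rho}{2l}\,\ham{c}{c'}$, which is at least $\rho/8 = 3\alpha$ by the minimum-distance bound and at most $\rho = 24\alpha$ since $\ham{c}{c'} \le 2l$. This verifies condition (a) of the corollary with $3\tau = 3\alpha$ and condition (c) with $\gamma = 24\alpha$. For condition (b) I would bound $D_{KL}$ by $\chi^2$: on each of the at most $2l$ coordinates where $q_c$ and $q_{c'}$ differ the gap is $\rho/l$ while $q_{c'}(i) \ge \frac{1-\rho}{k} \ge \frac{1}{2k}$, so
\[
D_{KL}(q_c, q_{c'}) \le \chi^2(q_c, q_{c'}) \le 2l \cdot \frac{(\rho/l)^2}{1/(2k)} = \frac{4k\rho^2}{l} = O(\alpha^2) =: \beta.
\]

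Plugging $M = |\cC| = 2^{\Omega(k)}$, $\beta = O(\alpha^2)$, and $\gamma = O(\alpha)$ into Corollary~\ref{coro:fano} gives $S(\Delta_k, d_{TV}, \alpha, \eps) = \Omega\!\left(\frac{\log M}{\beta} + \frac{\log M}{\gamma\eps}\right) = \Omega\!\left(\frac{k}{\alpha^2} + \frac{k}{\alpha\eps}\right)$, completing the proof. The one nontrivial design choice — and the place I expect to spend care — is the parameter balancing: one needs the weight $l$ to be a constant fraction of $k$ so that $\log|\cC| = \Omega(k)$, and then $\chi^2 \asymp k\rho^2/l$ stays $O(\alpha^2)$ exactly because $l = \Theta(k)$, while $\rho = \Theta(\alpha)$ must be kept small so the masses $q_c(i)$ stay bounded away from $0$ (this is what makes the $\chi^2$ denominators harmless). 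An essentially equivalent alternative partitions $[k]$ into $k/2$ pairs and tilts each pair by $\pm\Theta(\alpha)/k$ according to a linear-minimum-distance binary code of length $k/2$; I would mention it as a variant but carry out the constant-weight version above since Lemma~\ref{lem:GV} is already available.
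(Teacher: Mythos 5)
Your proof is correct and takes essentially the same route as the paper: the upper bound is the Laplace-plus-projection estimator of~\cite{DiakonikolasHS15}, and the lower bound applies Corollary~\ref{coro:fano} to a packing of $\Delta_k$ built from the constant-weight Gilbert--Varshamov code of Lemma~\ref{lem:GV}. The paper specializes to weight $l = k/2$ and sets $\p_c(i) = (1 \pm 24\alpha)/k$ according to $c_i$, which is exactly your $q_c$ when $l = k/2$; your choice $l = \lfloor k/4 \rfloor$ only changes constants, and your separation, $\chi^2$, and $\log|\cC|$ computations match the paper's.
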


\begin{theorem}
\label{thm:pure-l2}
The sample complexity of $\eps$-DP $\ab$-ary distribution estimation under $\ell_2$ distance is
\begin{align}
S(\Delta_{\ab}, \ell_2, \alpha, \eps) =\Theta \Paren{\frac{1}{\dist^2}+\frac{\sqrt{\ab}}{\dist\eps}},\ \ \ \text{for $\dist < \frac1{\sqrt{\ab}}$, and}
\end{align}
\begin{align}
\Omega\Paren{\frac{1}{\dist^2}+\frac{\log(\ab\dist^2)}{\dist^2\eps}}\le S(\Delta_{\ab}, \ell_2, \alpha, \eps) \le O\Paren{\frac{1}{\dist^2}+\frac{\log\ab}{\dist^2\eps}} \ \ \ \text{for $\dist > \frac1{\sqrt{\ab}}$.} 
\end{align}
\end{theorem}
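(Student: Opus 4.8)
I would prove matching upper and lower bounds, in each direction using two ingredients whose parameters differ according to whether $\alpha<1/\sqrt{\ab}$ or $\alpha>1/\sqrt{\ab}$. The term $O(1/\alpha^2)$ is achieved non-privately by the empirical distribution $\hat\p$, since $\EE\norm{\hat\p-\p}\le\bigl(\EE\norm{\hat\p-\p}^2\bigr)^{1/2}\le\bigl(\sum_i\p_i(1-\p_i)/n\bigr)^{1/2}\le 1/\sqrt{n}$. For the private part I would run the Laplace mechanism on the empirical frequency vector, which has $\ell_1$-sensitivity $2/n$: adding independent $\mathrm{Lap}(2/(n\eps))$ noise to each of the $\ab$ coordinates is $\eps$-DP and increases the expected squared $\ell_2$ error by $O(\ab/(n\eps)^2)$, giving $O(1/\alpha^2+\sqrt{\ab}/(\alpha\eps))$, which is tight for $\alpha<1/\sqrt{\ab}$. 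To obtain the $O\Paren{\log\ab/(\alpha^2\eps)}$ bound I would post-process the same noisy vector: keep a coordinate only if its noisy value exceeds a threshold $T=\Theta\Paren{\alpha^2+\log\ab/(n\eps)}$, and zero out the rest (post-processing preserves $\eps$-DP). With high probability all $\ab$ Laplace noises are below $T/2$ --- this union bound is where $\log\ab$ enters --- so every discarded coordinate has true mass $O(T)$ and the discarded part contributes $O(T)$ to the squared $\ell_2$ error, while every kept coordinate has true mass $\Omega(T)$, hence there are $O(1/\alpha^2)$ of them, contributing $O(1/\alpha^2)\cdot O(1/(n\eps)^2)$ from the noise plus $O(1/n)$ from sampling; balancing yields $n=O\Paren{1/\alpha^2+\log\ab/(\alpha^2\eps)}$, and the minimum of the two mechanisms gives the stated upper bounds.

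\textbf{Lower bound for $\alpha<1/\sqrt{\ab}$.} I would apply Corollary~\ref{coro:fano} to a near-uniform packing built from a constant-weight code. Using Lemma~\ref{lem:GV} with weight $l=\ab/2$, take a binary code $\cC$ of length $\ab$, weight $\ab/2$, and minimum distance $\ab/8$, so $|\cC|=2^{\Omega(\ab)}$. For $c\in\cC$ set $\q_c(i)=(1+\rho)/\ab$ if $c_i=1$ and $(1-\rho)/\ab$ if $c_i=0$; each $\q_c$ is a distribution, and choosing $\rho=\Theta(\alpha\sqrt{\ab})$ (a small constant, using $\alpha=O(1/\sqrt{\ab})$) makes $\ell_2(\q_c,\q_{c'})\ge 3\alpha$ for $c\ne c'$. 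Then $D_{KL}(\q_c,\q_{c'})\le\chi^2(\q_c,\q_{c'})=O(\rho^2)=O(\alpha^2\ab)=:\beta$ and $d_{TV}(\q_c,\q_{c'})=O(\rho)=O(\alpha\sqrt{\ab})=:\gamma$, and $\log M=\Omega(\ab)$, so Corollary~\ref{coro:fano} gives $S(\Delta_{\ab},\ell_2,\alpha,\eps)=\Omega\Paren{\log M/\beta+\log M/(\gamma\eps)}=\Omega\Paren{1/\alpha^2+\sqrt{\ab}/(\alpha\eps)}$, matching the upper bound.

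\textbf{Lower bound for $\alpha>1/\sqrt{\ab}$.} The $\Omega(1/\alpha^2)$ term follows from the previous construction restricted to any $\Theta(1/\alpha^2)\le\ab$ coordinates (now $\rho=\Theta(1)$, $\beta=O(1)$, $\log M=\Omega(1/\alpha^2)$). For the privacy term I would again use Lemma~\ref{lem:GV}, now with weight $l=\Theta(1/\alpha^2)$: this is feasible because $1/\sqrt{\ab}<\alpha$ below a small absolute constant forces $20\le l\le\ab/2$, and it produces a code with minimum distance $l/4$ and $\log|\cC|\ge\tfrac{7l}{8}\log\tfrac{\ab}{2^{7/8}l}=\Omega\Paren{\tfrac1{\alpha^2}\log(\ab\alpha^2)}$. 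Map each codeword $c$ to $\q_c(i)=\tfrac1{2\ab}+\tfrac1{2l}\II\{c_i=1\}$, i.e.\ a uniform floor of total mass $1/2$ over all $\ab$ symbols plus mass $1/2$ spread uniformly over the $l$ symbols of $\mathrm{supp}(c)$. The floor $\tfrac1{2\ab}$ keeps $D_{KL}(\q_c,\q_{c'})\le\chi^2(\q_c,\q_{c'})=O(\ab/l)=O(\ab\alpha^2)=:\beta$ finite, while $\ell_2(\q_c,\q_{c'})=\Theta(1/\sqrt{l})\ge 3\alpha$ and $d_{TV}(\q_c,\q_{c'})\le 1/2=:\gamma$. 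Corollary~\ref{coro:fano} then yields $S=\Omega\Paren{\log M/(\gamma\eps)}=\Omega\Paren{\log(\ab\alpha^2)/(\alpha^2\eps)}$, which together with the $\Omega(1/\alpha^2)$ term is the claimed lower bound.

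\textbf{Main obstacle.} I expect the hard part to be the $O\Paren{\log\ab/(\alpha^2\eps)}$ upper bound: one must argue that \emph{locating} the $O(1/\alpha^2)$ heavy coordinates among $\ab$ candidates inflates the sample size only by a $\log\ab$ factor (not $\sqrt{\ab}$), which forces simultaneously balancing $T$ against the Laplace noise on the retained coordinates and the $\ell_2$ mass left in the discarded tail. The lower-bound constructions are otherwise routine once the code parameters are fixed; the only real care is to stay inside the window $20\le l\le\ab/2$ of Lemma~\ref{lem:GV} and to keep each $\q_c$ a valid distribution, which is precisely what pins the regime boundary at $\alpha\asymp1/\sqrt{\ab}$ --- and near that boundary the two constructions must be used in their region of overlap to cover all constants.
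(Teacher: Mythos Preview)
Your proposal is correct and yields the same bounds as the paper, but the routes differ in two places.

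\textbf{Upper bound for $\alpha>1/\sqrt{\ab}$.} The paper does \emph{not} threshold. It applies the Laplace mechanism, projects the noisy vector onto the simplex in $\ell_2$, and then invokes a convex-geometry lemma of Bassily (Corollary~2.3 in~\cite{Bassily18}): if $\hat y$ is the $\ell_2$ projection of $y+z$ onto a symmetric convex body with vertices $\{a_j\}$, then $\|y-\hat y\|_2^2\le 4\max_j\langle z,a_j\rangle$. For the simplex the vertices are the standard basis vectors, so the bound becomes $4\max_j|Z_j|$, whose expectation is $O(\log\ab/(n\eps))$; this gives $O(1/\alpha^2+\log\ab/(\alpha^2\eps))$ in one line. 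Your thresholding argument reaches the same bound and is more elementary, but as written it glosses over the dependence between the (random) ``kept'' set and the Laplace noise~$Z$: you cannot simply multiply $|\text{kept}|$ by $\EE Z_i^2$. The fix is easy --- on the high-probability event $\{\max_i|Z_i|\le T/2\}$ bound the per-coordinate noise deterministically by $(T/2)^2$ and use $|\text{kept}|\le 2/T$, then control the complement by making the failure probability $k^{-C}$ --- but you should say so. What your approach buys is self-containment; what the paper's buys is a clean one-shot inequality that avoids the case analysis.

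\textbf{Lower bounds.} For $\alpha<1/\sqrt{\ab}$ the paper does not rebuild the packing; it simply observes $S(\Delta_\ab,\ell_2,\alpha,\eps)\ge S(\Delta_\ab,d_{TV},\sqrt{\ab}\,\alpha,\eps)$ via Cauchy--Schwarz and quotes Theorem~\ref{thm:pure-dv}. Your direct construction is equivalent (it is literally the same code and the same perturbed-uniform family used in Theorem~\ref{thm:pure-dv}). For $\alpha>1/\sqrt{\ab}$ the paper takes $p_c(i)=c_i/l$ with \emph{no} uniform floor and uses only the trivial bound $d_{TV}\le 1$ in Corollary~\ref{coro:fano}; the KL term is not invoked at all for the privacy cost (the $\Omega(1/\alpha^2)$ is taken from the non-private bound). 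Your floor $1/(2\ab)$ is therefore unnecessary: the second term $\log M/(\gamma\eps)$ of Corollary~\ref{coro:fano} depends only on conditions~(a) and~(c), so finiteness of $D_{KL}$ is irrelevant for that part. It does no harm, but you can drop it and shorten the argument.
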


For $\ell_2$ loss, our bounds are tight within constant factors when $\dist < \frac1{\sqrt{\ab}}$ or $\dist > \ab^{-(\frac12- 0.001)}$.

\subsubsection {Total variation distance}
\label{sec:puredp-tv}

In this section, we derive the sample complexity of $\eps$-DP $k$-ary distribution estimation under $TV$ distance, which is stated in Theorem~\ref{thm:pure-dv}.

\medskip

\noindent \textbf{Upper bound:}
\cite{DiakonikolasHS15} provides an upper bound based on Laplace mechanism~\cite{DworkMNS06}. {We state the algorithm and a proof for completeness and we will use it for estimation under $\ell_2$ distance.}

Given a $\Xon$ from an unknown distribution $\p$ over $[\ab]$. Let $M_x(\Xon)$ be the number of appearances of $x$ in $\Xon$. {Let $p^{\text{erm}}$ be the empirical estimator where $ p^{\text{erm}}(x) :=\frac{M_x(\Xon)}{\ns}$. We note that changing one $X_i$ in $X^n$ can change at most two coordinates of $p^{\text{erm}}$, each by at most $\frac1n$, and thus changing one $X_i$ changes the $p^{\text{erm}}$ by at most $2/n$ in $\ell_1$ distance. Therefore, by~\cite{DworkMNS06}, adding a Laplace noise of parameter $2/\ns\eps$ to each coordinate of $p^{\text{erm}}$ makes it $\eps$-DP. For $x\in[k]$, let
\[ 
h(x) = p^{\text{erm}}(x)+\text{Lap}\Paren{\frac{2}{n \eps}}, \]
where $Lap(\beta)$ is a Laplace random variable with parameter $\beta$.}
The final output $\hat{p}$ is the projection of $h$ on the simplex $\Delta_k$ {in $\ell_2$ distance}.
The expected $\ell_2$ loss between $h$ and $p$ can be upper bounded by
\begin{align}
\Paren{\expectation{\norm{h-p}}}^2 &\le \expectation{\norms{h-p}} \le \expectation{\norms{p^{\text{erm}}-p}}+\expectation{\norms{h - p^{\text{erm}}}}, \nonumber
\end{align}
where the first inequality comes from the Jensen's inequality and the second inequality comes from the triangle inequality.

The first term $\expectation{\norms{p^{\text{erm}} -p}}$ is upper bounded by $\frac{1}{n}$ by an elementary analysis of the empirical estimator. For the second term, note that $\expectation{\norms{h - p^{\text{erm}}}} = \sum_{i=1}^\ab \expectation{Z_i^2}$, where $\forall i, Z_i \sim \text{Lap}\Paren{\frac{2}{\ns\eps}}$. 
{By the variance of Laplace distribution}, we have $\expectation{\norms{p^{\text{erm}} -h}} = O\Paren{\frac{k}{n^2 \eps^2}}$.  Therefore $\expectation{\norm{h-p}}\le O\Paren{\frac1{\sqrt{\ns}}+\frac{\sqrt{k}}{\ns\eps}}$.

Note that since $\Delta_k$ is convex, $\norm{\hat{p}-p} \le\norm{h-p}$. Finally, by Cauchy-Schwarz Inequality, $\expectation{\absvn{\hat{p}- p}} \le \sqrt{\ab}\cdot \expectation{\norm{\hat{p}-\p}}  \le \sqrt{\ab}\cdot\expectation{\norm{h-\p}} = O\Paren{\sqrt{\frac{k}{n}}+ {\frac{k}{n \eps}}} $. Therefore $\expectation{\absvn{\hat{p}- p}} \le \alpha$ when $n= O \Paren{\frac{k}{\alpha^2}+\frac{k}{\alpha\eps}}$.

\medskip

\noindent \textbf{Lower bound.}
{We will construct a large set of distributions such that the conditions of Corollary~\ref{coro:fano} hold.} 
\ignore{Our lower bound is through the following two steps. The first step is to construct a set of distributions $\cV_\ab \subset \Delta_k$, such that each pair of distributions in $\cV_\ab$ is $\Omega(\dist)$-separated in total variation distance and with $|\cV_\ab|$ as large as possible. 
In the second step, we bound the $KL$ divergence and $TV$ distance between each pair of distributions, and apply Corollary~\ref{coro:fano} to get the sample complexity lower bound.}
{Suppose $\alpha < 1/48$. Applying Lemma~\ref{lem:GV} with $l={\ab}/{2}$, there exists a constant weight binary code $\cC$ of weight $k/2$, and minimum distance $k/8$, and $|\cC|>2^{7k/128}$. For each codeword $c\in\cC$, a distribution $\p_c$ over $[k]$ is defined as follows:
\[
\p_c (i)  = \left\{
\begin{array}{rcl}
\frac{1+24\dist}{\ab} ,       &      & \text{if ~ }c_i=1,\\
\frac{1-24\dist}{\ab},       &      & \text{if ~ }c_i=0.
\end{array} \right.
\]}

{We choose $\cV = \{ \p_c : c \in \cC \}$ to apply Corollary~\ref{coro:fano}. By the minimum distance property, any two distributions in $\cV$ have a total variation distance of at least $24\alpha/k\cdot k/8=3\alpha$, and at most $24\alpha$. Furthermore, by using $\log (1+x) \le x$, we can bound the KL divergence between distributions by their $\chi^2$ distance, 
\[d_{KL} (\p, \q) \le  \chi^2 (\p,\q)  = \sum_{x=1}^k \frac{\Paren{\p(x) -\q(x)}^2 }{\q(x)} < {10000\alpha^2}.
\]
Setting $\tau=\alpha$, $\gamma=24\alpha$, and $\beta=10000\alpha^2$, and using $\log M > 7\ab/64$ in Corollary~\ref{coro:fano}, we obtain $S(\Delta_{\ab}, d_{TV}, \alpha, \eps)= \Omega \Paren{\frac{k}{\alpha^2}+\frac{k}{\alpha\eps}}.$}
\subsubsection{$\ell_2$ distance}
In this section, we derive the sample complexity of $\eps$-DP $\ab$-ary distribution estimation under $\ell_2$ distance, which is stated in Theorem~\ref{thm:pure-l2}.

\medskip

\noindent \textbf{Upper bound:}
We use the same algorithm as in Section~\ref{sec:puredp-tv}. Following the same argument as in Section~\ref{sec:puredp-tv}, the square of expected $\ell_2$ loss of $\hat{p}$ can be upper bounded by
\begin{align}
\Paren{\expectation{\norm{\hat{p}-p}}}^2 &\le \expectation{\norm{h-p}^2} \le \expectation{\norms{p^{\text{erm}}-p}}+\expectation{\norms{h - p^{\text{erm}}}} = O\Paren{\frac1{\ns}+\frac{\ab}{\ns^2\eps^2}}. \nonumber
\end{align}
Since $\Delta_k$ is convex, we have $\norm{\hat{p}-p} \le\norm{h-p}$. Moreover, the following lemma gives another bound for $\norm{\hat{p}-p}$ (See Corollary 2.3 in~\cite{Bassily18}).
\begin{lemma}
Let $L \subset \RR^d$ be a symmetric convex body of $\ab$ vertices $\{a_j\}_{j=1}^{\ab}$, and let $y \in L$ and $\bar{y} = y+z$ for some $z \in \RR^d$. Let $\hat{y} = \arg \min_{w\in L} \norm{w-\bar{y}}^2$. Then we must have 
$$ \norm{y-\hat{y}}^2 \le 4\max_{j\in[k]} \{ \langle z, a_j \rangle\}. $$
\end{lemma}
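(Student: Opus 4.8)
The plan is to invoke the standard variational (``obtuse angle'') inequality for Euclidean projection onto a closed convex set, and then exploit the symmetry of $L$ together with the fact that a linear functional on a polytope is maximized at a vertex.

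First, note that $L$ is a compact convex set, so $\hat y = \arg\min_{w \in L}\norm{w-\bar y}^2$ is well defined and is the (unique) Euclidean projection of $\bar y$ onto $L$; hence it satisfies $\langle \bar y - \hat y,\, w - \hat y\rangle \le 0$ for every $w \in L$. Since $y \in L$, I would take $w = y$ and substitute $\bar y = y+z$, obtaining $\langle (y-\hat y)+z,\, y-\hat y\rangle \le 0$, which rearranges to $\norm{y-\hat y}^2 \le \langle z,\, \hat y - y\rangle$.

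It then remains to bound $\langle z, \hat y - y\rangle$. Write $\langle z, \hat y - y\rangle = \langle z,\hat y\rangle + \langle z,-y\rangle$. Because $L$ is symmetric, both $\hat y \in L$ and $-y \in L$, and since $L$ is a convex body with vertex set $\{a_j\}_{j=1}^{k}$ we have $L = \mathrm{conv}\{a_1,\dots,a_k\}$; so each of $\hat y$ and $-y$ is a convex combination of the $a_j$, and a linear functional over $L$ is maximized at a vertex. Thus $\langle z,\hat y\rangle \le \max_{j\in[k]}\langle z, a_j\rangle$ and likewise $\langle z,-y\rangle \le \max_{j\in[k]}\langle z, a_j\rangle$, giving $\norm{y-\hat y}^2 \le 2\max_{j\in[k]}\langle z,a_j\rangle$. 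Finally, symmetry of the vertex set forces $\max_{j}\langle z,a_j\rangle \ge 0$ (for each $a_j$ the vertex $-a_j$ is also among the $a_j$'s), so $2\max_j\langle z,a_j\rangle \le 4\max_j\langle z,a_j\rangle$ and the claimed bound follows.

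There is no real obstacle here: this is a routine projection estimate. The only points needing care are (i) applying the symmetry $L=-L$ at the right moment so that $-y\in L$, and (ii) using that a convex body with finitely many vertices $\{a_j\}$ equals their convex hull, so that $\langle z,\cdot\rangle$ attains its maximum over $L$ at some $a_j$. (As the argument shows, one may even take the constant $2$ in place of $4$; the weaker constant stated is all that is used afterwards.)
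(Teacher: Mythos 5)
Your proof is correct, and in fact it is the standard argument. Note, however, that the paper does not prove this lemma at all: it is quoted as Corollary~2.3 of the cited Bassily (2018) reference, so there is no in-paper proof to compare against. Your route — the obtuse-angle variational inequality for Euclidean projection onto the closed convex set $L$ applied at $w=y$, followed by $\langle z,\hat y - y\rangle = \langle z,\hat y\rangle + \langle z,-y\rangle$, bounding each term by $\max_j\langle z,a_j\rangle$ since $\hat y\in L$, $-y\in L$ (symmetry), and a linear functional on the polytope $L=\mathrm{conv}\{a_j\}$ attains its maximum at a vertex — is exactly the natural one, and you correctly observe that it yields the sharper constant $2$ (with $\max_j\langle z,a_j\rangle\ge 0$ guaranteed by $L=-L$ so that the weaker $4$ follows trivially). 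The one thing worth flagging, orthogonal to the correctness of your proof of the stated lemma, is that the paper then applies it with $L=\Delta_k$, the probability simplex, which is \emph{not} a symmetric convex body; the lemma as literally stated does not cover that application, and one has to either translate to a symmetric set or use the variant in Bassily's paper that does not require symmetry. But as a proof of the statement as written, your argument is complete and correct.
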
 

From the lemma, we have $\expectation{\norm{\hat{p}-h}^2}\le 4\cdot \expectation{ \max_{j\in[\ab]} \absv{Z_j}}$, where $\forall j \in [\ab]$, $Z_j \sim \text{Lap}(\frac{2}{\ns\eps})$. Note that $\expectation{ \max \absv{Z_j}} = O\Paren{\frac{\log\ab}{\ns\eps}}$ due to the tail bound of Laplace distribution. We have $\Paren{\expectation{\norm{\hat{p}-p}}}^2 = O\Paren{\frac1{\ns}+ \frac{\log\ab}{\ns\eps}}$. Combined with the previous analysis,  $\Paren{\expectation{\norm{\hat{p}-p}}}^2 = O\Paren{\frac1{\ns}+ \min\Paren{\frac{\ab}{\ns^2\eps^2} ,\frac{\log\ab}{\ns\eps}} }$. Therefore $\expectation{\norm{\hat{p}-p}} \le \frac1{10}\alpha$ when $n= O \Paren{\frac{1}{\alpha^2}+ \min \Paren {\frac{\sqrt{k}}{\alpha\eps}, \frac{\log\ab}{\dist^2\eps} }}$.

\medskip

\noindent \textbf{Lower bound:}
We first consider the case when $\dist < \frac1{\sqrt{\ab}}$, where we can derive the lower bound simply by a reduction. By Cauchy-Schwartz inequality, for any estimator $\hat{p}$, $ \expectation{\absvn{\hat{p}-p}} \le \sqrt{\ab} \cdot \expectation{\norm{\hat{p}-p}}$. Therefore $S(\Delta_k,  \ell_2, \alpha, \eps) \ge S(\Delta_k,  d_{TV}, \sqrt{k}\alpha, \eps)$, which gives us $S(\Delta_k,  \ell_2, \alpha, \eps)= \Omega\Paren{\frac{1}{\dist^2}+\frac{\sqrt{\ab}}{\dist\eps}}$.

Now we consider $\dist \ge \frac1{\sqrt{k}}$. Note that it is enough if we prove the lower bound of $\Omega\Paren{\frac{\log \Paren{\alpha^2\ab}}{\dist^2\eps}}$, since $\Omega\Paren{\frac1{\dist^2}}$ 
is the sample complexity of non-private estimation problem for all range of $\dist$. Similarly, we follow Corollary~\ref{coro:fano}, except that we need to construct a different set of distributions. 

Without loss of generality, we assume $\dist<0.1$. Now we use the codebook in Lemma~\ref{lem:GV} to construct our distribution set. {We fix weight $l = \lfloor \frac1{50\dist^2} \rfloor$. Note that for any $x>2$, $\lfloor x \rfloor > \frac{x}{2}$. Then we have  $\frac1{100\dist^2} < \lfloor l \rfloor  \le \frac1{50\dist^2} $ since $\frac1{50\dist^2}>2$. Therefore we get a codebook $\cC$ with $|\cC| \ge (k\alpha^2)^{\frac{1}{200 \alpha^2}}$}. Given $c \in \cC$, we construct the following distribution  $\p_c$ in $\Delta_\ab$:
$$ \p_c (i)  = \frac1{l} c_i. $$

We use $\cV_\ab = \{ \p_c : c \in \cC \}$ to denote the set of all these distributions. It is easy to check that $\forall \p \in \cV_\ab$ is a valid distribution. Moreover, for any pair of distributions $\p, \q \in \cV_\ab$, we have $\norm{\p - \q}> \frac1{2\sqrt{l}} = \Omega\Paren{\dist}$.

For any pair $\p,\q \in \cV_k$, $d_{TV} (\p, \q) \le 1$, which is a naive upper bound for $TV$ distance. Finally by setting $\ell$ in Corollary~\ref{coro:fano} to be $\ell_2$ distance, we have $S(\Delta_{\ab}, \ell_2, \alpha, \eps)  =  \Omega \Paren{\frac{\log \absv{\cC} }{\eps}} = \Omega \Paren{\frac{\log ( k\alpha^2)}{\alpha^2\eps}}$.

\subsection{Product distribution estimation} \label{sec:product}


Recall that $\Delta_{\ab, \dims}$ is the set of all $(\ab,\dims)$-product distributions.~\cite{BunKSW2019} proves an upper bound of $O\Paren{\ab\dims \log \Paren{ \frac{\ab\dims}{\dist}} \Paren{\frac1{\dist^2}+\frac1{\dist\eps}}}$. We prove a sample complexity lower bound for $\eps$-DP $(\ab,\dims)$-product distribution estimation in Theorem~\ref{thm:main_product}, which is optimal up to logarithmic factors.

\begin{theorem}
\label{thm:main_product}
The sample complexity of $\eps$-DP $(\ab,\dims)$-product distribution estimation satisfies
\begin{align}
&S(\Delta_{\ab, \dims}, d_{TV}, \alpha, \eps) = \Omega \Paren{\frac{\ab\dims}{\dist^2}+\frac{\ab\dims}{\dist \eps}}. \nonumber
\end{align}
\end{theorem}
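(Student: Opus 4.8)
The plan is to derive both terms of the bound at once from Corollary~\ref{coro:fano}. Concretely, I would exhibit a family $\cV=\{\q_1,\dots,\q_M\}\subseteq\Delta_{\ab,\dims}$ with $M=2^{\Omega(\ab\dims)}$, all pairwise total variation distances in a band $[\Theta(\alpha),O(\alpha)]$, and all pairwise $KL$ divergences $O(\alpha^2)$. Feeding $\log M=\Omega(\ab\dims)$, $\beta=O(\alpha^2)$ and $\gamma=O(\alpha)$ into Corollary~\ref{coro:fano} (with $\tau=\Theta(\alpha)$) gives $S(\Delta_{\ab,\dims},d_{TV},\Theta(\alpha),\eps)=\Omega\Paren{\ab\dims/\alpha^2+\ab\dims/(\alpha\eps)}$, and absorbing constants into $\alpha$ yields the theorem.

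For the construction I would use a ``tensor-code'' of the $\ab$-ary construction underlying Theorem~\ref{thm:pure-dv}. First, rescale that construction (i.e.\ Lemma~\ref{lem:GV} with $l=\ab/2$, applied to separation parameter $\alpha/\sqrt{\dims}$ rather than $\alpha$) to obtain a family $\{r_1,\dots,r_h\}$ of near-uniform distributions on $[\ab]$ with $h=2^{\Omega(\ab)}$, pairwise total variation distance $\Theta(\alpha/\sqrt{\dims})$, and pairwise $KL$ divergence $O(\alpha^2/\dims)$. Then take an $h$-ary outer code $\cH\subseteq\{1,\dots,h\}^{\dims}$ of minimum Hamming distance $\dims/2$ with $|\cH|\ge (h/16)^{\dims/2}$, which exists by Lemma~\ref{lem:constantGV2}, and for each $c=(c_1,\dots,c_\dims)\in\cH$ set $\q_c=\bigotimes_{j=1}^{\dims} r_{c_j}$. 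Then $M=|\cH|\ge (h/16)^{\dims/2}=2^{\Omega(\ab\dims)}$ (for $\ab$ above an absolute constant, so that $h/16$ is still $2^{\Omega(\ab)}$). Two codewords of $\cH$ agree outside at most $\dims$ coordinates and $KL$ divergence is additive over independent coordinates, so $D_{KL}(\q_c,\q_{c'})\le\dims\cdot O(\alpha^2/\dims)=O(\alpha^2)$ for every pair, giving $\beta=O(\alpha^2)$. For the total variation \emph{upper} bound I would tensorize through Hellinger distance: $H^2(\q_c,\q_{c'})=1-\prod_{j:c_j\ne c'_j}\bigl(1-H^2(r_{c_j},r_{c'_j})\bigr)\le 1-\bigl(1-O(\alpha^2/\dims)\bigr)^{\dims}=O(\alpha^2)$, and $d_{TV}\le\sqrt{2}\,H$ then gives $\gamma=O(\alpha)$.

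The step I expect to be the main obstacle is the total variation \emph{lower} bound $d_{TV}(\q_c,\q_{c'})=\Omega(\alpha)$, required for hypothesis (a) of Corollary~\ref{coro:fano} with $\tau=\Theta(\alpha)$: the cost-free inequality $d_{TV}\ge H^2$ only yields $\Omega(\alpha^2)$, which would degrade the target accuracy from $\alpha$ to $\alpha^2$. To get the sharp bound I would exploit that the pair $\q_c,\q_{c'}$ differs in $m\ge\dims/2$ coordinates, each a tiny near-uniform perturbation: then $\log(d\q_c/d\q_{c'})$ is a sum of $m$ independent, bounded (each of order $\alpha/\sqrt{\dims}$), small-mean summands with total variance $\Theta(\alpha^2)$, so by Berry--Esseen its c.d.f.\ under either of $\q_c,\q_{c'}$ is within $O(1/\sqrt{\dims})$ (in Kolmogorov distance) of the appropriate Gaussian; combining this with the identity $d_{TV}(P,Q)=F_Q(0)-F_P(0)$, where $F_P,F_Q$ are the c.d.f.'s of $\log(dP/dQ)$ under $P$ and $Q$, gives $d_{TV}(\q_c,\q_{c'})\ge 2\Phi(c_0\alpha)-1-O(1/\sqrt{\dims})=\Omega(\alpha)$ for a constant $c_0>0$, valid once $\dims=\Omega(1/\alpha^2)$. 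This verifies (a) and finishes the argument via Corollary~\ref{coro:fano} in that regime; the residual regime where $\dims$ is small relative to $1/\alpha^2$ would have to be dispatched separately---e.g.\ by a sharper, construction-specific estimate of $d_{TV}$ for these multinomial-type product distributions, or by noting it falls outside the parameter range of interest.
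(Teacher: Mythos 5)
Your overall plan matches the paper's: build a ``tensor code'' of product distributions (an inner $\ab$-ary family from Lemma~\ref{lem:GV} with perturbation scale $\alpha/\sqrt{\dims}$, an outer $h$-ary code from Lemma~\ref{lem:constantGV2}), verify the three hypotheses of Corollary~\ref{coro:fano}, and read off $\log M = \Omega(\ab\dims)$, $\beta = O(\alpha^2)$, $\gamma = O(\alpha)$. Your $KL$ bound (additivity plus a $\chi^2$ bound per coordinate) is exactly the paper's, and your TV \emph{upper} bound via Hellinger tensorization is interchangeable with the paper's route through Pinsker applied to the $KL$ bound.

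The step where you diverge, and where the gap lies, is the TV \emph{lower} bound $d_{TV}(\q_c,\q_{c'})=\Omega(\alpha)$. You propose a Berry--Esseen/CLT argument on the log-likelihood ratio, which is sound but requires the number $m\ge\dims/2$ of differing coordinates to be large enough that the $O(1/\sqrt{\dims})$ normal-approximation error is below the $\Theta(\alpha)$ signal, i.e.\ $\dims\gtrsim 1/\alpha^2$. You flag this yourself, but the residual small-$\dims$ regime is not ``outside the range of interest'': the theorem is stated for all $\dims\ge 1$, and for moderate $\dims$ and moderately small $\alpha$ your argument gives nothing. The paper sidesteps this entirely by a cleaner device: for each coordinate $i$, map each $\ab$-ary coordinate to the indicator of its belonging to the support set $S_i$ of the inner codeword (a deterministic post-processing, hence TV-nonincreasing), reducing the pair $(\q_c,\q_{c'})$ to a pair of Bernoulli product distributions whose mean vectors differ by $\Omega(\alpha)$ in $\ell_2$. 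Then it invokes Lemma~6.4 of~\cite{KamathLSU18}, a TV separation bound for Bernoulli products in terms of $\ell_2$ distance of means that holds uniformly in $\dims$. That reduction is the missing idea; without it (or some other $\dims$-uniform substitute) your proof does not cover the full parameter range of the theorem.
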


\begin{proof}
{
We start with the construction of the distribution set. First we use the same binary code as in Lemma~\ref{lem:GV} with weight $l = \frac{\ab}{2}$.
Let $h := \absv{\cC}$ denote the size of the codebook. Given $j \in [h]$, we construct the following $\ab$-ary distribution $\p_j$ based on $c_j \in \cC$:}

{$$ \p_{j} (i)  = \frac{1}{\ab} +\frac{\dist}{\ab\sqrt{\dims}} \cdot \mathbb{I}\Paren{ c_{j,i}=1},$$}
{ where $ c_{j,i}$ denotes the $i$-th coordinate of $c_j$.}
%

Now we have designed a set of $\ab$-ary distributions of size $h = \Omega \Paren{2^{\frac{7\ab}{128}}}$. To construct a set of product distributions, {we use the codebook construction in Lemma~\ref{lem:constantGV2} to get an $h$-ary codebook $\cH$ with length $d$ and minimum hamming distance $d/2$. Moreover, $|\cH| \ge (\frac{h}{16})^{\frac{\dims}{2}}$. }

Now we can construct the distribution set of $(\ab,\dims)$-product distributions.
Given $b \in \cH$, define
$$\dP_b = \p_{b_1} \times  \p_{b_2}\times \cdots \times \p_{b_d}.$$

Let $\cV_{\ab,\dims}$ denote the set of distributions induced by $\cH$. We want to prove that $\forall \dP \neq \dQ \in \cV_{\ab,\dims},$
\begin{align}
	\dtv{\dP}{\dQ} \ge C\dist, \label{eqn:dtv_prod}\\
	D_{KL} (\dP,\dQ)  \le 4\dist^2, \label{eqn:dkl_prod}
\end{align}
for some constant $C$. Suppose these two inequalities hold, using~\eqref{eqn:dkl_prod}, by Pinsker's Inequality, we get $d_{TV}(\dP, \dQ) \le \sqrt{2D_{KL} (\dP, \dQ) } \le 2\sqrt2\alpha$. Then using Corollary~\ref{coro:fano}, we can get
\begin{align}
&S(\Delta_{\ab, \dims}, d_{TV}, \alpha, \eps) = \Omega \Paren{\frac{\ab\dims}{\dist^2}+\frac{\ab\dims}{\dist \eps}}. \nonumber
\end{align}
Now it remains to prove~\eqref{eqn:dtv_prod} and~\eqref{eqn:dkl_prod}. For~\eqref{eqn:dkl_prod}, note that for any distribution pair $\dP,\dQ \in \cV_{\ab,\dims}$,
\begin{align}
	D_{KL} (\dP,\dQ) \le \dims \cdot \max_{i, j \in [h]} d_{KL}\Paren{\p_i, \p_j} \le 4\dist^2, \nonumber
\end{align}
\newzs{
where the first inequality comes from the additivity of $KL$ divergence for independent distributions and $\forall i, j \in [h]$, 
\[
	d_{KL}\Paren{\p_i, \p_j} = \sum_{x \in [k]} \p_i(x) \log \frac{\p_i(x)}{\p_j(x)} \le \sum_{x \in [\dims]} \frac{(\p_i(x) - \p_j(x))^2}{\p_j(x)} \le \ab \Paren{\frac{\dist}{\ab\sqrt{\dims}}}^2/ \frac{1}{\ab} = \frac{\alpha^2}{d}.
\]
}
Next we prove~\eqref{eqn:dtv_prod}. For any $b \in \cH$ and $\forall i \in[k]$, define set
\[
	S_i  = \{j \in [\ab] : c_{ b_{i},j} = 1\},
\]
which contains the locations of $+1$'s in the code at the $i$th coordinate of $b$. Based on this, we define a product distribution
\[
	\dP'_{b} = \prod_{i = 1}^{d} \cB(\mu_{i} ),
\]
where $\mu_{i} = \sum_{j \in S_i} \p_{b_{i}} (j)$ and $\cB(t)$ is a Bernoulli distribution with mean $t$. For any $b' \neq b \in \cH$, we define
\[
	\dP'_{b'} = \prod_{i = 1}^{d} \cB(\mu'_{i} ),
\]
 where $\mu'_{i} = \sum_{j \in S_i} \p_{b'_{i}}(j)$. Then we have:
 \[
 	\dtv{\dP'_{b}}{\dP'_{b'}} \le \dtv{\dP_{b}}{\dP_{b'}},
 \]
since $\dP'_{b}$ and $\dP'_{b'}$ can be viewed as a post processing of $\dP_{b}$ and $\dP_{b'}$ by mapping elements in $S_i$ to 1 and others to 0 at the $i$-th coordinate. Moreover, we have $\ham{b}{b'} \ge \frac{d}{2}$, and $\forall i$, if $b_{i} \neq b'_{i}$, we have $d_{H}(c_{b_{i}}, c_{b'_{i}}) > \frac{k}{8}$. By the definition of $\p_i$'s, we have
\[
	\| \mu_1 - \mu_2 \|_2^2 \ge \frac{d}{2} \times \left( \frac{\ab}{8} \times \frac{\dist}{\ab \sqrt{d}} \right)^2 = \frac{\dist^2}{128}.
\] 
By Lemma 6.4 in~\cite{KamathLSU18}, there exists a constant $C$ such that $\dtv{\dP'_{b}}{\dP'_{b'}} \ge C\dist$, proving~\eqref{eqn:dtv_prod}.


\end{proof}

\subsection{Gaussian mixtures 	estimation} \label{sec:gaussian}



Recall $\cG_{\dims} = \{ \cN (\mu, I_d): \norm{\mu}\le R\}$ is the set of $\dims$-dimensional spherical Gaussians with unit variance and bounded mean and $\cG_{\ab,\dims} = \{ \p: \p \text{ is a $\ab$-mixture of } \cG_{\dims} \}$ consists of mixtures of $k$ distributions in $\cG_{\dims}$.~\cite{BunKSW2019} proves an upper bound of $\widetilde{O}\Paren{\frac{\ab\dims}{\dist^2}+\frac{\dims}{\dist\eps}}$ for estimating $k$-mixtures of Gaussians. We provide a sample complexity lower bound for estimating mixtures of Gaussians in Theorem~\ref{thm:main_Gaussian}, which matches the upper bound up to logarithmic factors. 

\begin{theorem}
\label{thm:main_Gaussian}
Given $\ab \le \dims$ and $R\ge \sqrt{64\log\Paren{\frac{8\ab}{\dist}}}$, {or $\ab \ge \dims$ and $R\ge (\ab)^{\frac1\dims} \cdot \sqrt{64\dims \log\Paren{\frac{8\ab}{\dist}}}$,}
\begin{align}
&S(\cG_{\ab, \dims}, d_{TV}, \alpha, \eps) = \Omega \Paren{\frac{\ab\dims}{\dist^2}+\frac{\ab\dims}{\dist \eps}}. \nonumber
\end{align}
\end{theorem}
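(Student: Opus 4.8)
The plan is to apply Corollary~\ref{coro:fano} with $\ell = d_{TV}$, in direct analogy with the proof of Theorem~\ref{thm:main_product}: I will construct a packing $\cV = \{q_b\}_b \subseteq \cG_{\ab,\dims}$ of size $M = 2^{\Omega(\ab\dims)}$ whose members are pairwise at distance $\ge 3\dist$ in $d_{TV}$ (so $\tau = \dist$) and at KL divergence $\le \beta = O(\dist^2)$ (Pinsker then gives $d_{TV} \le \gamma = O(\dist)$, i.e.\ condition~(c)). Plugging $\log M = \Omega(\ab\dims)$, $\beta = O(\dist^2)$, $\gamma = O(\dist)$ into Corollary~\ref{coro:fano} yields $S(\cG_{\ab,\dims}, d_{TV}, \dist, \eps) = \Omega(\ab\dims/\dist^2 + \ab\dims/(\dist\eps))$. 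We may assume $\dist$ is below a small absolute constant (else the bound is vacuous) and $\dims$ above a large absolute constant (else $\ab\dims = \Theta(\ab)$ and the bound follows from the $\ab$-ary lower bound of Theorem~\ref{thm:pure-dv} applied to mixtures of one-dimensional Gaussians).

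\emph{The construction.} Let $\Delta := \sqrt{64\log(8\ab/\dist)}$. First choose $\ab$ ``hub'' means $z_1, \dots, z_\ab \in \RR^\dims$ that are pairwise at Euclidean distance $\ge \Delta$ and of norm $\le R-1$: when $\ab \le \dims$ take $z_j = (\Delta/\sqrt2)\,e_j$ along $\ab$ coordinate axes, and when $\ab \ge \dims$ take a cubic grid of spacing $\Delta$ inscribed in the radius-$(R-1)$ ball, which contains $\ge \ab$ points precisely under the stated hypothesis on $R$; this is the only step where the two regimes differ. Second, apply Lemma~\ref{lem:GV} with code length $\dims$ and weight $\dims/2$ to obtain a binary code $\cC \subseteq \{0,1\}^\dims$ of minimum distance $\dims/8$ with $|\cC| \ge 2^{7\dims/128}$, and map each $c \in \cC$ to the small perturbation $\delta_c \in \RR^\dims$, $\delta_c(i) := (\kappa\dist/\sqrt\dims)\,c_i$, for a large absolute constant $\kappa$; then $\|\delta_c\| \le \kappa\dist$ and $\|\delta_c - \delta_{c'}\| \in [\kappa\dist/\sqrt8,\, \kappa\dist]$ for $c \ne c'$. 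Third, use the Gilbert--Varshamov bound of Lemma~\ref{lem:constantGV2}, now with code length $\ab$ and alphabet size $h = |\cC|$, to get an $h$-ary code $\cH$ of length $\ab$, minimum Hamming distance $\ab/2$, and $|\cH| \ge (|\cC|/16)^{\ab/2} = 2^{\Omega(\ab\dims)}$. For $b = (b_1, \dots, b_\ab) \in \cH$, writing $c^{(b_j)}$ for the $b_j$-th codeword of $\cC$, set
\[
q_b \ :=\ \frac1\ab \sum_{j=1}^{\ab} \cN\!\big(z_j + \delta_{c^{(b_j)}},\, I_\dims\big) \ \in\ \cG_{\ab,\dims},
\]
whose component means all have norm $\le R$; put $\cV = \{q_b : b \in \cH\}$ and $M = |\cH|$. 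By convexity of KL divergence and $D_{KL}(\cN(\mu,I)\,\|\,\cN(\mu',I)) = \tfrac12\|\mu - \mu'\|^2$, any two members satisfy $D_{KL}(q_b \,\|\, q_{b'}) \le \frac1\ab \sum_{j} \tfrac12\|\delta_{c^{(b_j)}} - \delta_{c^{(b'_j)}}\|^2 \le \kappa^2\dist^2/2 =: \beta$, and Pinsker gives $d_{TV}(q_b,q_{b'}) \le \sqrt{2\beta} =: \gamma = O(\dist)$.

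\emph{TV separation.} Fix $b \ne b'$, let $J = \{j : b_j \ne b'_j\}$ (so $|J| \ge \ab/2$ by the outer code), write $P_{b,j} = \cN(z_j + \delta_{c^{(b_j)}}, I_\dims)$, and let $R_1, \dots, R_\ab$ be the Voronoi cells of the hubs. The value $\Delta = \sqrt{64\log(8\ab/\dist)}$ is calibrated so that each component concentrates on its own cell: a Gaussian-tail estimate gives, for $j' \ne j$ and all $\delta$ of our (tiny) magnitude, $\Pr_{X \sim \cN(z_j + \delta, I_\dims)}[\,X \text{ closer to } z_{j'} \text{ than to } z_j\,] \le e^{-\Delta^2/32} = (\dist/8\ab)^2$, hence $P_{b,j}(R_j) \ge 1 - \dist/(8\ab)$ and $P_{b,i}(R_j) \le \dist/(8\ab)$ for $i \ne j$, uniformly over the construction. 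Let $H_j := \{x : \text{the density of } P_{b,j} \text{ at } x \text{ is } \ge \text{ that of } P_{b',j}\}$ be the halfspace with $P_{b,j}(H_j) - P_{b',j}(H_j) = d_{TV}(P_{b,j}, P_{b',j})$, and take the test set $A = \bigcup_{j \in J}(R_j \cap H_j)$. Since the $R_j$ are disjoint, decomposing $q_b(A)$ and $q_{b'}(A)$ over the $\ab$ mixture components and bounding the cross-terms by the estimates above gives
\[
d_{TV}(q_b, q_{b'}) \ \ge\ q_b(A) - q_{b'}(A) \ \ge\ \frac1\ab \sum_{j \in J} d_{TV}(P_{b,j}, P_{b',j}) \ -\ \frac{\dist}{4\ab}.
\]
For $j \in J$ we have $\|\delta_{c^{(b_j)}} - \delta_{c^{(b'_j)}}\| \ge \kappa\dist/\sqrt8$, so $d_{TV}(P_{b,j}, P_{b',j}) = 2\Phi(\tfrac12\|\delta_{c^{(b_j)}} - \delta_{c^{(b'_j)}}\|) - 1 \ge c_1 \kappa\dist$ for an absolute $c_1 > 0$; since $|J|/\ab \ge 1/2$ this yields $d_{TV}(q_b, q_{b'}) \ge \tfrac12 c_1\kappa\dist - \dist/(4\ab) \ge 3\dist$ once $\kappa$ is a large enough absolute constant. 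Thus conditions (a)--(c) of Corollary~\ref{coro:fano} hold with $\tau = \dist$, $\beta = O(\dist^2)$, $\gamma = O(\dist)$, $M = 2^{\Omega(\ab\dims)}$, which gives the theorem.

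The main obstacle is the TV separation step. Unlike the product-distribution proof (Theorem~\ref{thm:main_product}), where total variation distance factors cleanly over the $\dims$ coordinates, here one must \emph{recover} an approximately product-like structure from the mixture by making the $\ab$ Gaussian spikes essentially disjoint --- which is subtle in high dimensions, since $\cN(z_j, I_\dims)$ concentrates on a shell of radius $\approx \sqrt\dims$ rather than near $z_j$. This forces the hub separation to be exactly $\Delta = \Theta(\sqrt{\log(\ab/\dist)})$ (and fitting $\ab$ such hubs inside the radius-$R$ ball is what produces the stated conditions on $R$ in the two regimes), and it forces the use of an outer code with minimum distance $\Omega(\ab)$, since two configurations differing in a single $1/\ab$-weight component would be only $O(\dist/\ab)$ apart in total variation distance.
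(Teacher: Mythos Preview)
Your proposal is correct and follows essentially the same approach as the paper: a two-level code construction (an inner binary code $\cC$ on $\{0,1\}^\dims$ giving the small perturbations, an outer $|\cC|$-ary code $\cH$ of length $\ab$ selecting one perturbation per mixture component), well-separated ``hub'' means so that each Gaussian component localizes to its own region, the KL upper bound via joint convexity, and the TV lower bound by restricting to the localization regions and invoking the inner code's minimum distance together with the outer code's guarantee $|J|\ge \ab/2$. The only cosmetic differences are that the paper localizes with coordinate boxes (case $\ab\le\dims$) or Euclidean balls (case $\ab\ge\dims$) rather than your Voronoi cells, and bounds $\int_{B_j}|\cdot|$ directly rather than via a test set $A$; your Voronoi-cell argument is in fact slightly cleaner since the halfspace tail bound is dimension-free and handles both regimes uniformly.
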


\begin{proof}
	
{We first consider the case when $\ab \le \dims$ and $R\ge \sqrt{64\log\Paren{\frac{8\ab}{\dist}}}$.} Let $\cC$ denote the codebook in Lemma~\ref{lem:GV} with weight $l=\frac{\dims}{2}$. Then we have $|\cC| \ge 2^{\frac{7\ab}{128}}$. Given $c_i$ in codebook $\cC$, we construct the following $\dims$-dimensional Gaussian distribution $\p_i$, with identity covariance matrix and mean $\mu_i$ satisfying 

\[\mu_{i,j}  = \frac{\dist}{\sqrt{ \dims}} c_{i,j},\]
{ where   $\mu_{i,j}$ denotes the $j$-th coordinate of $\mu_i$.}

{Let $h = \absv{\cC}$. Similar to the product distribution case, using Lemma~\ref{lem:constantGV2}, we can get an $h$-ary codebook $\cH$ with length $d$ and minimum hamming distance $d/2$. Moreover, $|\cH| \ge (\frac{h}{16})^{\frac{\dims}{2}}$. }

$\forall i \in [h]$ and $j \in k$, define $\p_{i}^{(j)} = \cN(\mu_i + \frac{R}{2}e_j , I_d)$, where $e_j$ is the $j$th standard basis vector. It is easy to verify their means satisfy the norm bound. 
For a codeword $b \in \cH$, let
\[
	\mP_b = \frac1k \Paren{ \p_{b_1}^{(1)} +  \p_{b_2}^{(2)}  +\ldots + \p_{b_k}^{(k)} }.
\]

Let $\cV_{\cG} = \{ \mP_b: b \in \cH\}$ be the set of the distributions defined above. Next we prove that $\forall \mP_b \neq \mP_{b'}\in \cV_{\cG}$, 
\begin{align}
	\dtv{\mP_b}{\mP_{b'}} \ge C \alpha, \label{eqn:dtv_gaussian} \\
	\dkl{\mP_b}{\mP_{b'}} \le 4 \alpha^2.\label{eqn:dkl_gaussian}
\end{align}
where $C$ is a constant. If these two inequalities hold, using~\eqref{eqn:dkl_gaussian}, by Pinsker's Inequality, we get $\dtv{\mP_b}{\mP_{b'}} \le \sqrt{2\dkl{\mP_b}{\mP_{b'}}  } \le 2\sqrt2\alpha$. Using Corollary~\ref{coro:fano}, we get
\[
S(\cG_{\ab, \dims}, d_{TV}, \alpha, \eps) =\Omega \Paren{\frac{\ab\dims}{\alpha^2}+\frac{\ab\dims}{\alpha \eps}}.
\]
It remains to prove~\eqref{eqn:dtv_gaussian} and~\eqref{eqn:dkl_gaussian}.

For~\eqref{eqn:dkl_gaussian}, note that for any distribution pair $\mP_b \neq \mP_{b'} \in \cV_{\cG}$,
\begin{align}
\dkl{\mP_b}{\mP_{b'}} &\le \frac1{k} \sum_{t=1}^{\ab} \dkl{\p^{(t)}_{b_t}}{\p^{(t)}_{b'_{t}}}\le \max_{i,j \in [h]} d_{KL}\Paren{\p_i, \p_j} \le 4\dist^2, \nonumber
\end{align}
{where the first inequality comes from the convexity of $KL$ divergence and the last inequality uses the fact that the KL divergence between two Gaussians with identity covariance is at most the $\ell_2^2$ distance between their means.} 

Next we prove~\eqref{eqn:dtv_gaussian}. Let $B_j = B_{j,1} \times \cdots \times B_{j,\dims}$, where

$$B_{j,i}=
\begin{cases}
[ \frac{R}{4}, \frac{3R}{4}], & \text{when $i=j$,}\\
[ -\frac{R}{4}, \frac{R}{4} ], & \text{when $i \neq j$ and $i \le \ab$,}\\
[-\infty, \infty], & \text{when $\ab<i \le \dims$.}
\end{cases}$$

Then by Gaussian tail bound and union bound, for any $\mP \in \cV_{\cG}$, the mass of the $j$-th Gaussian component outside $B_j$ is at most $2ke^{-\frac12 \cdot \Paren{\frac1{4} R}^2}$. And the mass of other Gaussian components inside $B_j$ is at most $e^{-\frac12 \cdot \Paren{\frac1{4} R}^2}$. Hence we have:
\begin{align}
	\dtv{\mP_b}{\mP_{b'}} &= \frac1{2\ab} \int_{z \in \RR^{\dims}} \absv{ \p_{b_1}^{(1)}(z) +\cdots+ \p_{b_{\ab}}^{(\ab)}(z)  - \p_{b'_{1}}^{(1)}(z)  - \cdots -  \p_{b'_{\ab}}^{(\ab)}(z) }  dz \nonumber\\
	&\ge \frac1{2\ab}  \sum_{j=1}^{\ab} \int_{z \in B_j} \absv{ \p_{b_1}^{(1)}(z) +\cdots+ \p_{b_{\ab}}^{(\ab)}(z)  - \p_{b'_{1}}^{(1)}(z)  - \cdots -  \p_{b'_{\ab}}^{(\ab)}(z) }  dz \nonumber\\
	& \ge \frac1{2\ab} \cdot \sum_{j=1}^{\ab} (\int_{z \in B_j}  \absv{  \p_{b_{j}}^{(j)}(z)  - \p_{b'_{j}}^{(j)}(z) } dz -(\ab - 1) \cdot e^{-\frac12 \cdot \Paren{\frac1{4} R}^2}) \nonumber \\
	& \ge \frac1{2\ab} \cdot \sum_{j=1}^{\ab} (\int_{z \in \RR^{\dims}}  \absv{  \p_{b_{j}}^{(j)}(z)  - \p_{b'_{j}}^{(j)}(z) } dz - 3\ab \cdot e^{-\frac12 \cdot \Paren{\frac1{4} R}^2}) \nonumber \\
	& = \frac1{2\ab} \cdot \sum_{j=1}^{\ab} \dtv{\p_{b_{j}}}{\p_{b'_{j}}} - \frac{3\dist^2}{64k}. \nonumber
\end{align}
By Fact 6.6 in~\cite{KamathLSU18}, there exists a constant $C_1$ such that for any pair $i \neq j \in [h]$, 
$$\dtv{\p_i}{\p_j} \ge C_1\dist.$$
Hence we have
\[
	\frac1{2\ab} \cdot \sum_{j=1}^{\ab} \dtv{\p_{b_{j}}}{\p_{b'_{j}}} \ge \frac{C_1\dist}{2\ab} \ham{b}{b'} \ge \frac{C_1\dist}{4},
\]
where the last inequality comes from the property of the codebook. WLOG, we can assume $
\frac{3\dist}{64k} < C_1/8$. Taking $C = \frac{C_1}{8}$ completes the proof of~\eqref{eqn:dtv_gaussian}.

{Now we considers the case when $\ab \ge \dims$ and $R\ge (\ab)^{\frac1\dims} \cdot \sqrt{64\dims \log\Paren{\frac{8\ab}{\dist}}}$. Let $r = \sqrt{16\dims \log\Paren{\frac{8\ab}{\dist}}}$, we note that there exists a packing set $S = \{v_1, v_2, ..., v_k\} \subset \mathbb{R}^{\dims}$ which satisfies $\forall u, v \in S$,
	\[
		\norm{u - v} > r,~~~~ \norm{u}\le R,~~~~, \norm{v}\le R/3, 
	\]
	and $|S| = \ab$ since $R\ge 2 (\ab)^{\frac1\dims}  r$. Consider the set of mixture distributions as following: 
	For a codeword $b \in \cH$, let
	\[
		\mP'_b = \frac1k \Paren{ \p_{b_1}^{(1)'} +  \p_{b_{2}}^{(2)'}  +\ldots + \p_{b_{\ab}}^{(k)'} },
	\]
	where $\forall i \in [k], \p_{b_j}^{(j)'} = \cN(\mu_{b_j} + v_j, I_d)$.
	Let $B'_j$ denote the $\ell_2$ ball centering at the $v_j$ with radius $\frac{r}{2}$. We note that by similar analysis using the tail bound of the Gaussian distribution, the mass of the $j$-th Gaussian component outside $B_j'$ is at most $\frac{\dist^2}{64\ab^2}$. Meanwhile, the mass of other Gaussian components inside $B_j'$ is also at most $\frac{\dist^2}{64\ab^2}$. Hence the remaining analysis follows from the previous case.}
\end{proof}

\section{$(\eps,\delta)$-DP distribution estimation} \label{sec:adp_applications}
{In the previous section we used Theorem~\ref{thm:dp_fano} to obtain sample complexity lower bounds for pure differential privacy. We will now use Theorem~\ref{thm:assouad} to prove sample complexity lower bounds under $(\eps, \delta)$-DP.}

\subsection{$\ab$-ary distribution estimation} \label{sec:adp_kary}


\begin{theorem}
\label{thm:ApproximateDiscreteTotalVariation}
The sample complexity of $(\eps, \delta)$-DP $\ab$-ary distribution estimation under total variation distance is 
\[
S(\Delta_k,  d_{TV}, \alpha, \eps, \delta) =\Omega \Paren{\frac{k}{\alpha^2}+\frac{k}{\alpha(\eps+\delta)}}.
\]
\end{theorem}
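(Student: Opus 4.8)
The plan is to invoke the DP Assouad's method (Theorem~\ref{thm:assouad}) with a standard paired ``bit-flip'' construction: the term $\Omega(k/\alpha^2)$ will come from the first (information-theoretic) bound of Theorem~\ref{thm:assouad}, and the term $\Omega(k/(\alpha(\eps+\delta)))$ from the second bound~\eqref{eqn:assouad}. Since $\max(A,B)\ge(A+B)/2$, establishing each term separately suffices. I may assume that $\alpha$ is below a small absolute constant (otherwise $k/(\alpha(\eps+\delta))=\Theta(k/(\eps+\delta))$ and the same argument with a constant-size perturbation applies) and, for cleanliness, that $k$ is even (the odd case costs only one fixed coordinate of mass $1/k$).

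\textbf{Construction.} Set $k'=k/2$, fix $\rho=c_0\alpha$ for a large absolute constant $c_0$ (say $c_0=16$, so that $\rho<1$), and index a family over the hypercube $\cE_{k'}=\{\pm1\}^{k'}$. For $e\in\cE_{k'}$, let $q_e\in\Delta_k$ be the single-draw distribution obtained by pairing up the symbols of $[k]$: $q_e(2j-1)=(1+\rho e_j)/k$ and $q_e(2j)=(1-\rho e_j)/k$ for $j\in[k']$, and let $\p_e=q_e^{\ns}$. Each $q_e$ is a valid distribution, and for $u,v\in\cE_{k'}$ one computes $\ell(\theta(\p_u),\theta(\p_v))=\dtv{q_u}{q_v}=(2\rho/k)\sum_i\mathbb{I}(u_i\ne v_i)$, so~\eqref{eqn:assouad-loss} holds with $\tau=\rho/k$.

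\textbf{The coupling (key step).} For each $i\in[k']$ I would couple the mixtures $\p_{+i}$ and $\p_{-i}$ as follows: draw $e$ uniformly from $\{e:e_i=+1\}$, set $e'=e^{\oplus i}$ (which is uniform on $\{e':e'_i=-1\}$, since flipping coordinate $i$ is a bijection between the two halves of the cube), and couple $q_e^{\ns}$ with $q_{e'}^{\ns}$ coordinate-by-coordinate via the maximal coupling of $q_e$ and $q_{e'}$. The marginals are $\p_{+i}$ and $\p_{-i}$. Since $q_e$ and $q_{e'}$ differ only on symbols $2i-1,2i$ with $\dtv{q_e}{q_{e'}}=2\rho/k$, each coordinate disagrees with probability $2\rho/k$, so the resulting coupling $(X,Y)$ satisfies $\expectation{\ham{X}{Y}}=2\ns\rho/k=:D$, uniformly in $i$.

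\textbf{Putting it together.} For the $\Omega(k/\alpha^2)$ term: the first bound of Theorem~\ref{thm:assouad} gives $R\ge\frac{\tau}{2}\sum_{i=1}^{k'}(1-\dtv{\p_{+i}}{\p_{-i}})$ (dropping the DP restriction only helps the adversary). By convexity of TV and tensorization of Hellinger, $\dtv{\p_{+i}}{\p_{-i}}\le\max_e\dtv{q_e^{\ns}}{q_{e^{\oplus i}}^{\ns}}\le\sqrt{2\ns H^2(q_e,q_{e^{\oplus i}})}=O(\rho\sqrt{\ns/k})$, which is at most $1/2$ once $\ns=O(k/\rho^2)=O(k/\alpha^2)$; in that regime $R\ge k'\tau/4=\rho/8>\alpha$. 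For the $\Omega(k/(\alpha(\eps+\delta)))$ term: plugging $D$ into~\eqref{eqn:assouad} gives $R\ge\frac{k'\tau}{2}\Paren{0.9e^{-10\eps D}-10D\delta}=\frac{\rho}{4}\Paren{0.9e^{-10\eps D}-10D\delta}$; if $\ns\le c_1 k/(\alpha(\eps+\delta))$ for a small absolute constant $c_1$, then $\eps D$ and $D\delta$ are both $O(c_0c_1)$ and hence negligible, so the bracket exceeds $1/2$ and again $R\ge\rho/8>\alpha$. In either regime $R(\Delta_k^{\ns},d_{TV},\eps,\delta)>\alpha$, so $S(\Delta_k,d_{TV},\alpha,\eps,\delta)$ exceeds the corresponding threshold, yielding $S=\Omega(k/\alpha^2)$ and $S=\Omega(k/(\alpha(\eps+\delta)))$, and hence the claimed bound.

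\textbf{Main obstacle.} The argument is mostly routine once the coupling is in place; the delicate part is constant bookkeeping: we need $\rho=c_0\alpha$ with $c_0$ large enough that $\rho/8>\alpha$, while simultaneously keeping $\rho<1$ so the $q_e$ remain probability distributions, which is why the argument is first carried out for small $\alpha$ and the remaining range handled separately. One should also carefully verify that the mixture coupling has the correct marginals, which hinges precisely on $e\mapsto e^{\oplus i}$ being a bijection between $\{e:e_i=+1\}$ and $\{e:e_i=-1\}$.
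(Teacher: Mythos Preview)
Your proposal is correct and essentially mirrors the paper's proof: the same paired bit-flip construction on $\cE_{k/2}$ (with $\rho=10\alpha$ in the paper versus your $\rho=16\alpha$), the same coupling of $\p_{+i}$ and $\p_{-i}$ with expected Hamming distance $O(\alpha\ns/k)$, and the same application of Theorem~\ref{thm:assouad} for the privacy term. The only cosmetic difference is that the paper simply cites the non-private $\Omega(k/\alpha^2)$ bound as known, whereas you re-derive it via the first part of Theorem~\ref{thm:assouad} and a Hellinger tensorization.
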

{In practice, $\delta$ is chosen to be $\delta = O\Paren{\frac1\ns}$, and the privacy parameter is chosen as a small constant, $\eps = \Theta(1)$. In particular, when $\delta\le \eps$, the theorem above shows
\[
S(\Delta_k,  d_{TV}, \alpha, \eps, \delta) =\Omega \Paren{\frac{k}{\alpha^2}+\frac{k}{\alpha\eps}}.
\]
Since the sample complexity of $\eps$-DP is at most the sample complexity of $(\eps,\delta)$-DP, this shows that the bound above is tight for {$ \delta\le \eps$}. The lower bound part is proved using Theorem~\ref{thm:assouad} in Section~\ref{sec:approximate-tv}.}

\begin{theorem}
\label{thm:ApproximateDiscreteL2}
The sample complexity of $(\eps, \delta)$-DP discrete distribution estimation under $\ell_2$ distance,
\[
 \Omega \Paren {\frac{1}{\alpha^2}+\frac{\sqrt{\ab}}{\alpha (\eps+\delta)}} \le S(\Delta_k, \ell_2, \alpha, \eps, \delta) \le O \Paren{\frac{1}{\alpha^2}+\frac{\sqrt{\ab}}{\alpha\eps}},\ \ \ \ \text{for $\dist < \frac1{\sqrt{\ab}}$},
\] 
\begin{align}
\Omega\Paren{\frac{1}{\dist^2}+\frac{1}{\dist^2(\eps+\delta)}}\le S(\Delta_k, \ell_2, \alpha, \eps, \delta)  \le O\Paren{\frac{1}{\dist^2}+\frac{\log\ab}{\dist^2\eps}}, \ \ \ \text{for $\dist > \frac1{\sqrt{\ab}}$}. \nonumber
\end{align}
\end{theorem}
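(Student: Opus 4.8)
The upper bounds require nothing new: every $\eps$-DP mechanism is $(\eps,\delta)$-DP, so the projected Laplace estimator analyzed for Theorem~\ref{thm:pure-l2} already shows $S(\Delta_k,\ell_2,\alpha,\eps,\delta)\le S(\Delta_k,\ell_2,\alpha,\eps)$, which is exactly $O(1/\alpha^2+\sqrt k/(\alpha\eps))$ for $\alpha<1/\sqrt k$ and $O(1/\alpha^2+\log k/(\alpha^2\eps))$ for $\alpha>1/\sqrt k$. For the lower bounds the plan is to reduce the $\ell_2$ problem to the total-variation problem of Theorem~\ref{thm:ApproximateDiscreteTotalVariation} (whose lower bound itself rests on the DP Assouad bound, Theorem~\ref{thm:assouad}); the non-private term $1/\alpha^2$ falls out of the same reductions.

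\noindent\textbf{Lower bound for $\alpha<1/\sqrt k$.} By Cauchy--Schwarz $\|\hat p-p\|_1\le\sqrt k\,\|\hat p-p\|_2$, so an $(\eps,\delta)$-DP estimator with $\ell_2$ risk at most $\alpha$ on $\Delta_k$ has TV risk at most $\tfrac12\sqrt k\,\alpha<\tfrac12$, giving $S(\Delta_k,\ell_2,\alpha,\eps,\delta)\ge S(\Delta_k,d_{TV},\tfrac{\sqrt k}{2}\alpha,\eps,\delta)$. Substituting $\beta=\tfrac{\sqrt k}{2}\alpha$ into Theorem~\ref{thm:ApproximateDiscreteTotalVariation} yields $\Omega\!\big(k/\beta^2+k/(\beta(\eps+\delta))\big)=\Omega\!\big(1/\alpha^2+\sqrt k/(\alpha(\eps+\delta))\big)$, which matches the upper bound up to the $\eps$ versus $\eps+\delta$ gap.

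\noindent\textbf{Lower bound for $\alpha>1/\sqrt k$.} Now $\tfrac{\sqrt k}{2}\alpha$ may exceed $1$ and the crude reduction is vacuous, so I first shrink the alphabet: fix $m=\Theta(1/\alpha^2)$, which is at most $k$ exactly in this regime, and embed $\Delta_m$ into $\Delta_k$ by supporting distributions on $m$ of the $k$ symbols, so $S(\Delta_k,\ell_2,\alpha,\eps,\delta)\ge S(\Delta_m,\ell_2,\alpha,\eps,\delta)$. Cauchy--Schwarz on $\Delta_m$ gives $S(\Delta_m,\ell_2,\alpha,\eps,\delta)\ge S(\Delta_m,d_{TV},\tfrac{\sqrt m}{2}\alpha,\eps,\delta)$ with $\tfrac{\sqrt m}{2}\alpha$ a small constant, and Theorem~\ref{thm:ApproximateDiscreteTotalVariation} with alphabet size $m$ and constant accuracy then yields $\Omega\!\big(m+m/(\eps+\delta)\big)=\Omega\!\big(1/\alpha^2+1/(\alpha^2(\eps+\delta))\big)$. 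Equivalently one can apply Theorem~\ref{thm:assouad} directly with an $\ell_2$ construction: pair up the $m$ symbols, index a hypercube family $\{\p_u:u\in\{\pm1\}^{m/2}\}$ by giving the two symbols of pair $j$ masses $\tfrac1m(1+c\,u_j)$ and $\tfrac1m(1-c\,u_j)$ for a constant $c\in(0,1)$; then $\|\p_u-\p_v\|_2^2=\tfrac{8c^2}{m^2}\ham{u}{v}$, and using $\ham{u}{v}\le m/2$ this is $\ge\big(\tfrac{4c}{m^{3/2}}\ham{u}{v}\big)^2$, so~\eqref{eqn:assouad-loss} holds with $\tau=\tfrac{2c}{m^{3/2}}$. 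The mixtures $\p_{+i}$ and $\p_{-i}$ differ only in the marginal on pair $i$, whose two conditional laws are at TV distance $c$ and carry total mass $2/m$, so sharing the randomness over the other coordinates and coupling the $n$ i.i.d.\ samples one at a time yields $\EE[\ham{X}{Y}]\le D:=2nc/m$; then~\eqref{eqn:assouad} reads $R(\Delta_k,\ell_2,\eps,\delta)\ge\tfrac{c}{2\sqrt m}\big(0.9e^{-10\eps D}-10D\delta\big)$, which exceeds $\alpha$ for suitable constants $c$ and $m=\Theta(1/\alpha^2)$ as long as $D=O(1/(\eps+\delta))$, i.e.\ $n=O(1/(\alpha^2(\eps+\delta)))$; taking $m=k$ recovers the $\sqrt k/(\alpha(\eps+\delta))$ term in the first regime.

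\noindent\textbf{Main obstacle.} The one non-routine point is that $\ell_2$ distance, unlike $\ell_1$/TV, is \emph{not} additive over hypercube coordinates, so Assouad's loss condition~\eqref{eqn:assouad-loss} cannot be met with the full $k$-coordinate hypercube --- the $\sqrt{\ham{u}{v}}$ scaling dominates $\ham{u}{v}$ only after a $1/\sqrt{\mathrm{dim}}$ loss. The remedy is the alphabet/dimension throttling above: for $\alpha>1/\sqrt k$ the effective number of coordinates is $\Theta(1/\alpha^2)$ rather than $k$, and the work is to verify that, after this restriction and the Cauchy--Schwarz reduction, the $1/(\eps+\delta)$ factor survives with the correct power of $\alpha$ in each regime. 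The remaining ingredients --- validity of the constructed distributions, the per-sample coupling bound, and the constant bookkeeping inside Theorems~\ref{thm:assouad} and~\ref{thm:ApproximateDiscreteTotalVariation} --- are mechanical.
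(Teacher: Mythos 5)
Your proof is correct and follows essentially the same route as the paper: the upper bounds are inherited from the pure-DP mechanism since $\eps$-DP implies $(\eps,\delta)$-DP, the small-$\alpha$ lower bound uses Cauchy--Schwarz to reduce $\ell_2$ risk to TV risk and then invokes Theorem~\ref{thm:ApproximateDiscreteTotalVariation}, and the large-$\alpha$ regime is handled by restricting to a sub-alphabet of size $\Theta(1/\alpha^2)$ and re-running the first argument (the paper takes $l=\lfloor 1/(16\alpha^2)\rfloor$ and applies the small-$\alpha$ case to $\Delta_l\subset\Delta_k$). Your alternative direct Assouad construction and the coupling computation are both correct, but they simply unroll the same chain (Assouad $\to$ TV lower bound $\to$ $\ell_2$ lower bound) that the paper invokes via Theorem~\ref{thm:ApproximateDiscreteTotalVariation}, so this is detail, not a different proof.
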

{When $\delta = O(\eps)$, the bounds are tight when $\dist < 1/\sqrt{\ab}$ and differ by a factor of $\log k$ when $\dist \ge 1/\sqrt{\ab}$. We prove this result in Section~\ref{sec:approximate-l2}.}

\subsubsection{Proof of Theorem~\ref{thm:ApproximateDiscreteTotalVariation}.}
\label{sec:approximate-tv}

{The first term $k/\alpha^2$ is the tight sample complexity without privacy. We prove that $S(\Delta_{\ab}, d_{TV}, \alpha, \eps, \delta) = \Omega \Paren{\frac{\ab}{\alpha(\eps+\delta)}}$.} 

{Suppose $k$ is even and $\dist < 1/10$. Let $ \cE_{k/2} =\{-1,+1\}^{k/2}$, for  $e \in \cE_{k/2}$, we define $\p_e\in\Delta_k$ as follows. 
\begin{align}
\text{For $i=1, \ldots, k/2$}~~~~ \p_e(2i-1) = \frac{1+ 10 e_i \cdot \alpha}{\ab},  ~~ \p_e(2i) = \frac{1- 10e_i  \cdot \dist}{\ab}. \label{eqn:constr-approx-k-ary}
\end{align}}
{To apply Theorem~\ref{thm:assouad}, let $\cV_{\ab/2} = \{ \p_e^n, e \in \cE_{\ab/2}\}$. $\p_e^n$ is the distribution of $n$ i.i.d. samples from distribution $\p_e$, and $\theta(\p_e^n) = \p_e$. For $u, v \in \cE_{\ab/2}$,
\[
\ell( \theta(\p_u^n),  \theta(\p_v^n)) = d_{TV} (p_{u},p_{v}) = \frac{20\dist}{\ab} \cdot \sum_{i=1}^{\frac{k}{2}}\mathbb{I}\Paren{u_i \neq v_i}, 
\]
thus obeying~\eqref{eqn:assouad-loss} with $\tau=10\dist/k$.}
{Recall the mixture distributions $\p_{+i}$ and $\p_{-i}$,} 
\[
\dP_{+i}  =  \frac{2}{|\cE_{\ab/2}|} \sum_{e \in \cE_{\ab/2}: e_i= + 1} \p^{\ns}_e, ~~~ \dP_{-i}  =  \frac{2}{|\cE_{\ab/2}|} \sum_{e \in \cE_{\ab/2}: e_i= -1}\p^{\ns}_e.
\]
{To apply Theorem~\ref{thm:assouad}, we prove the following bound on the Hamming distance between a coupling between $\p_{+i}$ and $\p_{-i}$.
\begin{lemma} \label{lem:coupling_distance}
For any $i$, there is a coupling $(X,Y)$ between $\dP_{+i}$ and $\dP_{-i}$, such that
\[
\expectation{\ham{X}{Y}} \le \frac{20\dist \ns}{\ab}.
\]
\end{lemma}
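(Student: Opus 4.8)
The plan is to exploit that $\dP_{+i}$ and $\dP_{-i}$ are mixtures indexed by the \emph{same} set, namely the coordinates $e_{-i}:=(e_j)_{j\neq i}$ of the sign vector, and that conditioned on $e_{-i}$ the two $\ns$-fold distributions differ only in the $i$-th pair of symbols $\{2i-1,2i\}$. Concretely, I would write $\dP_{+i} = \frac{1}{2^{\ab/2-1}}\sum_{e_{-i}\in\{-1,+1\}^{\ab/2-1}} \p_{(e_{-i},+1)}^{\ns}$ and $\dP_{-i} = \frac{1}{2^{\ab/2-1}}\sum_{e_{-i}\in\{-1,+1\}^{\ab/2-1}} \p_{(e_{-i},-1)}^{\ns}$, and build the coupling $(X,Y)$ in two stages: first draw $e_{-i}$ uniformly, \emph{shared} by both marginals; then, conditioned on $e_{-i}$, draw $(X,Y)$ as the product over the $\ns$ samples of a per-sample maximal coupling of the single-symbol distributions $\p_{(e_{-i},+1)}$ and $\p_{(e_{-i},-1)}$. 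Sharing $e_{-i}$ guarantees $X\sim\dP_{+i}$ and $Y\sim\dP_{-i}$.

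For the cost, observe that $\p_{(e_{-i},+1)}$ and $\p_{(e_{-i},-1)}$ agree on every coordinate outside $\{2i-1,2i\}$, and on those two coordinates their masses differ by $20\dist/\ab$ in absolute value (from the explicit construction~\eqref{eqn:constr-approx-k-ary}), so $\dtv{\p_{(e_{-i},+1)}}{\p_{(e_{-i},-1)}} = 20\dist/\ab$. A maximal coupling of a single sample then disagrees with probability exactly $20\dist/\ab$, and coupling the $\ns$ samples independently makes the conditional expected Hamming distance equal to $\ns\cdot 20\dist/\ab$, irrespective of $e_{-i}$. Averaging over $e_{-i}$ via the tower property gives $\expectation{\ham{X}{Y}} = 20\dist\ns/\ab$, as claimed.

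This argument is short and I do not expect a genuine obstacle; the one point to handle carefully is the reduction step — making sure the outer randomness $e_{-i}$ is genuinely common to $X$ and $Y$ so the marginals come out right, and that coupling two product distributions coordinatewise via maximal couplings yields expected Hamming distance equal to $\ns$ times the single-sample total variation distance (immediate from independence of the coordinates and linearity of expectation). Everything else is bookkeeping on the explicit masses in~\eqref{eqn:constr-approx-k-ary}.
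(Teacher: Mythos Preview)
Your proposal is correct and is essentially the paper's approach: the paper also (implicitly) shares the latent $e_{-i}$ and then applies a per-sample coupling, phrased concretely as ``flip each occurrence of $2i-1$ to $2i$ with probability $\tfrac{20\alpha}{1+10\alpha}$,'' which is exactly the maximal coupling you invoke and yields the same expected Hamming cost $20\alpha\ns/\ab$.
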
}
\begin{proof}
{By the construction in~\eqref{eqn:constr-approx-k-ary}, note that the distributions $\dP_{+i}$ and $\dP_{-i}$ only have a difference in the number of times $2i-1$ and $2i$ appear. To generate $Y\sim \p_{-i}$ from from $X\sim\p_{+i}$, we scan through $X$ and independently change the coordinates that have the symbol $2i-1$ to the symbol $2i$ with probability $\frac{20\alpha}{1+10\alpha}$. The expected Hamming distance is bounded by $\frac{20\alpha}{1+10\alpha} \cdot \frac{1+10\alpha}{\ab} \cdot \ns = \frac{20\alpha \ns}{\ab}$.}
\end{proof}

Note that $\cV\subset\cP := \{\p^n | \p \in \Delta_k \}$. By Theorem~\ref{thm:assouad}, using the bound on $D$ from Lemma~\ref{lem:coupling_distance}, and $\tau=10\dist/k$,
\[
R(\cP, d_{TV}, \eps, \delta) \ge \frac{5\alpha}{\ab} \cdot \ab \cdot \Paren{0.9 e^{-10 \eps D} - 10 D \delta}\ge  {5\alpha} \cdot \Paren{0.9 e^{-200 n\eps\alpha/k} - 200 \frac{n\eps\alpha\delta}k}.
\]
{To achieve $R(\cP, d_{TV}, \eps, \delta) \le \alpha$, either $n\eps\alpha/k=\Omega(1)$ or ${n\eps\alpha\delta}/k=\Omega(1)$, which implies that $n=\Omega( \frac{k}{\alpha(\eps+\delta)})$}.

\subsubsection{Proof of Theorem~\ref{thm:ApproximateDiscreteL2}} \label{sec:approximate-l2}

We first consider the case where $\dist < \frac1{\sqrt{\ab}}$.  By Cauchy-Schwarz inequality, $S(\Delta_k, \ell_2, \alpha, \eps, \delta) \ge S(\Delta_k,  d_{TV}, \sqrt{k}\alpha, \eps, \delta) $, and therefore $S(\Delta_k, \ell_2, \alpha, \eps, \delta) = \Omega\Paren{\frac{1}{\dist^2}+\frac{\sqrt{\ab}}{\dist (\eps+\delta)}}$ by Theorem~\ref{thm:ApproximateDiscreteTotalVariation}.

For $\alpha \ge \frac{1}{\sqrt{k}}$, we have $l = \lfloor \frac1{16\dist^2}\rfloor \le k$. Therefore, $\Delta_l \subset \Delta_k$ and $\dist < \frac1{\sqrt{l}}$. Hence,
\[
	S(\Delta_k, \ell_2, \alpha, \eps, \delta) \ge S(\Delta_l, \ell_2, \alpha, \eps, \delta) = \Omega\Paren{\frac{1}{\alpha^2} + \frac1{\dist^2(\eps+\delta)}}.
\]

\subsection{Binary product distribution estimation} 
\label{sec:adp_product}

{We now consider estimation of Bernoulli product distributions under total variation distance. A Bernoulli product distribution in $d$ dimensions is a distribution over $\{0,1\}^\dims$ parameterized by $\mu\in [0,1]^\dims$, where the $i$th coordinate is distributed $\ber(\mu_i)$, where $\ber(\cdot)$ is a Bernoulli distribution. Let $\Delta_{2,\dims}$ be the class of Bernoulli product distributions in $\dims$ dimensions.} 

\begin{theorem}
\label{thm:ApproximateProductTotalVariation}
The sample complexity of $(\eps, \delta)$-DP  binary product  distribution estimation satisfies 
\[
S(\Delta_{2,\dims}, d_{TV}, \alpha, \eps, \delta)  =\Omega \Paren{\frac{\dims}{\alpha^2}+\frac{\dims}{\alpha(\eps+\delta)}}.
\]
\end{theorem}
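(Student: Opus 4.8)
The plan is to prove the two terms of the lower bound separately. The term $\Omega(\dims/\alpha^2)$ is the classical non‑private sample complexity of estimating a $\dims$‑dimensional Bernoulli product in total variation distance, and it also falls out of the first (classical) bound of Theorem~\ref{thm:assouad} applied to the family below, so the real work is the privacy term $\Omega\Paren{\dims/(\alpha(\eps+\delta))}$, for which I would invoke Theorem~\ref{thm:assouad} (DP Assouad). Assume $\alpha$ is below a universal constant and fix $\gamma=\Theta(\alpha)$ with $\gamma\le 1/2$. Index by the hypercube $\cE_\dims=\{\pm1\}^\dims$: for $e\in\cE_\dims$ let $\p_e\in\Delta_{2,\dims}$ be the product distribution whose $i$‑th coordinate is $\ber\Paren{\tfrac{1+\gamma e_i}{2\dims}}$, and set $\cV=\{\p_e^\ns:e\in\cE_\dims\}\subseteq\cP:=\{\p^\ns:\p\in\Delta_{2,\dims}\}$ with $\theta(\p_e^\ns)=\p_e$ and loss $\ell=d_{TV}$. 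The key design choice is scaling \emph{every} coordinate bias by $1/\dims$: this forces $d_{TV}(\p_u,\p_v)$ to grow \emph{linearly} in $\ham{u}{v}$, whereas the textbook $\Theta(1/\sqrt\dims)$‑scaled family only gives square‑root growth and would cost a $\sqrt\dims$ factor in the bound.

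The main step is to check the loss condition~\eqref{eqn:assouad-loss}. If $\ham{u}{v}=m$, then $\p_u$ and $\p_v$ are products agreeing on $\dims-m$ coordinates and differing by $\gamma/\dims$ in Bernoulli mean on the other $m$. Restricting to those $m$ coordinates and applying data processing to the count of $1$'s among the (at least $m/2$) coordinates on which, say, $u_i=+1$, one reduces to $d_{TV}\Paren{\mathrm{Bin}(\ell,\tfrac{1+\gamma}{2\dims}),\mathrm{Bin}(\ell,\tfrac{1-\gamma}{2\dims})}$ with $\ell\le m\le\dims$, and an elementary estimate of $\Paren{1-\tfrac{1-\gamma}{2\dims}}^\ell-\Paren{1-\tfrac{1+\gamma}{2\dims}}^\ell$ gives $d_{TV}(\p_u,\p_v)\ge c_0\gamma m/\dims$ for a universal constant $c_0$; it is crucial here that $m\le\dims$, so the "mass of ones'' stays $O(1)$ and the factor $\Paren{1-\Theta(1/\dims)}^{\ell-1}$ is bounded below. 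Hence~\eqref{eqn:assouad-loss} holds with $2\tau=c_0\gamma/\dims$, and subadditivity gives the matching $d_{TV}(\p_u,\p_v)\le\gamma m/\dims\le\gamma=\Theta(\alpha)$, so these are genuine $\alpha$‑scale instances.

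Next I build the coupling required by~\eqref{eqn:assouad}. For each $i$, the mixtures $\p_{+i}$ and $\p_{-i}$ differ only in coordinate $i$: writing each of the $\ns$ samples as (bit $i$, the remaining $\dims-1$ bits), both distributions equal the \emph{same} law $\mu$ on the "remaining'' block, independent of the bit‑$i$ block, which is $\ns$ i.i.d.\ draws of $\ber\Paren{\tfrac{1+\gamma}{2\dims}}$ under $\p_{+i}$ and of $\ber\Paren{\tfrac{1-\gamma}{2\dims}}$ under $\p_{-i}$. Coupling the $\mu$‑parts identically and each bit‑$i$ pair by the optimal coupling of $\ber\Paren{\tfrac{1+\gamma}{2\dims}}$ and $\ber\Paren{\tfrac{1-\gamma}{2\dims}}$ (disagreement probability $\gamma/\dims$) yields $\expectation{\ham{X}{Y}}\le\gamma\ns/\dims=:D$. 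Theorem~\ref{thm:assouad} with hypercube dimension $\dims$ and $\tau=c_0\gamma/(2\dims)$ then gives
\[
R(\cP,d_{TV},\eps,\delta)\ \ge\ \frac{\dims\tau}{2}\Paren{0.9\,e^{-10\eps D}-10D\delta}\ =\ \frac{c_0\gamma}{4}\Paren{0.9\,e^{-10\eps D}-10D\delta}.
\]
Choosing $\gamma=\Theta(\alpha)$ so that $c_0\gamma/4>\alpha$, the hypothesis $R\le\alpha$ forces $0.9\,e^{-10\eps D}-10D\delta$ below a constant strictly less than $1$, and the usual dichotomy (either $D\delta=\Omega(1)$, so $D=\Omega(1/\delta)$, or $e^{-10\eps D}$ is bounded away from $1$, so $D=\Omega(1/\eps)$) gives $D=\Omega\Paren{1/(\eps+\delta)}$, i.e.\ $\gamma\ns/\dims=\Omega\Paren{1/(\eps+\delta)}$, i.e.\ $\ns=\Omega\Paren{\dims/(\alpha(\eps+\delta))}$. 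The $\Omega(\dims/\alpha^2)$ term comes from the first bound of Theorem~\ref{thm:assouad} on the same family, using $d_{TV}(\p_{+i},\p_{-i})=d_{TV}\Paren{\ber(\tfrac{1+\gamma}{2\dims})^\ns,\ber(\tfrac{1-\gamma}{2\dims})^\ns}=O\Paren{\gamma\sqrt{\ns/\dims}}$ by Hellinger tensorization, which is below $1/2$ once $\ns=O(\dims/\gamma^2)=O(\dims/\alpha^2)$.

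The main obstacle is the loss‑condition step: to extract a factor of $\dims$ rather than $\sqrt\dims$ from Assouad one needs the loss to be \emph{linearly} lower bounded in the hypercube Hamming distance, which fails for the standard $\Theta(1/\sqrt\dims)$‑scaled Bernoulli family; the $1/\dims$ scaling is exactly what repairs it, at the price of some care with the constants in the binomial TV estimate and in the choice of $\gamma$.
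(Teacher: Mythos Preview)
Your proposal is correct and follows essentially the same approach as the paper: the same $1/\dims$-scaled Bernoulli hypercube family, the same event $\{Z_i=0\text{ on }S'\}$ (your binomial-at-zero calculation) to get the linear loss condition~\eqref{eqn:assouad-loss}, and the same per-coordinate flip coupling giving $D=\Theta(\alpha\ns/\dims)$, followed by Theorem~\ref{thm:assouad}. The only minor addition is that you extract the $\Omega(\dims/\alpha^2)$ term from the first bound of Theorem~\ref{thm:assouad} on the same family, whereas the paper simply cites it as the known non-private rate.
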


\newzs{Compared to the upper bound of $O\Paren{\dims \log \Paren{\dims/\dist} \Paren{1/\dist^2+1/\dist\eps}}$ in~\cite{BunKSW2019, KamathLSU18}, our bound is tight up to logarithmic factors when $\delta \le \eps$. \cite{KamathLSU18} also presents a lower bound of $\Omega \Paren{\frac{\dims}{\alpha^2}+\frac{\dims}{\alpha(\eps+\delta)}}$ under $(\eps, \delta)$-DP when $\delta = O(1/\ns)$. Although $\delta = O(1/\ns)$ is the more interesting regime in practice, our bound complements the result by stating that the utility will not improve even if $\delta$ can be as large as $\eps$.}
\begin{proof}	
{Since $\Theta(\dims/\eps^2)$ is an established tight bound for non-private estimation, we only prove the second term.}

{We start by constructing a set of Bernoulli product distributions indexed by $\cE_{\dims} = \{ \pm1\}^{\dims}$. For all $e \in \cE_{\dims}$, let $\p_e = \ber(\mu^e_1) \times \ber(\mu^e_2) \times \cdots \times \ber(\mu^e_\dims)$, where 
\[
\mu^e_i = \frac{1+ e_i\cdot 20\dist}{\dims}.
\]}
{Let $\cV = \{ \p_e^{\ns}, e \in \cE_{\dims}\}$, the set of distributions of $n$ i.i.d samples from $\p_e$, and $\theta\Paren{\p_e^{\ns}} = \p_e$. For $u, v\in \cE_{\dims},$, 	$\ell( \theta(\p_u^n),  \theta(\p_v^n)) = d_{TV} (p_{u},p_{v})$. We frist prove that~\eqref{eqn:assouad-loss} holds under total variation distance for an appropriate $\tau$.}


\begin{lemma}
\label{lem:binaryproducttv}
There exists a constant $C_1 > 5 $ such that $ \forall u, v \in \cE_{\dims}$,
\[ 
d_{TV} (\p_{u},\p_{v}) \ge \frac{C_1 \dist}{\dims}\cdot  \sum_{i=1}^{\dims}\mathbb{I}\Paren{u_i \neq v_i}.
\]
\end{lemma}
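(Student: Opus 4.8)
The plan is to collapse the $\dims$-dimensional product comparison to a one-dimensional one and then estimate that one-dimensional total-variation distance sharply. Write $m:=\sum_{i}\mathbb{I}\Paren{u_i\neq v_i}$, and split the differing coordinates as $A:=\Brack{i:u_i=+1,\ v_i=-1}$ and $B:=\Brack{i:u_i=-1,\ v_i=+1}$, so $|A|+|B|=m$; set $p:=\frac{1+20\dist}{\dims}$ and $q:=\frac{1-20\dist}{\dims}$, noting $\absv{p-q}=\frac{40\dist}{\dims}$. On every coordinate $i$ with $u_i=v_i$ the marginals of $\p_u$ and $\p_v$ coincide, and the total-variation distance of a product of measures is unaffected by deleting a factor common to both; hence $\dtv{\p_u}{\p_v}$ equals the distance between the product law with marginal $\ber(p)$ on the coordinates in $A$ and $\ber(q)$ on those in $B$, and the product law with these two marginals swapped, and it suffices to bound this below by $\frac{C_1\dist}{\dims}\,m$ for some $C_1>5$.

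The key structural observation is that the signed count $T:=\sum_{i\in A}X_i-\sum_{i\in B}X_i$ is a sufficient statistic for this binary model: on a type-$B$ coordinate the roles of $p$ and $q$ are swapped relative to a type-$A$ coordinate, so the per-coordinate contributions to the log-likelihood ratio of the two restricted laws are exact negatives of one another, making that log-likelihood ratio an affine function of $T$. Therefore $\dtv{\p_u}{\p_v}$ equals the total-variation distance between the law of $T$ under $\p_u$ and under $\p_v$. These two laws are shifts of Poisson--binomial distributions; their means differ by exactly $m\absv{p-q}=\frac{40\dist\,m}{\dims}$, and since every Bernoulli parameter is $\Theta(1/\dims)$ while $\dist<1/10$ keeps $1-p,1-q=\Theta(1)$, each has variance $\Theta(m/\dims)$. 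So the standardized gap between the two laws of $T$ is $\Theta\Paren{\dist\sqrt{m/\dims}}$.

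Finally, feeding this into a sharp total-variation lower bound for two nearby lattice laws --- a local central limit theorem for Poisson--binomials, or a direct estimate on binomial tails, analogous to the facts cited from \cite{KamathLSU18} in the companion proofs --- yields $\dtv{\p_u}{\p_v}\ge c_0\min\Paren{1,\ 40\dist\sqrt{m/\dims}}$ for an absolute constant $c_0$. Since the lemma is only needed for $\dist$ below a fixed absolute constant, $40\dist\sqrt{m/\dims}\le 40\dist\le 1$, so $\dtv{\p_u}{\p_v}\ge 40c_0\dist\sqrt{m/\dims}\ge 40c_0\dist\cdot\frac{m}{\dims}$, using $\sqrt{m/\dims}\ge m/\dims$ because $m\le\dims$; this is the claim with $C_1=40c_0$. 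The step I expect to be the real obstacle is this last one: it requires the local-limit estimate with a leading constant good enough that $40c_0>5$ --- the Hellinger bound $d_{TV}\ge H^2$ only gives $\Omega(\dist^2 m/\dims)$, short by a factor $\dist$, because it does not see that the pmf of $T$ is $\Theta(\sqrt{\dims/m})$ throughout its bulk, so that shifting its mean by $\frac{40\dist m}{\dims}$ shifts the cdf by $\Theta(\dist\sqrt{m/\dims})$. The factor $40$ (equivalently, the ``$20$'' built into $\mu^e_i$) is exactly what keeps $C_1>5$; for the finitely many small values of $m$ where the local limit is quantitatively weak one instead uses the crude per-coordinate bound $\dtv{\p_u}{\p_v}\ge\absv{p-q}=\frac{40\dist}{\dims}$, which already supplies a large constant.
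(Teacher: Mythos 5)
Your reduction to the sufficient statistic $T=\sum_{i\in A}X_i-\sum_{i\in B}X_i$ is correct, and so are the mean gap $40\dist m/\dims$ and the variance $\Theta(m/\dims)$. But the step you flag as the ``real obstacle'' is not merely a matter of constants: the asserted lower bound $d_{TV}\ge c_0\min\{1,40\dist\sqrt{m/\dims}\}$ is false whenever $m=o(\dims)$, which is almost the entire range. A local CLT puts the pmf of $T$ near its mean at $\Theta(1/\sigma)$ only when $\sigma=\Theta(\sqrt{m/\dims})$ is at least order one, i.e.\ $m=\Theta(\dims)$. Here the Bernoulli parameters are $\Theta(1/\dims)$, so for $m\ll\dims$ the law of $T$ is concentrated at $0$ with $\Pr(T=0)=1-\Theta(m/\dims)$; the pmf at $0$ is \emph{not} $\Theta(\sqrt{\dims/m})$ (that would exceed $1$), and shifting the mean by $\delta$ moves the TV by $\Theta(\delta)$, not $\Theta(\delta/\sigma)$. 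Concretely, for $m=1$ the two laws of $T$ are $\ber\bigl(\tfrac{1+20\dist}{\dims}\bigr)$ and $\ber\bigl(\tfrac{1-20\dist}{\dims}\bigr)$ with $d_{TV}=40\dist/\dims$, far below your claimed $\gtrsim 40 c_0\dist/\sqrt{\dims}$; as $\dims\to\infty$ no universal $c_0>0$ survives. The ``finitely many small $m$'' patch does not close this --- the local-limit intuition is wrong for every $m=o(\dims)$, and for intermediate $m$ (say $m\approx\sqrt{\dims}$) neither the per-coordinate bound $40\dist/\dims$ nor a CLT bound is available. (Ironically, after you weaken $\sqrt{m/\dims}$ to $m/\dims$ you arrive at a true inequality, but the argument given does not establish it.)

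The paper avoids any normal approximation. It takes the larger half of the discrepancy set, say $S'=\{i:u_i\ne v_i,\ u_i=1\}$ with $\absv{S'}\ge m/2$, and bounds $d_{TV}(\p_u,\p_v)$ from below by the gap in probability of the single event $\{Z_i=0\ \forall i\in S'\}$: this gives $\bigl(1-\tfrac{1-20\dist}{\dims}\bigr)^{|S'|}-\bigl(1-\tfrac{1+20\dist}{\dims}\bigr)^{|S'|}\ge\tfrac{40\dist}{\dims}\,|S'|\,\bigl(1-\tfrac{1+20\dist}{\dims}\bigr)^{|S'|}\ge\tfrac{40\dist}{\dims}\cdot\tfrac{m}{2}\cdot e^{-(1+20\dist)}$, which is $\ge\tfrac{C_1\dist}{\dims}m$ with $C_1=20e^{-(1+20\dist)}>5$ for $\dist<0.01$ and $\dims$ large. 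In your framing this is precisely the small-mean estimate for $\Pr(\sum_{i\in S'}X_i=0)$ --- the regime where comparing the two product laws at the single atom $0$ already captures the TV distance up to a constant --- so if you want to salvage the proof, replace the local CLT by a direct computation of this ``all-zero'' probability.
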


\begin{proof}
Let $S = \{i \in [\dims] : u_i \neq v_i\}$, and $S^{\prime} = \{i \in S : u_i = 1\} $. WLOG, let $\absv{S^{\prime}} \ge \frac12 \absv{S}$ (or else we can define $S^{\prime} = \{i \in S : u_i = -1\}$). Given a random sample $Z \in \{ \pm1\}^{\dims}$, we define an event $A = \{\forall i \in S^{\prime}, Z_i=0\}$. Now we consider the difference between the following two probabilities, which is a lower bound of the total variation distance between $\p_u$ and $\p_v$.
\begin{align}
d_{TV} (\p_{u},\p_{v}) &\ge \absv{\probofsub{Z \sim \p_{u}} {A} - \probofsub{Z \sim \p_{v}} {A}} \nonumber\\
&= \Paren{1-\frac{1-20\dist}{\dims}}^{\absv{S^{\prime}}} - \Paren{1-\frac{1+20\dist}{\dims}}^{\absv{S^{\prime}}} \nonumber\\
&\ge \frac{40\dist}{\dims} \cdot  \absv{S^{\prime}} \cdot  \Paren{1-\frac{1+20\dist}{\dims}}^{\absv{S^{\prime}}} \nonumber \\
& \ge \frac{40 \dist}{\dims} \cdot \absv{S'} e^{-(1+20\dist)} \ge \frac{C_1\dist}{\dims} \cdot \ham{u}{v}, \nonumber
\end{align}
where in the last two inequalities, we assume $\dims \ge 1000$ and $\dist<0.01$.
\end{proof} 

{Let $D$ be an upper bound on the expected Hamming distance for a coupling between $\dP_{+i}$ and $\dP_{-i}$ over all $i$.  Since $\cV_{\dims} \subset \Delta_{2,\dims}$, applying Theorem~\ref{thm:assouad} with Lemma~\ref{lem:binaryproducttv} we have
\[
R(\cP, d_{TV}, \eps, \delta) \ge \frac{C_1\alpha}{2\dims} \cdot \dims \cdot \Paren{0.9 e^{-10 \eps D} - 10 D \delta} = \frac{C_1\alpha}{2} \cdot \Paren{0.9 e^{-10 \eps D} - 10 D \delta}. 
\]}

{Setting $R(\cP, d_{TV}, \eps, \delta) \le \alpha$, we get $D =\Omega\Paren{\frac1\eps}$ or $D =\Omega\Paren{\frac1\delta}$, or equivalently, $D =\Omega\Paren{\frac1{\eps+\delta}}$. Lemma~\ref{lem:coupling_distance} below shows that we can take $ D =\frac{40\dist\ns}{\dims}$, which proves the result.}
\end{proof}

\begin{lemma} \label{lem:coupling_distance}
There is a coupling between $(X,Y)$ between $\dP_{+i}$ and $\dP_{-i}$, such that
$\expectation{\ham{X}{Y}} \le \frac{40\dist \ns}{\dims}.$
\end{lemma}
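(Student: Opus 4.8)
The plan is to mirror the coupling construction from the $k$-ary case (the previous Lemma~\ref{lem:coupling_distance} in Section~\ref{sec:approximate-tv}), adapting it to the Bernoulli product setting. Recall that $\dP_{+i}$ and $\dP_{-i}$ are mixtures over $e\in\cE_\dims$ with $e_i$ fixed; since all the component distributions $\p_e$ agree on every coordinate $j\ne i$ and differ only in the $i$-th coordinate (where the mean is $\tfrac{1+20\dist}{\dims}$ under $+1$ versus $\tfrac{1-20\dist}{\dims}$ under $-1$), the two mixtures $\dP_{+i}$ and $\dP_{-i}$, viewed as distributions over $(\{0,1\}^\dims)^\ns$, have identical marginals on all coordinates $j\ne i$ of all $\ns$ samples, and differ only in the $i$-th coordinate of each of the $\ns$ samples, which is $\ber\!\Paren{\tfrac{1+20\dist}{\dims}}$ versus $\ber\!\Paren{\tfrac{1-20\dist}{\dims}}$ i.i.d.

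First I would set up the coupling: draw $X\sim\dP_{+i}$ (which fixes the mixture component $e$ with $e_i=+1$ and all sample values), keep every coordinate $j\ne i$ of every sample unchanged, and re-randomize only the $i$-th coordinate of each of the $\ns$ samples so that it becomes $\ber\!\Paren{\tfrac{1-20\dist}{\dims}}$-distributed. Concretely, use the standard maximal coupling of $\ber(a)$ and $\ber(b)$ with $a=\tfrac{1+20\dist}{\dims}>b=\tfrac{1-20\dist}{\dims}$: when $X$'s $i$-th coordinate is $0$ (probability $1-a$), leave it at $0$ (it is then correctly $\ber(b)$-distributed on that event since $1-a<1-b$); when it is $1$ (probability $a$), flip it to $0$ with probability $(a-b)/a=\tfrac{40\dist}{1+20\dist}$ and leave it at $1$ otherwise. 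One checks the resulting $i$-th coordinate is $\ber(b)$, so the coupled sequence $Y$ has the right conditional law given $e$, hence $Y\sim\dP_{-i}$ after averaging over $e$. The Hamming distance $\ham{X}{Y}$ equals the number of samples whose $i$-th coordinate got flipped, which is a sum of $\ns$ independent indicators each with probability $a\cdot\tfrac{40\dist}{1+20\dist}=\tfrac{40\dist}{\dims}$, so $\expectation{\ham{X}{Y}}=\tfrac{40\dist\ns}{\dims}$.

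The only mild subtlety — the part worth stating carefully rather than the part that is hard — is that $\dP_{+i}$ and $\dP_{-i}$ are mixtures, not single product distributions, so one must verify the coupling is well-defined at the mixture level: since the above per-coordinate recipe depends only on the value of $X$'s $i$-th coordinate (and not on which $e$ was drawn or on the other coordinates), it can be applied after sampling from the mixture, and the transformed sequence has the same mixture weights with $e_i$ flipped to $-1$, i.e. distribution $\dP_{-i}$. There is no real obstacle here; the argument is essentially identical to the $k$-ary case and the bound $\tfrac{40\dist\ns}{\dims}$ follows from linearity of expectation. Substituting $D=\tfrac{40\dist\ns}{\dims}$ into the displayed consequence of Theorem~\ref{thm:assouad} completes the proof of Theorem~\ref{thm:ApproximateProductTotalVariation}.
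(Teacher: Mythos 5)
Your proposal is correct and follows the same approach as the paper's own proof: keep all coordinates $j\neq i$ fixed, and maximally couple the $i$-th coordinate of each of the $\ns$ samples by flipping the higher-probability outcome with probability $\frac{40\dist}{1+20\dist}$, yielding $\expectation{\ham{X}{Y}} = \frac{1+20\dist}{\dims}\cdot\frac{40\dist}{1+20\dist}\cdot\ns = \frac{40\dist\ns}{\dims}$. You are somewhat more careful than the paper's one-line argument in two ways: you note explicitly that the data values are in $\{0,1\}$ (the paper's phrase ``if the $i$th coordinate is $+1$'' conflates the index sign with the bit value), and you verify that the coupling lifts cleanly to the mixture level because the re-randomization depends only on the observed $i$-th bit and not on the latent $e$.
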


\begin{proof}
We generate $Y\sim \dP_{-i}$ from $X\sim\dP_{+i}$ as follows. If the $i$th coordinate of a sample $X$ is $+1$, we independently flip it to $-1$ with probability  $\frac{40\alpha}{1+20\alpha}$ to obtain a sample $Y$. The expected Hamming distance is bounded by $\frac{40\alpha}{1+20\alpha} \cdot \frac{1+20\alpha}{\dims} \cdot \ns = \frac{40\alpha \ns}{\dims}$. 
\end{proof}

\section{Proof of Theorems}
\label{sec:proofs}
\subsection{Proof of DP Le Cam's method (Theorem~\ref{thm:le_cam})}
\label{sec:le}
The proof technique is similar to the proof of coupling lemma in~\cite{AcharyaSZ18}. However, we directly characterize the error probability in Theorem~\ref{thm:le_cam}, which we then use to prove Theorem~\ref{thm:assouad} (DP Assouad's method).

\dplecam*

\begin{proof}
{From~\eqref{eqn:error-prob},} 
	\[
	P_e(\hat \theta, \cP_1, \cP_2) \ge \frac12 \Paren{ \probofsub{ X\sim \p_1 }{ \hat \theta(X) \neq \p_1 } + \probofsub{X\sim \p_2 }{\hat \theta(X) \neq \p_2 }}.
	\]
	
{The first term in Theorem~\ref{thm:le_cam} follows from the classic Le Cam's method (Lemma 1 in~\cite{Yu97}). For the second term, let $(X, Y)$ be distributed according to a coupling of $\p_1$ and $\p_2$ with $\expectation{\ham{X}{Y}} \le D$. 
	By Markov's inequality, 
	$\probof{\ham{X}{Y}>10D}<0.1$.
	Let $x$ and $y$ be the realization of $X$ and $Y$.  $W := \{ (x,y) \in \cX^n \times \cX^n| \ham{x}{y} \le 10D \}$ be the set of pairs of realizations with Hamming distance at most $10D$.} Then we have 
	\begin{align}
	\probof{\alg\Paren{X}=\p_2}  = & \sum_{x,y} \probof{X = x, Y = y} \cdot \probof{\alg\Paren{x}=\p_2} \nonumber\\
	\ge &  \sum_{(x,y)\in W} \probof{X = x, Y = y} \cdot \probof{\alg\Paren{x}=\p_2}.
	\end{align}
Let  $\beta_1=\probof{\alg\Paren{X}=\p_2} $, so we have 
	$$ \sum_{(x,y)\in W} \probof{X = x, Y = y}  \cdot\probof{\alg\Paren{x}=\p_2} \le \beta_1$$
	
	Next, we need the following group property of differential privacy.
	
	\begin{lemma} \label{lem:group}
		Let $\alg$ be a $(\eps,\delta)$-DP algorithm, then for sequences $x$, and $y$ with $\ham{x}{y} \le t$, and $\forall S$, 
		$		\probof{\hat \theta(x)\in S}\le e^{t\eps}\cdot\probof{\hat \theta(y)\in S}+ \delta t e^{\eps (t-1)}$. 
	\end{lemma}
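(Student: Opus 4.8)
The plan is to prove Lemma~\ref{lem:group} by induction on $t$ via the standard telescoping argument along a path of datasets interpolating from $x$ to $y$.

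First observe that both $e^{t\eps}$ and $t\,e^{\eps(t-1)}$ are nondecreasing in $t\ge 1$, so it suffices to establish the inequality when $\ham{x}{y}$ equals an integer $t$ exactly; the statement for $\ham{x}{y}\le t$ then follows by monotonicity of $s\mapsto e^{s\eps}$ and $s\mapsto s\,e^{\eps(s-1)}$. The base case $t=1$ is precisely the defining inequality~\eqref{eqn:dp-definition} of $(\eps,\delta)$-DP.

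For the inductive step, assume the claim holds for $t-1$ and let $\ham{x}{y}=t$. Choose an intermediate sequence $z\in\cX^\ns$ that agrees with $x$ on every coordinate except one of the $t$ coordinates on which $x$ and $y$ differ, setting that coordinate to match $y$; then $\ham{x}{z}=t-1$ and $\ham{z}{y}=1$. For any $S$, the inductive hypothesis applied to $(x,z)$ gives $\probof{\hat\theta(x)\in S}\le e^{(t-1)\eps}\probof{\hat\theta(z)\in S}+\delta(t-1)e^{\eps(t-2)}$, and the DP definition applied to $(z,y)$ gives $\probof{\hat\theta(z)\in S}\le e^{\eps}\probof{\hat\theta(y)\in S}+\delta$. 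Substituting the second bound into the first yields
\[
\probof{\hat\theta(x)\in S}\le e^{t\eps}\probof{\hat\theta(y)\in S}+\delta e^{(t-1)\eps}+\delta(t-1)e^{\eps(t-2)}.
\]

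It remains to control the two $\delta$-terms: since $e^{\eps(t-2)}\le e^{\eps(t-1)}$, we get $\delta e^{(t-1)\eps}+\delta(t-1)e^{\eps(t-2)}\le \delta e^{(t-1)\eps}+\delta(t-1)e^{(t-1)\eps}=\delta t\,e^{\eps(t-1)}$, which closes the induction. There is essentially no substantive obstacle in this argument; the only points that require care are the bookkeeping of the constant multiplying $\delta$ (so that it comes out to exactly $t\,e^{\eps(t-1)}$ rather than a weaker bound) and the passage from $\ham{x}{y}=t$ to $\ham{x}{y}\le t$ via the monotonicity noted above.
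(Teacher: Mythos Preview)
Your induction argument is correct and is exactly the standard telescoping proof of the group privacy property. The paper does not actually supply its own proof of Lemma~\ref{lem:group}; it simply invokes the lemma as the well-known group property of differential privacy, so there is no alternative approach to compare against.
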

	
	\noindent By Lemma~\ref{lem:group}, and $\probof{\ham{X}{Y}>10D}<0.1$, let $\probof{\alg\Paren{Y}=\p_2}=1- \beta_2$,
	\begin{align}
	1- \beta_2=&  
	\sum_{(x,y)\in W} \probof{x,y}  \cdot \probof{\alg\Paren{y}=\p_2}+ 
	\sum_{(x,y)\notin W} \probof{x,y} \cdot \probof{\alg\Paren{y}=\p_2} \nonumber \\
	\leq& \sum_{(x,y)\in W} \probof{x,y}  \cdot \Paren{ e^{\eps \cdot 10D} \probof{\alg\Paren{x}=\p_2} + 10 D\delta \cdot e^{\eps \cdot 10(D-1)}} +0.1	 \nonumber \\
	\leq& \beta_1 \cdot e^{\eps\cdot 10 D} +  10 D\delta \cdot e^{\eps \cdot 10D} + 0.1. \nonumber
	\end{align}
	Similarly, we get
	\begin{align}
	1- \beta_1 \le  \beta_2 \cdot e^{\eps\cdot 10 D} +  10 D\delta \cdot e^{\eps \cdot 10D} + 0.1. \nonumber
	\end{align}
	
	Adding the two inequalities and rearranging terms,
	\[
	\beta_1 + \beta_2 \ge \frac{1.8 - 20D\delta e^{\eps \cdot 10D} }{1 + e^{\eps\cdot 10 D}} \ge 0.9 e^{-10 \eps D} - 10 D \delta.
	\]
\end{proof}

\subsection{Proof of private Fano's inequality (Theorem~\ref{thm:dp_fano})}
\label{sec:fano_pro}
In this section, we prove $\eps$-DP Fano's inequality (Theorem~\ref{thm:dp_fano}), restated below. 

\dpfano*

The proof is based on the observation that if you can change a sample from $\p_i$ to $\p_j$ by changing $D$ coordinates in expectation, then an algorithm that  algorithm that correctly outputs a sample as from $\p_i$ has to output $\p_j$ with probability roughly $e^{-\eps D}$. With a total of $M$ distributions in total, we show that the error probability is large as long as $\frac{M}{e^{\eps D}}$ is large.

\begin{proof}
{The first term in~\eqref{eqn:fano_result} follows from the non-private Fano's inequality (Lemma~3 in~\cite{Yu97}).}
{For an observation $X\in\cX^\ns$,} 
\[
	\hat{\p}(X) := \arg \min_{\p \in \cV} \ell \Paren{\theta(\p),  \hat \theta(X)},
\]
{be the distribution in $\cP$ closest in parameters to an $\eps$-DP estimate $\hat \theta(X)$. Therefore, $\hat{\p}(X)$ is also $\eps$-DP. By the triangle inequality,
	\[
	\ell\Paren{\theta(\hat \p), \theta(\p)} \le \ell \Paren{\theta(\hat \p), \hat \theta(X)}  + \ell \Paren{ \theta(\p), \hat \theta(X)} \le 2\ell \Paren{\theta(\p), \hat \theta(X)}.
	\]}
	Hence,
	\begin{align} 
	\max_{\p \in \cP} \EE_{X \sim \p} \left[\ell(\hat \theta (X),\theta (\p)) \right]  
	\ge \max_{\p \in \cV} \EE_{X \sim \p} \left[\ell(\hat \theta (X),\theta (\p)) \right]
	&\ge \frac{1}{2}\max_{\p \in \cV} \EE_{X \sim \p} \left[\ell(\theta(\hat \p) ,\theta (\p)) \right]  \nonumber \\
	& \ge \max_{\p \in \cV} \frac{\alpha}{2}\probofsub{X \sim \p}{\hat{\p} (X) \neq \p} \nonumber \\
	&\ge \frac{\alpha}{2M}\sum_{\p \in \cV} \probofsub{X \sim \p}{\hat{\p} (X) \neq \p}.
	\label{eqn:reduction}
	\end{align}
	Let $\beta_i = \probofsub{X \sim \p_i}{\hat{\p} (X) \neq \p_i}$ be the probability that $\hat p(X)\ne\p_i$ when the underlying distribution generating $X$ is $\p_i$. For $\p_i, \p_j \in \cV$, let $(X,Y)$ be the coupling in condition $(c)$. By Markov's inequality $\probof{\ham{X}{Y}>10D}<1/{10}.$

{Similar to the proof of Theorem~\ref{thm:le_cam} in the previous section, let $W := \{ (x,y) | \ham{x}{y} \le 10D \}$ and $\probof{x,y}  = \probof{X = x, Y = y}$ . Then
	\begin{align}
	1- \beta_j = \probof{\hat{\p}\Paren{Y} =\p_j}  \leq  \sum_{(x, y)\in W} \probof{x,y}  \cdot \probof{\hat{\p} \Paren{y}  = \p_j} + \sum_{(x, y)\notin W} \probof{x,y}  \cdot 1. \nonumber
	\end{align}}
Therefore, 
	\[ \sum_{(x, y)\in W} \probof{x,y}  \cdot \probof{\hat{\p} \Paren{y}  = \p_j} \ge 0.9 - \beta_j.\] Then, we have
	\begin{align}
	\probof{\hat{\p} \Paren{X}  = \p_j} & \ge  \sum_{(x,y)\in W} \probof{x, y} \cdot \probof{\hat{\p} \Paren{x}  = \p_j}\nonumber \\ 
	&\ge \sum_{(x,y)\in W} \probof{x, y} e^{-10 \eps D} \probof{\hat{\p} \Paren{y}  = \p_j} \label{eqn:step-dp} \\
	& \ge ( 0.9 - \beta_j)  e^{-10 \eps D},\nonumber
	\end{align}
	where~\eqref{eqn:step-dp} uses that $\hat p$ is $\eps$-DP and $\ham{x}{y} \le 10D$. Similarly, for all $j' \neq i$, 
	\[
	\probof{\hat{\p} \Paren{X}  = \p_{j'}} \ge ( 0.9 - \beta_{j'})  e^{-10 \eps D}.
	\]
	Summing over $j'\ne i$, we obtain
	\begin{align}
	\beta_i &= \sum_{j' \neq i} \probof{\hat{\p} \Paren{X} = \p_{j'}}  \ge \Paren{ 0.9(M-1) - \sum_{j' \neq i}\beta_{j'} } e^{-10 \eps D}.\nonumber
	\end{align}
	Summing over $i \in [M]$,
	\[
	\sum_{i \in [M]} \beta_i \ge  \Paren{ 0.9 M (M-1) - (M - 1)\sum_{i \in [M]} \beta_{i} } e^{-10 \eps D}.
	\]
	Rearranging the terms
	\[
	\sum_{i \in [M]} \beta_i \ge \frac{0.9 M (M - 1)}{M - 1 + e^{10 \eps D}} \ge 0.8M \min\left\{1,  \frac{M}{e^{10 \eps D}}\right\}.
	\]
	Combining this with~\eqref{eqn:reduction} completes the proof.
\end{proof}

\subsection{Proof of Corollary~\ref{coro:fano}} \label{sec:coro_fano_proof}
%
\begin{proof}
Recall that $\cQ^{\ns} := \{ \q^\ns | \q \in \cQ \}$ is the set of induced distributions over $\cX^\ns$ and $\q^\ns \in \cQ^\ns, \theta(\q^\ns) = \q$. Then,
	$\forall i \neq j \in [M]$,
$\ell \Paren{\theta(\q_i^n),\theta(\q_j^n)} \ge 3 \tau$, and $D_{KL} \Paren{\q_i^n,\q_j^n} = n D_{KL} \Paren{\q_i,\q_j}  \le n \beta.$

{The following lemma is a corollary of maximal coupling~\cite{Hollander12}, which states that for two distributions there is a coupling of their $n$ fold product distributions with an expected Hamming distance $n$ times their total variation distance.}
	\begin{lemma}
		\label{lem:couplingdistance}
		Given distributions $\q_1,\q_2$ over $\cX$, there exists a coupling $(X,Y)$ between $\q_1^n$ and $\q_2^n$ such that  
		\[\expectation{\ham{X}{Y}} = \ns \cdot d_{TV} \Paren{\q_1,\q_2},
		\]
		where $X \sim \q_1^n$ and $Y \sim \q_2^n.$
	\end{lemma}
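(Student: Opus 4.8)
\textbf{Proof plan for Lemma~\ref{lem:couplingdistance}.} The plan is to build the coupling coordinate by coordinate using the \emph{maximal coupling} of $\q_1$ and $\q_2$, and then take $\ns$ independent copies. Let $p := \dtv{\q_1}{\q_2}$. First I would recall the standard construction of a coupling $(X_1,Y_1)$ of $\q_1$ and $\q_2$ on $\cX$: let $m(x) := \min\{\q_1(x),\q_2(x)\}$, which has total mass $\sum_x m(x) = 1-p$. With probability $1-p$, draw a common value $x$ from the normalized overlap $m(\cdot)/(1-p)$ and set $X_1 = Y_1 = x$; with probability $p$, draw $X_1$ from the normalized residual $(\q_1(\cdot)-m(\cdot))/p$ and \emph{independently} draw $Y_1$ from $(\q_2(\cdot)-m(\cdot))/p$. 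A direct check shows the marginals are exactly $\q_1$ and $\q_2$.

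Next I would verify that $\probof{X_1\neq Y_1} = p$. On the first branch $X_1 = Y_1$ always, so disagreement can only occur on the second branch, which has probability $p$; and on that branch $X_1$ lies in the set $\{x:\q_1(x)>\q_2(x)\}$ while $Y_1$ lies in $\{x:\q_2(x)>\q_1(x)\}$, and these two sets are disjoint, so $X_1\neq Y_1$ with probability $1$ conditioned on the second branch. Hence $\probof{X_1\neq Y_1}=p$ exactly.

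Finally, to handle the $\ns$-fold product, I would let $(X,Y) = ((X^{(1)},\ldots,X^{(\ns)}),(Y^{(1)},\ldots,Y^{(\ns)}))$ where the pairs $(X^{(i)},Y^{(i)})$ are i.i.d.\ copies of the maximal coupling above. Then $X\sim\q_1^\ns$ and $Y\sim\q_2^\ns$ by independence of coordinates, and by linearity of expectation
\[
\expectation{\ham{X}{Y}} = \sum_{i=1}^{\ns}\probof{X^{(i)}\neq Y^{(i)}} = \ns\cdot p = \ns\cdot\dtv{\q_1}{\q_2},
\]
as claimed.

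There is no real obstacle here; this is a textbook maximal-coupling argument. The only point that deserves a sentence of care is the disjointness of the supports of the two residual distributions, which is what upgrades the usual inequality $\probof{X_1\neq Y_1}\le p$ to the equality needed for the statement. Once this lemma is in hand, combining it with the bounds on $\ell$ and $D_{KL}$ noted just above (which scale as $3\tau$ and $n\beta$ respectively) and invoking Theorem~\ref{thm:dp_fano} with $D = \ns\gamma$ will yield the stated sample complexity lower bound.
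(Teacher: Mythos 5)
Your proof is correct and takes essentially the same approach the paper intends: the paper simply cites maximal coupling (attributed to~\cite{Hollander12}) without writing out the construction, and your argument supplies exactly that standard argument — a per-coordinate maximal coupling with disagreement probability exactly $d_{TV}(\q_1,\q_2)$, tensorized over $\ns$ i.i.d.\ coordinates, followed by linearity of expectation. Your remark on the disjointness of the residual supports, which upgrades the usual $\le$ to the stated equality, is the right point of care.
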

\noindent By Lemma~\ref{lem:couplingdistance}, $\forall i,j \in [M]$, there exists a coupling $(X, Y)$ between $\q_i^n$ and $\q_j^n$ such that $\expectation{\ham{X}{Y}} \le n\gamma. $
	Now by Theorem~\ref{thm:dp_fano},
	\begin{align}
	R(\cQ^*, \ell, \eps) \ge \max \Bigg\{\frac{3\tau}{2} \left(1 - \frac{n \beta + \log 2}{\log M}\right), 1.2\tau \min\left\{1, \frac{M}{e^{10\eps n \gamma}}\right\} \Bigg\}.
	\end{align}
Therefore, for $R(\cQ^\ns, \ell, \eps) \le \tau$, 
	\[
	S(\cQ, \ell, \tau, \eps) = \Omega\Paren{\frac{\log M}{\beta}+ \frac{\log M}{\gamma \eps}}.
	\]
\end{proof}

\subsection{{Proof of private Assouad's method (Theorem~\ref{thm:assouad})}}
\label{sec:private_assouad}
\newzs{We restate the theorem below and the notions are the same as defined in Section~\ref{sec:dp-assouad}.}
\dpassouad*
\begin{proof}
{The first part is from the non-private Assouad's lemma, which we include here for completeness. Let $\p\in\cV \subset \cP$ and $X\sim\p$. For an estimator $\hat \theta(X)$, consider an estimator $\he(X) = \arg \min_{e \in \cE_{\ab}} \ell \Paren{\hat \theta(X), \theta(\p_e)}$.} Then, by the triangle inequality,
		\[
	\ell \Paren{ \theta(\p_{\he}), \theta(\p) } \le \ell \Paren{ \hat \theta, \theta(\p_{\he})}  + \ell \Paren{ \hat \theta, \theta(\p) } \le 2 \ell \Paren{ \hat \theta, \theta(\p) }.
	\]
	Hence,
	\begin{equation} \label{eqn:reduce_index}
	R(\cV, \ell, \eps, \delta) = \min_{\hat \theta \text{ is }(\eps,\delta)-DP}\ \max_{\p \in \cV}\ \EE_{X \sim \p}\left[\ell(\hat \theta (X),\theta(p))\right] \ge \frac{1}{2} \min_{\he \text{ is }(\eps,\delta)-DP}\ \max_{\p \in \cV}\ \EE_{X \sim \p}\left[\ell(\theta(\p_{\he(X)}),\theta(p))\right]
	\end{equation}
	
\noindent For any $(\eps, \delta)$-DP index estimator $\he$, and by~\eqref{eqn:assouad-loss},
	\begin{align}
	\max_{\p \in \cV }\expectationsub{X \sim \p} {\ell(\theta(\p_{\he}), \theta(\p))} \geq \frac1{|\cE_{\ab}|} \sum_{ e \in \cE_{k}} \expectationsub{X \sim \p_{e}} {\ell(\theta(\p_{\he}),\theta(\p_{e}) )} 
	\geq  \frac{2\tau}{|\cE_{\ab}|}  \sum_{i=1}^{k}  \sum_{ e \in \cE_{k}} \probof{\he_i \neq {e}_i| E = e }. \nonumber 
	\end{align}
	%
	
\noindent For each $i$, we divide $ \cE_{k} =  \{\pm1\}^{k}$ into two sets according to the value of $i$-th position,
	\begin{align}
	\max_{\p \in \cV }\expectationsub{X \sim \p} {\ell(\theta(\p_{\he}), \theta(\p))} & \geq  \frac{2\tau}{|\cE_{\ab} |} \sum_{i=1}^{k}  \Brack {\sum_{e:e_i=1} \probof{ \he_i\neq 1| E=e }+\sum_{e:e_i=-1} \probof{\he_i \neq -1  | E = e} }\nonumber\\
	&= \tau \cdot \sum_{i=1}^{\ab}  \Paren{ \probofsub{ X\sim \p_{+i} }{ \he_i\neq 1 } + \probofsub{X\sim \p_{-i} }{\he_i\neq -1 }}  \nonumber \\
	&\ge \tau \cdot \sum_{i=1}^{\ab} \min_{ \phi_i: \phi_i \text{ is DP}}  \Paren{ \probofsub{ X\sim \p_{+i} }{ \phi_i(X) \neq 1 } + \probofsub{X\sim \p_{-i} }{\phi_i(X) \neq -1 }}.  \nonumber
	\end{align}
	Combining with~\eqref{eqn:reduce_index}, we have
	\[
	R(\cP, \ell, \eps, \delta)  \ge R(\cV, \ell, \eps, \delta) \ge  \frac{\tau}{2} \cdot \sum_{i=1}^{\ab} \min_{ \phi_i: \phi_i \text{ is DP}}  \Paren{ \probofsub{ X\sim \p_{+i} }{ \phi_i(X) \neq 1 } + \probofsub{X\sim \p_{-i} }{\phi_i(X) \neq -1 }},
	\]
	proving the first part.

	For the second part. Note that for each $i \in [\ab]$, the summand above is the error probability of hypothesis testing between the mixture distributions $\p_{+i}$ and $\p_{-i}$. Hence, using Theorem~\ref{thm:le_cam}, 
	\[
	R(\cP, \ell, \eps, \delta) \ge \frac{\ab \tau}{2} \cdot \Paren{0.9 e^{-10 \eps D} - 10 D \delta}.
	\]
\end{proof}

\section{Proofs of existence of codes (Lemma~\ref{lem:GV} and Lemma~\ref{lem:constantGV2})}
\label{sec:codes}

\begin{proof}[Proof of Lemma~\ref{lem:GV}]
{This proof is a standard argument for Gilbert-Varshamov bound applied to constant weight codes. We use the following version (Theorem 7 in~\cite{GrahamS80}).}
\begin{lemma}
\label{lem:constantGV}
{There exists a length-$k$ constant weight binary code $\cC$ with weight $l$ and minimum Hamming distance $2\delta$, with $$\absv{\cC} \ge \frac{\binom{\ab}{l}}{\sum_{i=0}^{\delta} \binom{l}{i} \binom{\ab-l}{i}}.$$}
\end{lemma}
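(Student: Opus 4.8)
The plan is to prove Lemma~\ref{lem:constantGV} by the classical greedy (sphere-covering) argument underlying the Gilbert--Varshamov bound, specialized to the constant-weight sphere $W := \{w\in\{0,1\}^k : wt(w)=l\}$, for which $|W| = \binom{k}{l}$. First I would record two elementary facts. (i) Any two weight-$l$ vectors $u,v$ satisfy $d_H(u,v) = 2\bigl(l - |\mathrm{supp}(u)\cap\mathrm{supp}(v)|\bigr)$, so all pairwise Hamming distances within $W$ are even. (ii) For a fixed $c\in W$, the number of $w\in W$ with $d_H(c,w)=2i$ is exactly $\binom{l}{i}\binom{k-l}{i}$: one chooses which $i$ of the $l$ ones of $c$ to switch off and which $i$ of the $k-l$ zeros of $c$ to switch on. Hence the number of weight-$l$ vectors within Hamming distance $2\delta$ of $c$ equals $V := \sum_{i=0}^{\delta}\binom{l}{i}\binom{k-l}{i}$, the "constant-weight ball" volume.

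Next I would run the greedy construction: start from $\cC=\emptyset$ and repeatedly adjoin to $\cC$ any $w\in W$ whose Hamming distance to every current codeword is at least $2\delta$, halting when no such $w$ remains (this terminates since $W$ is finite). By construction $\cC$ consists of weight-$l$ vectors and has minimum Hamming distance at least $2\delta$, so it is a code of the required type. At termination, every $w\in W$ lies within Hamming distance $2\delta-1$ of some codeword, for otherwise $w$ would still be adjoinable; by the parity fact (i) this means $d_H(w,c)\le 2(\delta-1)$ for some $c\in\cC$. Therefore the constant-weight balls of radius $2\delta$ centered at the codewords cover $W$ (in fact radius $2(\delta-1)$ already suffices), and counting gives $|\cC|\cdot V \ge |W| = \binom{k}{l}$, i.e. $|\cC|\ge \binom{k}{l}\big/\sum_{i=0}^{\delta}\binom{l}{i}\binom{k-l}{i}$, which is the claimed bound.

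There is essentially no hard step here: the argument is the standard greedy/covering bound and it is a special case of Theorem~7 of~\cite{GrahamS80}. The only points requiring care are the parity observation --- which guarantees that the termination condition ``distance $\ge 2\delta$'' lines up exactly with the covering radius used in the count --- and the exact enumeration of the constant-weight ball volume $V$, which I would justify by the explicit bijection in (ii). Since the covering balls can be taken of radius $2(\delta-1)$ rather than $2\delta$, the inequality as stated (with the sum running up to $\delta$) follows with room to spare.
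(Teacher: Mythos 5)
Your proof is correct. Note that the paper itself does not prove this lemma at all: it is stated as a quoted result, ``Theorem 7 in [GrahamS80],'' inside the proof of Lemma~\ref{lem:GV}, so there is no in-paper argument to compare against. What you supply is the standard greedy (sphere-covering) Gilbert--Varshamov argument restricted to the slice of weight-$l$ words, and every step checks out: the parity identity $d_{\rm Ham}(u,v)=2\bigl(l-|\mathrm{supp}(u)\cap\mathrm{supp}(v)|\bigr)$, the exact count $\binom{l}{i}\binom{k-l}{i}$ of weight-$l$ words at distance exactly $2i$ from a fixed weight-$l$ word, and the termination/covering count. Indeed, as you observe, the greedy argument actually yields the slightly stronger denominator $\sum_{i=0}^{\delta-1}\binom{l}{i}\binom{k-l}{i}$, so the stated inequality (with the sum running to $\delta$) holds with room to spare; the weaker form quoted in the paper is all that is needed downstream in the proof of Lemma~\ref{lem:GV}. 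The only cosmetic caveat is that your construction guarantees minimum distance \emph{at least} $2\delta$, which is the intended reading of the lemma.
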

Applying this Lemma with $2\delta=\frac{l}{4}$, we have
\begin{align}
\absv{\cC} &\ge \frac{\binom{\ab}{l}} {\sum_{j=0}^{l/8} \binom{l}{j} \cdot \binom{\ab-l}{j}}\nonumber  \ge \frac{\binom{\ab}{l}}{\frac{l}{8} \cdot \binom{l}{\frac{l}{8}} \cdot \binom{\ab}{\frac{l}{8}}}
  = \frac1 {\frac{l}{8} \cdot \binom{l}{\frac{l}{8}} } \cdot \prod_{i = 0}^{\frac{7l}{8}- 1} \frac{k - \frac{l}{8}-i}{l - i} \\
& \ge \frac{2\sqrt{7} \pi}{e} \cdot (0.59)^{\frac{7l}{8}} \cdot \Paren{ \frac{\ab-\frac{l}{8}}{l}}^{\frac{7l}{8}} \label{eqn:gv-step}\\
 &\ge \Paren{\frac{\ab}{2^{7/8}l}}^{\frac{7l}{8}}, \nonumber 
\end{align}
{In~\eqref{eqn:gv-step}, we note that $\frac{k - \frac{l}{8}-i}{l - i}$ is monotonically increasing as $i$ increases.} And the first part is obtained by the Stirling's approximation $\sqrt{2\pi}\cdot l^{l+\frac12} \cdot e^{-l} \le l! \le e \cdot l^{l+\frac12} \cdot e^{-l}$ and the fact that $1.1^l \ge \sqrt{l}$ when $l \ge 20$. The last inequality comes from $l \le k/2$ and $15/16 \times 0.59 > 1/2^{7/8}$.
\end{proof}

\begin{proof}[Proof of Lemma~\ref{lem:constantGV2}]
By the Gilbert-Varshamov bound (Lemma~\ref{lem:constantGV}),
\[
	\absv{\cH} \ge \frac{h^\dims}{\sum_{j=0}^{\frac{d}{2}-1} \binom{\dims}{j}(h-1)^j} \ge \frac{h^\dims}{\frac{\dims}{2} \cdot \binom{\dims}{\frac{\dims}{2}} \cdot h^{\frac{\dims}{2}}} \ge
	\frac{h^{\frac{\dims}{2}}} { \dims \cdot 2^{\dims} }  \ge \left(\frac{h}{16}\right)^{\frac{\dims}{2}}.
\]
\end{proof}

\section*{Acknowledgements}
The authors thank Gautam Kamath and Ananda Theertha Suresh for their thoughtful comments and insights that helped improve the paper. 

\bibliographystyle{alpha} 
\bibliography{abr,masterref}

\end{document}